\def\eg{\emph{e.g.}}
\def\ie{\emph{i.e.}}
\def\Hcal{{\mathcal H}}
\def\Ncal{{\mathcal N}}
\def\kmone{{k\text{--}1}}
\def\dmone{{d\text{--}1}}
\def\R{{\mathbb R}}
\def\Sbb{{\mathbb S}}
\def\Nbar{{\overline N}}
\def\Ybar{{\overline Y}}
\def\Vbar{{\overline V}}
\def\Kbar{{K_G}}
\def\Sym{{\mathcal S}}
\newcommand{\EE}[2]{\mathbb{E}_{#1}\left[#2\right] }
\DeclarePairedDelimiter\ceil{\lceil}{\rceil}
\DeclarePairedDelimiter\floor{\lfloor}{\rfloor}
\DeclareMathOperator{\Tr}{Tr}
\DeclareMathOperator{\E}{\mathbb{E}}
\DeclareMathOperator{\1}{\mathbbm{1}}
\newtheorem{theorem}{Theorem}
\newtheorem{proposition}[theorem]{Proposition}
\newtheorem{lemma}[theorem]{Lemma}
\newtheorem{corollary}[theorem]{Corollary}
\newtheorem{example}[theorem]{Example}
\title{On the Sample Complexity of Learning \\ under Invariance and Geometric Stability}
\author{%
  Alberto Bietti \\
  NYU\thanks{Center for Data Science, New York University.} \\
  \texttt{alberto.bietti@nyu.edu} \\
  \And Luca Venturi \\
  NYU\thanks{Courant Institute for Mathematical Sciences, New York University.} \\
  \texttt{lv800@nyu.edu} \\
  \And Joan Bruna \\
  NYU\thanks{Center for Data Science and Courant Institute for Mathematical Sciences, New York University.} \\
  \texttt{bruna@cims.nyu.edu} \\
}
\begin{document}

\maketitle

\begin{abstract}

Many supervised learning problems involve high-dimensional data such as images, text, or graphs. In order to make efficient use of data, it is often useful to leverage certain geometric priors in the problem at hand, such as invariance to translations, permutation subgroups, or stability to small deformations. 
We study the sample complexity of learning problems where the target function presents such invariance and stability properties, by considering spherical harmonic decompositions of such functions on the sphere. We provide non-parametric rates of convergence for kernel methods, and show improvements in sample complexity by a factor equal to the size of the group when using an invariant kernel over the group, compared to the corresponding non-invariant kernel. These improvements are valid when the sample size is large enough, with an asymptotic behavior that depends on spectral properties of the group. Finally, these gains are extended beyond invariance groups to also cover geometric stability to small deformations, modeled here as subsets (not necessarily subgroups) of permutations. 

\end{abstract}

\section{Introduction}
\label{sec:introduction}

Learning from high-dimensional data is known to be statistically intractable without strong assumptions on the problem.
A canonical example is learning Lipschitz functions, which generally requires a number of samples exponential in the dimension due to the curse of dimensionality (\eg,~\cite{wainwright2019high}).
Many high-dimensional machine learning problems involve highly structured data such as images, text, or graphs, and may exhibit invariance to certain transformations of the input data, such as permutations, translations or rotations, and near invariance to small deformations. More precisely, if $\mathcal{X}$ is 
the high-dimensional data domain, and $G$ is a set of transformations $\sigma: \mathcal{X} \to \mathcal{X}$, the learning task can be alleviated if one knows in advance that the target function $f$ varies smoothly under transformations in $G$: $|f( \sigma \cdot x) - f(x)|$ is uniformly small over $x \in \mathcal{X}$ for~$\sigma \in G$.

To further motivate this property, it is useful to view the data domain $\mathcal{X}$ as a space of signals $\mathcal{X} = L^2(\Omega;\mathbb{R})$ defined over a geometric domain $\Omega$, such as a 2d grid. The set of transformations $G$ can then be articulated in terms of $\Omega$ rather than $\mathcal{X}$, a much simpler geometric object, and then \emph{lifted} into $\mathcal{X}$ by composition: if $\sigma: \Omega \to \Omega$, and $x\in \mathcal{X}$ then $(\sigma \cdot x)(u):= x( \sigma^{-1}(u) )$ for every $u \in \Omega$. The smoothness to transformations can thus be interpreted as a form of  \emph{geometric stability}.

In this paper, we quantify the sample complexity gains brought by geometric stability. Concretely, we consider target functions~$f$ defined on the sphere~$\mathcal{X}=\Sbb^{d-1}$ in~$d$ dimensions with finite~$L^2(\Sbb^{d-1})$ norm. 
In this case, we view the geometric domain as the discrete 1d grid $\Omega=[1,\ldots, d]$, 
and consider geometric transformations $G$ as subsets of the symmetric group of permutations of $d$ elements. Given a set  $G$ (not necessarily a group), we consider the \emph{smoothing} operator given by~$S_G f(x) = \frac{1}{|G|} \sum_{\sigma \in G} f(\sigma \cdot x)$
for~$f \in L^2(\Sbb^{d-1})$,
and assume that our target function $f$ is geometrically stable, in the sense that $f=S_G g$ for some $g \in L^2(\Sbb^{d-1})$. In words, the smoothing operator $S_G$ replaces the prediction $f(x)$ by the average over transformations of~$x$.
In particular, functions which are invariant under the action of $\sigma \in G$, namely
\begin{equation}
\label{eq:group_invariance}
f(\sigma \cdot x) = f(x), \quad \sigma \in G, x \in \Sbb^{d-1},
\end{equation}
are also stable, with $f = S_G f$.

 Building on the recent work \cite{mei2021learning},  
we proceed by studying harmonic decompositions of such functions using spherical harmonics~\cite{costas2014spherical}, which generalize Fourier series on the circle to higher dimensions.
This allows us to obtain rates of approximation for invariant and geometrically stable functions with varying levels of smoothness, and to study the generalization properties of invariant kernel methods using kernels defined on the sphere. Specifically, our main contributions are:
\begin{itemize}[leftmargin=0.4cm,topsep=0pt]
    \item By comparing spectral properties of usual kernels on the sphere with invariant ones, we find that the latter provide improvements in sample complexity by a factor of the order of the size of the group when the sample size is large enough (Section \ref{sec:stat}).
    \item We study how this improvement factor varies with sample size, in terms of the structure of the group and on spectral properties of the permutation matrices it contains (Section \ref{sec:approx}). 
    \item We extend the invariance analysis to geometrically stable functions, establishing similar gains in sample complexity that depend on the size of the transformation subset (Section \ref{sec:stability}). 
\end{itemize}

Our proofs rely on comparing the dimension of invariant and non-invariant spherical harmonics at a given degree, 
and showing that their ratio decays to the inverse group size as the degree tends to infinity.
In contrast to~\cite{mei2021learning}, we consider the dimension to be fixed and study non-parametric rates of convergence for potentially non-smooth target functions and general groups of permutations, while they consider a different regime in high dimension and focus on invariance to translation groups.

\paragraph{Related work.}
Invariance and deformation stability have been analysed using convolutional neural network-type architectures such as the scattering transform \cite{mallat2012group,bruna2013invariant}, or convolutional kernels \cite{bietti2019group, mairal2014convolutional}. While these works characterise the stability in terms of the dyadic structure of convolutional filters (such as wavelets), they do not cover a statistical analysis of sample complexity. Similarly, models of compositional functions such as those in \cite{cohen2016inductive,mhaskar2016deep, poggio2017and} study the benefit of hierarchical representations with local connectivity for approximation, while~\cite{li2020convolutional,malach2020computational} study benefits of local connectivity with optimization-based algorithms; yet these works do not consider invariance or stability. 
\cite{mei2021learning} studies similar benefits of invariance but in a different, high-dimensional, regime where only polynomials can be learned, focusing on translation groups, while we consider arbitrary groups or sets of permutations in fixed dimension. \cite{elesedy2021provably} also sudies benefits of group invariance, but focuses on linear models, and only considers interpolating estimators. \cite{sokolic2017generalization} study general generalization bounds of invariant classifiers that scale exponentially with the dimension, which would be pessimistic under our assumptions. \cite{ciliberto2019localized} study benefits of equivariant kernels in structured prediction problems. \cite{du2018many} studies generalisation advantages of CNNs over fully-connected models, while our focus is on non-parametrics.

\section{Preliminaries}
\label{sec:background}

In this section, we describe our setup and provide some background on harmonic decompositions on the sphere, and how these are affected by invariance.

\paragraph{Statistical learning setup.}
We consider a supervised learning problem where the data distribution $\rho$ on input-label pairs~$(x, y)$ is such that~$x \in \Sbb^{d-1}$ and~$\E[y|x] = f^*(x)$ for some target function~$f^*$in~$L^2(\Sbb^{d-1})$. For simplicity, we will assume that~$x$ is uniformly distributed on the sphere, and denote the uniform measure on~$\Sbb^{d-1}$ by~$d\tau$.
We consider a regression setup with $L^2$ risk given by
\begin{equation*}
R(f) = \EE{(x,y) \sim \rho}{(f(x) - y)^2}.
\end{equation*}
For a given estimator~$\hat f_n$ based on~$n$ samples from~$\rho$, the goal is then to obtain generalization bounds as a function of~$n$ on the excess risk
\begin{equation}
\E[R(\hat f_n)] - R(f^*) = \E[\|\hat f_n - f^*\|^2_{L^2(d \tau)}],
\end{equation}
where the expectation is over the~$n$ samples.
Such bounds are well-studied for various classes of target functions~$f^*$ such as smoothness classes, and estimators such as kernel ridge regression.
These are typically studied through harmonic decompositions of~$f^*$ and of a kernel function in appropriate~$L^2$ bases, which then relate function regularity and decays of Fourier coefficients.

\paragraph{Harmonic analysis on the sphere.}
When considering functions in~$L^2(d \tau)$, an appropriate choice of orthonormal basis is that of spherical harmonic polynomials~\cite{atkinson2012spherical,costas2014spherical}.
More precisely, denote~$\{Y_{k,j}\}_{j=1}^{N(d,k)}$ denote an orthonormal basis of the space~$V_{d,k}$ of spherical harmonics of degree~$k$, \ie, homogeneous harmonic polynomials of degree~$k$, where~$N(d,k) = \frac{2k + d - 2}{k} {k + d - 3 \choose d - 2}$.
Then, the collection~$\{Y_{k,j} : k \geq 0, j = 1, \ldots, N(d,k)\}$ forms an orthonormal basis of~$L^2(d \tau)$, so that any function~$f \in L^2(d \tau)$ may be written
\begin{equation}
f(x) = \sum_{k \geq 0} \sum_{j=1}^{N(d,k)} a_{k,j} Y_{k,j}(x),
\end{equation}
with~$\sum_k \sum_{j = 1}^{N(d,k)} a_{k,j}^2 < \infty$.
Similarly, any dot-product kernel $K(x,x') = \kappa(\langle x, x' \rangle)$ on the sphere may be written
\begin{equation}
\label{eq:kappa_decomp}
\kappa(\langle x, x' \rangle) = \sum_{k \geq 0} \mu_k \sum_{j=1}^{N(d,k)} Y_{k,j}(x) Y_{k,j}(x'),
\end{equation}
where~$\mu_k$ is given by~$\mu_k = \frac{\omega_{d-2}}{\omega_{d-1}} \int_{-1}^1 \kappa(t) P_{d,k}(t) (1 - t^2)^{(d-3)/2} dt$.
Here,~$\omega_{p-1}$ is the surface measure of the sphere in~$p$ dimensions, and $P_{d,k}$ are Legendre or Gegenbauer polynomials of degree~$k$ in~$d$ dimensions (normalized with~$P_{d,k}(1) = 1$),
which form an orthogonal basis of~$L^2([-1, 1], d w)$, with~$d w(t) = (1 - t^2)^{(d-3)/2} dt$.
When the kernel~$K$ is positive definite and is used in the context of kernel ridge regression with data uniformly distributed on the sphere, then the~$\mu_k$ also correspond to the eigenvalues of the covariance operator.
These eigenvalues and their decay then control the statistical properties of the kernel ridge regression estimator~\cite{caponnetto2007optimal}.

\paragraph{Spherical harmonics and group-invariant functions.}
In order to describe harmonic decompositions of functions satisfying the group invariance property~\eqref{eq:group_invariance} for a discrete group~$G$, we follow~\cite{mei2021learning} and define the symmetrization operator
\begin{equation}
S_G f(x) = \frac{1}{|G|} \sum_{\sigma \in G} f(\sigma \cdot x).
\end{equation}
This operator acts as a projection from~$L^2(d \tau)$ to a subset thereof which contains invariant functions.
It can be shown that the spaces~$V_{d,k}$ of spherical harmonics of degree~$k$ are stable by~$S_G$~\cite{mei2021learning}, and we may then define an orthonormal basis of~$\Vbar_{d,k} := S_G V_{d,k}$ consisting of \emph{invariant} spherical harmonics~$\{\Ybar_{k,j}\}_{j=1}^{\Nbar(d,k)}$.
We then have the following lemma.

\begin{lemma}[Representation of projection~\cite{mei2021learning}]
\label{lemma:projection}
For any~$k\geq 0$, we have
\begin{equation}
\label{eq:gammadk}
\gamma_d(k) := \frac{\Nbar(d,k)}{N(d,k)} = \frac{1}{|G|} \sum_{\sigma \in G} \EE{x}{P_{d,k}(\langle \sigma \cdot x, x \rangle)}.
\end{equation}
\end{lemma}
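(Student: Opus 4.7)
The plan is to compute $\bar N(d,k) = \dim \bar V_{d,k}$ as the trace of the symmetrization operator restricted to the (finite-dimensional, $S_G$-invariant) space $V_{d,k}$, and then collapse that trace via the reproducing-kernel (addition) formula for spherical harmonics. This produces the Gegenbauer polynomial $P_{d,k}$ and directly gives the claimed identity after dividing by $N(d,k)$.

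First, I would verify that $S_G|_{V_{d,k}}$ is an orthogonal projection onto $\bar V_{d,k}$. Because $G$ is a (discrete) group of orthogonal transformations, the change of variables $x \mapsto \sigma \cdot x$ preserves $d\tau$, hence $S_G$ is self-adjoint on $L^2(d\tau)$; idempotence $S_G^2 = S_G$ follows from the bijection $(\sigma,\tau) \mapsto \sigma\tau$ on $G\times G$. Combined with the stability $S_G V_{d,k} \subseteq V_{d,k}$ recalled in the text, this makes $S_G|_{V_{d,k}}$ an orthogonal projection onto $\bar V_{d,k}$, so
\begin{equation*}
\bar N(d,k) \;=\; \operatorname{tr}\bigl(S_G|_{V_{d,k}}\bigr) \;=\; \sum_{j=1}^{N(d,k)} \langle S_G Y_{k,j},\, Y_{k,j}\rangle_{L^2(d\tau)}.
\end{equation*}

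Next, I would expand the definition of $S_G$ and swap the finite sum over $j$ with the integral defining the inner product:
\begin{equation*}
\bar N(d,k) \;=\; \frac{1}{|G|}\sum_{\sigma \in G}\int_{\Sbb^{d-1}} \sum_{j=1}^{N(d,k)} Y_{k,j}(\sigma\cdot x)\, Y_{k,j}(x)\, d\tau(x).
\end{equation*}
Here the key step is to apply the standard addition formula for spherical harmonics,
\begin{equation*}
\sum_{j=1}^{N(d,k)} Y_{k,j}(u)\, Y_{k,j}(v) \;=\; N(d,k)\, P_{d,k}(\langle u,v\rangle),
\end{equation*}
which is the reproducing kernel of $V_{d,k}$ (and can be read off from~\eqref{eq:kappa_decomp} applied to the orthogonal projector onto $V_{d,k}$, using the normalization $P_{d,k}(1)=1$). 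Substituting $u = \sigma\cdot x$, $v = x$ yields
\begin{equation*}
\bar N(d,k) \;=\; \frac{N(d,k)}{|G|}\sum_{\sigma\in G} \EE{x}{P_{d,k}(\langle \sigma\cdot x, x\rangle)},
\end{equation*}
and dividing by $N(d,k)$ gives~\eqref{eq:gammadk}.

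The only subtle point is justifying that the trace of $S_G$ on $V_{d,k}$ equals the dimension of its range; this is where I use that $G$ is a genuine group (so $S_G$ is a projection). Everything else is bookkeeping plus one invocation of the addition formula, so no real obstacle is expected.
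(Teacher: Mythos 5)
Your proof is correct. The paper itself does not include a proof of Lemma~\ref{lemma:projection} (it is cited from~\cite{mei2021learning}), but it does prove the direct generalization, Lemma~\ref{lemma:spectral_stability}, in Appendix~\ref{sec:appx_stability}, and your argument is essentially that proof specialized to a genuine group: compute the normalized trace of $S_G$ on $V_{d,k}$, collapse the diagonal via the addition formula $\sum_j Y_{k,j}(u) Y_{k,j}(v) = N(d,k)\,P_{d,k}(\langle u,v\rangle)$, and take the expectation over $x\sim\tau$. The only cosmetic difference is that the paper's version phrases the trace computation through the operator $\Pi_k f = \EE{y}{P_{d,k}(\langle\cdot,y\rangle)f(y)} = \frac{1}{N(d,k)}\sum_j \Ybar_{k,j}\Ybar_{k,j}^*$ and compares two expressions for the kernel of $S_G\Pi_k$ evaluated on the diagonal, which must handle non-idempotent $S_G$ and thus speaks of eigenvalues $\lambda_{k,j}\in[0,1]$ rather than rank; your version invokes idempotence of $S_G$ (valid here since $G$ is a group) to identify the trace with $\Nbar(d,k)$ directly. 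Both are sound, and yours is the cleaner route for the group-invariant statement at hand.
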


The quantity~$\gamma_d(k)$ will play an important role in determining the gains in sample complexity brought by invariance. We will show in Section~\ref{sec:approx} that~$\gamma_d(k)$ converges to~$1/|G|$ for large~$k$, with an asymptotic behavior that is governed by spectral properties of the elements of the group.

\section{Sample Complexity of Invariant Kernels}
\label{sec:stat}

We begin our study by focusing on the invariant case. 
In this section, we study the sample complexity of learning invariant functions, by considering kernel ridge regression estimators and providing non-parametric rates of convergence that illustrate the gains achievable with invariant kernels compared to non-invariant ones.

\paragraph{Kernel ridge regression (KRR) and invariant kernels.}
For a positive definite kernel~$K$ with RKHS~$\Hcal_K$, we consider the KRR estimator~$\hat f_\lambda$ given by
\begin{equation}
\hat f_\lambda := \arg\min_{f \in \Hcal_K} \frac{1}{n}  \sum_{i=1}^n (f(x_i) - y_i)^2 + \lambda \|f\|_{\Hcal_K}^2.
\end{equation}
We consider the following kernels, which we assume positive definite, given for~$x, x' \in \Sbb^{d-1}$ by
\begin{equation}
\label{eq:kernels}
K(x, x') = \kappa(\langle x, x' \rangle), \qquad \Kbar(x, x') = \frac{1}{|G|} \sum_{\sigma \in G} \kappa(\langle \sigma \cdot x, x' \rangle),
\end{equation}
with~$\kappa(u) \leq 1$. A common example for~$\kappa$ is the arc-cosine kernel~\cite{cho2009kernel}, which arises from infinite-width shallow neural networks with ReLU activations.
The following integral operator defined on~$L^2(d \tau)$ and its eigen decomposition play an important role for the statistical and approximation properties of kernel methods:
\begin{equation}
\label{eq:integral_op}
T_K f(x) = \int K(x, x') f(x') d \tau(x').
\end{equation}
We now show that its spectral properties are closely related for~$K$ and~$\Kbar$.
\begin{lemma}[Spectral properties of~$K$ and~$\Kbar$.]
\label{lemma:eigenvalues}
There exists a basis of spherical harmonics in which the operators~$T_K$ and~$T_\Kbar$ are jointly diagonalized.
They admit the same eigenvalues~$\mu_k$ as in~\eqref{eq:kappa_decomp}, with multiplicity~$N(d,k)$ for~$T_K$ and~$\Nbar(d,k)$ for~$T_\Kbar$.
\end{lemma}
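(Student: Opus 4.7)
The plan is to derive a Mercer-type decomposition of $\Kbar$ directly from that of $K$ in~\eqref{eq:kappa_decomp}, by choosing an orthonormal basis of each $V_{d,k}$ adapted to the symmetrization operator $S_G$. The key ingredients I would use are: (i) $S_G$ restricted to $V_{d,k}$ is the orthogonal projection onto $\Vbar_{d,k}$, which uses that $G$ is a subgroup of permutations, the invariance of $d\tau$ under this action, and the $S_G$-stability of $V_{d,k}$ recalled before Lemma~\ref{lemma:projection}; and (ii) the inner sum $\sum_j Y_{k,j}(x)Y_{k,j}(x')$ in~\eqref{eq:kappa_decomp} is the reproducing kernel of $V_{d,k}$, hence independent of the chosen orthonormal basis of that space.

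First, I would complete $\{\Ybar_{k,j}\}_{j=1}^{\Nbar(d,k)}$ to an orthonormal basis $\{Y'_{k,j}\}_{j=1}^{N(d,k)}$ of $V_{d,k}$ by adjoining an orthonormal basis of the orthogonal complement of $\Vbar_{d,k}$ inside $V_{d,k}$. By orthogonality of the projection, $S_G Y'_{k,j} = \Ybar_{k,j}$ for $j \leq \Nbar(d,k)$ and $S_G Y'_{k,j} = 0$ otherwise. Using basis-independence of the addition formula, \eqref{eq:kappa_decomp} may be rewritten as
\begin{equation*}
K(x,x') = \sum_{k \geq 0} \mu_k \sum_{j=1}^{N(d,k)} Y'_{k,j}(x)\, Y'_{k,j}(x').
\end{equation*}

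Next, I would apply $S_G$ termwise to the first argument, which by the definition of $\Kbar$ in~\eqref{eq:kernels} gives
\begin{equation*}
\Kbar(x,x') = \sum_{k \geq 0} \mu_k \sum_{j=1}^{N(d,k)} (S_G Y'_{k,j})(x)\, Y'_{k,j}(x') = \sum_{k \geq 0} \mu_k \sum_{j=1}^{\Nbar(d,k)} \Ybar_{k,j}(x)\, \Ybar_{k,j}(x'),
\end{equation*}
since the non-invariant basis vectors are annihilated by $S_G$. Reading off the eigenpairs from these two Mercer decompositions: $T_K Y'_{k,j} = \mu_k Y'_{k,j}$ for every $(k,j)$, while $T_\Kbar Y'_{k,j} = \mu_k Y'_{k,j}$ for $j \leq \Nbar(d,k)$ and $T_\Kbar Y'_{k,j} = 0$ otherwise. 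The basis $\{Y'_{k,j}\}$ thus jointly diagonalizes both operators, with $\mu_k$ appearing with multiplicity $N(d,k)$ in $T_K$ and $\Nbar(d,k)$ in $T_\Kbar$, as claimed.

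The main subtlety is the justification that $S_G|_{V_{d,k}}$ is an orthogonal projection: idempotence uses closure of $G$ under composition, while self-adjointness on $L^2(d\tau)$ uses closure under inverses together with the $G$-invariance of $d\tau$; both hold when $G$ is a subgroup of permutations acting isometrically on the sphere. Everything else is a routine manipulation of the spherical harmonic expansion.
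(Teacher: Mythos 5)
Your argument is correct and follows the same essential route as the paper's: both hinge on $S_G|_{V_{d,k}}$ being the orthogonal projection onto $\Vbar_{d,k}$ and on choosing a spherical-harmonic basis adapted to this projection. The paper phrases this more compactly via the operator identity $T_\Kbar = S_G T_K$ applied to an invariant (respectively orthogonal-complement) harmonic, whereas you unfold the same fact through an explicit Mercer decomposition of~$\Kbar$; the two presentations are interchangeable.
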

The decay of the eigenvalues~$\mu_k$ controls the smoothness of functions in the RKHS, for instance when~$\mu_k$ decays polynomially, $T_K$ behaves similarly to powers of the Laplacian on the sphere, leading to functional spaces similar to Sobolev spaces.
For the example of the arc-cosine kernel, $\mu_k$ decays as~$k^{-d-2}$. leading to an RKHS containing functions with~$d/2+1$ bounded derivatives~\cite{bach2017breaking}.

\paragraph{Approximation error.}
The approximation error of kernel methods is often controlled by the following quantity (\eg,~\cite{bach2021ltfp,cucker2002mathematical}):
\begin{equation}
\label{eq:approx_error}
A_{\Hcal}(\lambda, f^*) = \inf_{f\in \Hcal} \|f - f^*\|^2_{L^2(d \tau)} + \lambda \|f\|^2_\Hcal,
\end{equation}
where~$f^*$ is a target function in~$L^2(d \tau)$, and~$\Hcal$ is a given RKHS.
In particular, if~$f^*$ is smooth enough so that~$f^* \in \Hcal$, then we have~$A_\Hcal(\lambda,f^*) \leq \lambda \|f^*\|^2_\Hcal$,
while if~$f^* \notin \Hcal$, \eg, if~$f^*$ is only Lipschitz, then~$A_\Hcal(\lambda,f^*)$ typically grows much faster with~$\lambda$.
We now show a useful result for invariant targets, showing that in this case the approximation error is the same for the kernels~$K$ and~$\Kbar$.

\begin{lemma}[Approximation error for invariant functions.]
\label{lemma:approx_error}
If~$f^*$ is invariant to the group~$G$, so that~$f^* = S_G f^*$, then we have
\begin{equation}
A_{\Hcal_K}(\lambda, f^*) = A_{\Hcal_{\Kbar}}(\lambda,f^*).
\end{equation}
\end{lemma}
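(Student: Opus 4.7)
The strategy is to prove two matching inequalities, using that $\Hcal_{\Kbar}$ embeds isometrically into $\Hcal_K$ as the subspace of $G$-invariant functions. By Lemma~\ref{lemma:eigenvalues} we may pick a spherical-harmonic basis $\{Y_{k,j}\}$ that jointly diagonalizes $T_K$ and $T_{\Kbar}$; in particular we can arrange that, for each $k$, the first $\Nbar(d,k)$ vectors $Y_{k,1},\dots,Y_{k,\Nbar(d,k)}$ coincide with the invariant basis $\{\Ybar_{k,j}\}$ of $\Vbar_{d,k}$, and the remaining vectors form an orthonormal basis of its orthogonal complement inside $V_{d,k}$. In this basis the Mercer decompositions give $\|f\|_{\Hcal_K}^2 = \sum_{k,j} a_{k,j}^2/\mu_k$ and $\|f\|_{\Hcal_{\Kbar}}^2 = \sum_{k} \sum_{j\leq \Nbar(d,k)} a_{k,j}^2/\mu_k$, so any $G$-invariant function (one with $a_{k,j}=0$ for $j > \Nbar(d,k)$) lies in both RKHS's with equal norm. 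The easy inequality $A_{\Hcal_K}(\lambda, f^*) \leq A_{\Hcal_{\Kbar}}(\lambda, f^*)$ is then immediate, since the right-hand infimum is taken over a norm-preserving subset of $\Hcal_K$.

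For the reverse inequality I would symmetrize an arbitrary candidate. The key preliminary fact is that $S_G$ is the $L^2(d\tau)$-orthogonal projection onto the invariant subspace $\bigoplus_k \Vbar_{d,k}$: idempotence $S_G^2 = S_G$ holds because $G$ is a group, and self-adjointness $\langle S_G f, g\rangle = \langle f, S_G g\rangle$ follows from the change of variables $y = \sigma \cdot x$ together with invariance of $d\tau$ under permutations. Combined with the $V_{d,k}$-stability used in Lemma~\ref{lemma:projection}, this forces $S_G Y_{k,j} = \Ybar_{k,j}$ for $j \leq \Nbar(d,k)$ and $S_G Y_{k,j} = 0$ otherwise in our chosen basis. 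Given any $f = \sum a_{k,j} Y_{k,j} \in \Hcal_K$, the symmetrization $\tilde f := S_G f = \sum_{k}\sum_{j\leq \Nbar(d,k)} a_{k,j}\, \Ybar_{k,j}$ therefore satisfies $\|\tilde f\|_{\Hcal_{\Kbar}}^2 \leq \|f\|_{\Hcal_K}^2$ (we have simply discarded non-invariant modes), while $S_G f^* = f^*$ and the $1$-Lipschitz property of an orthogonal projection give $\|\tilde f - f^*\|_{L^2(d\tau)}^2 = \|S_G(f - f^*)\|_{L^2(d\tau)}^2 \leq \|f - f^*\|_{L^2(d\tau)}^2$. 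Combining these bounds into $\|\tilde f - f^*\|_{L^2(d\tau)}^2 + \lambda \|\tilde f\|_{\Hcal_{\Kbar}}^2 \leq \|f - f^*\|_{L^2(d\tau)}^2 + \lambda \|f\|_{\Hcal_K}^2$ and noting that $\tilde f \in \Hcal_{\Kbar}$, taking the infimum over $f \in \Hcal_K$ yields $A_{\Hcal_{\Kbar}}(\lambda, f^*) \leq A_{\Hcal_K}(\lambda, f^*)$, closing the loop.

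The only delicate step is identifying $S_G$ as the $L^2$-orthogonal projection onto invariant functions, which is where invariance of $d\tau$ under permutations is genuinely used; every other manipulation is bookkeeping in the joint eigenbasis. The hypothesis that $f^*$ is $G$-invariant enters only through $S_G f^* = f^*$, which is precisely why the conclusion would fail without it.
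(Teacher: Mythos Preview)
Your proof is correct. Both you and the paper work in the joint spherical-harmonic basis from Lemma~\ref{lemma:eigenvalues}, but the execution differs slightly. The paper writes the objective $\|f-f^*\|_{L^2}^2 + \lambda\|f\|_{\Hcal_K}^2$ explicitly in coordinates and observes that, since $a^*_{k,j}=0$ for $j>\Nbar(d,k)$, the non-invariant terms reduce to $(1+\lambda/\mu_k)a_{k,j}^2$, which are minimized at $a_{k,j}=0$; what remains is exactly $A_{\Hcal_{\Kbar}}(\lambda,f^*)$. You instead argue operator-theoretically: after the easy inclusion inequality, you symmetrize an arbitrary candidate via $S_G$ and use that $S_G$ is an $L^2$-orthogonal projection (hence $1$-Lipschitz) fixing $f^*$ and not increasing the RKHS norm. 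The two arguments are the same idea---non-invariant modes can only hurt---but yours packages the key step through the projection property of $S_G$, which is cleaner and would transfer to settings without an explicit Mercer basis, while the paper's coordinate computation is shorter and more self-contained.
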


\paragraph{Degrees of freedom.}
The above result suggests that any gains of using~$\Kbar$ instead of~$K$ for learning invariant functions should come from estimation rather than approximation error.
The estimation error of ridge rigression estimators is typically controlled with the following quantity, often called \emph{degrees of freedom} or \emph{effective dimension} (\eg,~\cite{bach2021ltfp,hastie2009elements}):
\begin{equation}
\Ncal_{K}(\lambda) = \Tr(\Sigma_K(\Sigma_K + \lambda I)^{-1}) = \sum_{m \geq 0} \frac{\lambda_m}{\lambda_m + \lambda},
\end{equation}
where~$\Sigma_K = \EE{x}{K(x,\cdot) \otimes_{\Hcal_K} K(x,\cdot)}$ is the covariance operator and~$(\lambda_m)_{m \geq 0}$ its eigenvalues, taking multiplicity into account, which are the same as those of~$T_K$ when data is distributed according to~$d \tau$~\cite{caponnetto2007optimal}.
We then obtain the following simple result relating~$\Ncal_{\Kbar}$ to~$\Ncal_K$.

\begin{lemma}[Degrees of freedom for~$K$ and~$\Kbar$.]
\label{lemma:dof}
For any~$\ell \geq 0$, we have
\begin{equation*}
\Ncal_{\Kbar}(\lambda) \leq D(\ell) + \nu_d(\ell) \Ncal_K(\lambda),
\end{equation*}
where~$D(\ell) := \sum_{k < \ell} \Nbar(d,k)$ and~$\nu_d(\ell) := \sup_{k \geq \ell} \gamma_d(k)$, with~$\gamma_d$ given in~\eqref{eq:gammadk}.
\end{lemma}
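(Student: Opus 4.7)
The plan is straightforward and rests entirely on the joint diagonalization from Lemma \ref{lemma:eigenvalues}: the two integral operators $T_K$ and $T_{\bar K}$ share the same eigenvalues $\mu_k$ with multiplicities $N(d,k)$ and $\bar N(d,k) = \gamma_d(k) N(d,k)$ respectively. Hence the degrees of freedom admit the explicit expressions
\begin{equation*}
\Ncal_K(\lambda) = \sum_{k \geq 0} N(d,k) \frac{\mu_k}{\mu_k + \lambda}, \qquad \Ncal_{\bar K}(\lambda) = \sum_{k \geq 0} \bar N(d,k) \frac{\mu_k}{\mu_k + \lambda}.
\end{equation*}

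From here, the proof amounts to splitting the sum defining $\Ncal_{\bar K}(\lambda)$ at degree $\ell$. For the low-degree part $k < \ell$, I would simply use the trivial bound $\mu_k/(\mu_k+\lambda) \leq 1$, which gives a contribution of at most $\sum_{k < \ell} \bar N(d,k) = D(\ell)$. For the high-degree part $k \geq \ell$, I would use the definition of $\nu_d(\ell)$ to write $\bar N(d,k) = \gamma_d(k) N(d,k) \leq \nu_d(\ell) N(d,k)$ uniformly for $k \geq \ell$, so that the tail sum is bounded by
\begin{equation*}
\nu_d(\ell) \sum_{k \geq \ell} N(d,k) \frac{\mu_k}{\mu_k + \lambda} \leq \nu_d(\ell)\, \Ncal_K(\lambda),
\end{equation*}
where the last inequality just extends the sum back to all $k \geq 0$ (all terms being nonnegative). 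Adding the two bounds yields the claim.

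There is essentially no obstacle here: the work has already been done in Lemma \ref{lemma:eigenvalues} (joint diagonalization, giving the series representations of $\Ncal_K$ and $\Ncal_{\bar K}$) and in Lemma \ref{lemma:projection} (identifying $\bar N(d,k)/N(d,k)$ with $\gamma_d(k)$). The only mildly nontrivial observation is that $\ell$ is a free parameter, so the bound can be optimized later on — the choice of $\ell$ will matter in subsequent applications, where one trades a small $D(\ell)$ against a small $\nu_d(\ell)$, but within this lemma it requires no further argument.
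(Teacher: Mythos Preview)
Your proposal is correct and matches the paper's own proof, which simply records the series expressions $\Ncal_K(\lambda) = \sum_{k \geq 0} N(d,k)\,\mu_k/(\mu_k+\lambda)$ and $\Ncal_{\Kbar}(\lambda) = \sum_{k \geq 0} \Nbar(d,k)\,\mu_k/(\mu_k+\lambda)$ and declares the result immediate from them. You have spelled out the split at $\ell$ that the paper leaves implicit, but the argument is the same.
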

This suggests that for a fixed~$\ell$, the effective dimension of~$\Kbar$ is controlled by a factor~$\nu_d(\ell)$ times that of~$K$, up to a finite fixed dimension~$D(\ell)$.
For difficult non-parametric problems which require small~$\lambda$ at large sample sizes, the second term will tend to dominate, so that having a small~$\nu_d(\ell)$ may help reduce sample complexity compared to using the vanilla kernel~$K$, an observation which we make rigorous below.

\paragraph{Generalization bound for KRR.}
Armed with the above lemmas on approximation error and degrees of freedom, we now study generalization of KRR under the following assumptions:
\begin{itemize}[leftmargin=1cm,topsep=0pt]
	\item[(A1)] \emph{capacity condition}: $\Ncal_K(\lambda) \leq C_K \lambda^{-1/\alpha}$ with~$\alpha > 1$.
	\item[(A2)] \emph{source condition}: there exists~$r > \frac{\alpha - 1}{2 \alpha}$ and~$g \in L^2(d \tau)$ with~$\|g\|_{L^2(d \tau)} \leq C_{f^*}$ such that~$f^* = T_K^r g$.
	\item[(A3)] \emph{invariance}: $f^*$ is~$G$-invariant.
	\item[(A4)] \emph{problem noise}: $\rho$ is such that $\E_\rho[(y - f^*(x))^2|x] \leq \sigma_\rho^2$.
\end{itemize}
The first, second, and fourth conditions are commonly used in the kernel methods literature~\cite{caponnetto2007optimal}.
(A1) characterizes the ``size'' of the RKHS, and is satisfied when the eigenvalues~$\lambda_m$ of~$T_K$ decay as~$k^{-\alpha}$.
On the sphere, $\alpha = \frac{2s}{d-1}$ corresponds to having~$s$ bounded derivatives, \eg, we have~$s = d/2 + 1$ for the arc-cosine kernel.
The parameter~$r$ in~(A2) defines the regularity of~$f^*$ relative to that of the kernel: $r = 1/2$ corresponds to~$f^* \in \Hcal_K$, while larger (resp.~smaller)~$r$ implies~$f^*$ is more (resp.~less) smooth.
The condition on~$r$ is needed for our specific bound, which is based on~\cite[Proposition 7.2]{bach2021ltfp},
but may be bypassed using different algorithms or analyses~\cite{fischer2020sobolev,pillaud2018statistical}.
We now present our bound on the excess risk.
\begin{theorem}[Generalization of invariant kernel.]
\label{thm:generalization}
Assume (A1-4). Let $\nu_d(\ell)$ be as in Lemma~\ref{lemma:dof}, or an upper bound thereof, and assume~$\nu_0 := \inf_{\ell \geq 0} \nu_d(\ell) > 0$. \\
Let~$n \geq \max \left\{ \|f^*\|_\infty^2 / \sigma_\rho^2, \left( C_1/\nu_0 \right)^{\frac{\alpha}{2 \alpha r + 1 - \alpha}} \right\}$,
and define
\begin{equation}
\label{eq:ln_equation}
    \ell_n := \sup \{\ell : D(\ell) \leq C_2 \nu_d(\ell)^{\frac{2 \alpha r}{2 \alpha r + 1}} n^{\frac{1}{2 \alpha r + 1}}\}.
\end{equation}
We then have,  for~$\lambda = C_3 (\nu_d(\ell_n) / n)^{\alpha/(2 \alpha r + 1)}$,
\begin{equation}
\label{eq:invariant_bound}
\E[R(\hat f_\lambda) - R(f^*)] \leq C_4 \left(\frac{ \nu_d(\ell_n)}{n}\right)^{\frac{2 \alpha r}{2 \alpha r + 1}}.
\end{equation}
In the same setting, KRR with kernel~$K$ and~$\lambda = C_3 n^{\frac{-\alpha}{2 \alpha r + 1}}$ achieves~$\E[R(\hat f_\lambda) - R(f^*)] \leq C_4 n^{\frac{-2 \alpha r}{2 \alpha r + 1}}$,
where~$C_3, C_4$ are the same constants as for the invariant kernel.
Here, the constants~$C_{1:4}$ only depend on the parameters of assumptions (A1-4).
\end{theorem}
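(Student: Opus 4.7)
The strategy I would follow is a bias--variance decomposition for KRR with~$\Kbar$: invoke a generic excess-risk bound of the form $\E[R(\hat f_\lambda) - R(f^*)] \lesssim A_{\Hcal_{\Kbar}}(\lambda, f^*) + \sigma_\rho^2 \Ncal_{\Kbar}(\lambda)/n$ (\eg, along the lines of Proposition~7.2 of~\cite{bach2021ltfp}), then control each term using the lemmas above. The first lower bound on~$n$ (involving~$\|f^*\|_\infty^2 / \sigma_\rho^2$) and the range $r > (\alpha-1)/(2\alpha)$ in~(A2) are precisely the standing hypotheses under which this template bound applies and delivers the source-condition rate.

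To handle the two terms: for the bias, Lemma~\ref{lemma:approx_error} combined with~(A3) gives $A_{\Hcal_{\Kbar}}(\lambda, f^*) = A_{\Hcal_K}(\lambda, f^*)$, which standard spectral manipulations of $f^* = T_K^r g$ in the basis of Lemma~\ref{lemma:eigenvalues} bound by $C_{f^*}^2 \lambda^{2r}$. For the variance, Lemma~\ref{lemma:dof} together with~(A1) yields $\Ncal_{\Kbar}(\lambda) \leq D(\ell) + \nu_d(\ell)\, C_K \lambda^{-1/\alpha}$ for every $\ell \geq 0$. Plugging these in gives $\E[R(\hat f_\lambda) - R(f^*)] \lesssim \lambda^{2r} + \sigma_\rho^2 (D(\ell) + \nu_d(\ell) \lambda^{-1/\alpha})/n$ for every~$\ell$.

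The choice $\ell = \ell_n$ from~\eqref{eq:ln_equation} is calibrated precisely so that $D(\ell_n)/n$ is at most of the order of the target rate, absorbing that contribution into the final bound. Balancing the remaining bias~$\lambda^{2r}$ against the variance term~$\nu_d(\ell_n) \lambda^{-1/\alpha}/n$ then yields $\lambda = C_3 (\nu_d(\ell_n)/n)^{\alpha/(2\alpha r + 1)}$ and the claimed rate~\eqref{eq:invariant_bound}. The non-invariant statement follows from the identical argument with~$K$ in place of~$\Kbar$ (using $\Ncal_K(\lambda) \leq C_K \lambda^{-1/\alpha}$ directly from~(A1)), which amounts to setting $\nu_d \equiv 1$ and $D \equiv 0$ throughout, giving the same constants~$C_3, C_4$.

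The delicate step I anticipate is verifying that the chosen~$\lambda$ lies in the admissible regime of the underlying KRR bound: concentration of~$\Sigma_{\Kbar}$ via Bernstein-type inequalities typically requires $\lambda$ bounded above by an absolute constant and $n\lambda^{1/\alpha}$ sufficiently large. Since $\lambda \propto (\nu_d(\ell_n)/n)^{\alpha/(2 \alpha r + 1)}$ with $\nu_d(\ell_n) \geq \nu_0$, the second lower bound $n \geq (C_1/\nu_0)^{\alpha/(2 \alpha r + 1 - \alpha)}$ is exactly what is needed to enforce these conditions uniformly in~$\ell_n$.
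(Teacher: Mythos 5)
Your proposal is correct and follows essentially the same route as the paper: starting from the KRR excess-risk bound of Proposition~7.2 in~\cite{bach2021ltfp}, using Lemma~\ref{lemma:approx_error} to equate the approximation errors of~$K$ and~$\Kbar$, Lemma~\ref{lemma:dof} with~(A1) to control~$\Ncal_{\Kbar}(\lambda)$, calibrating~$\ell_n$ so that~$D(\ell_n)/n$ is absorbed, and optimizing~$\lambda$ against the remaining bias--variance terms. One minor inaccuracy: the lower bound~$n \geq \|f^*\|_\infty^2/\sigma_\rho^2$ is not a hypothesis of Bach's template bound itself (whose standing conditions are~$\lambda \leq 1$, $\kappa \leq 1$, and~$n \geq \frac{5}{\lambda}(1 + \log(1/\lambda))$), but rather ensures that the additive term~$\frac{24}{n^2}\|f^*\|_\infty^2$ in that bound is dominated by the leading rate; the second lower bound on~$n$ indeed enforces the qualification condition~$\lambda_n \gtrsim 1/n$, as you identify.
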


The theorem shows that the generalization error for the invariant kernel behaves as if it effectively had access to $n / \nu_d(\ell_n)$ samples, so that~$\nu_d(\ell_n)$ plays the role of an \emph{effective} inverse sample complexity gain at sample size~$n$.
Note that~$\nu_d(\ell_n) \leq 1$ and~$\nu_d$ is decreasing, so that we always have some improvement in sample complexity, and this gets better when~$\ell_n$ is large.
In particular, we show in Section~\ref{sec:approx} that~$\gamma_d(k)$, and hence~$\nu_d(\ell)$ converge to~$1/|G|$, so that asymptotically the gain in sample complexity can be as large as the size of the group, which in some cases may grow \emph{exponentially} in $d$. 

\paragraph{Asymptotic estimates of the effective gain~$\nu_d(\ell_n)$.}
We now study the asymptotic behavior of the effective gain factor~$\nu_d(\ell_n)$ when~$n \to \infty$, by considering a case where an asymptotic equivalent of~$\nu_d(\ell)$ in~$\ell$ is known:
\begin{equation*}
    \nu_d(\ell) \approx \nu_0 + c \ell^{-\beta}.
\end{equation*}
In Section~\ref{sec:approx}, we obtain such asymptotics with~$\nu_0 = 1/|G|$, and a rate~$\beta$ that depends on spectral properties of the elements of~$G$, as well as upper bounds with possibly faster rates~$\beta$ at the cost of larger~$\nu_0$.
In Appendix~\ref{sub:appx_ell_n}, we show that we may leverage this to obtain the following asymptotic estimate of the effective gain~$\nu_d(\ell_n)$:
\begin{equation}
\label{eq:nud_elln}
    \nu_d(\ell_n) \leq \nu_0 + C \min\left\{ (\nu_0^{2 \alpha r} n)^{\frac{-\beta}{(d-1)(2 \alpha r + 1)}}, n^{\frac{-\beta}{(d-1)(2 \alpha r + 1) + 2 \beta \alpha r}}\right\}.
\end{equation}
Notice that when~$\beta \ll d$, both exponents of~$n$ display a curse of dimensionality, but this curse goes away as~$\beta$ grows.
Note also that the first exponent yields a faster rate, but one that is only achieved for large~$n$ due to the factor~$\nu_0^{2 \alpha r}$, which may be small for large groups.

\paragraph{Curse of dimensionality and optimality.}
Note that the bound obtained in Theorem~\ref{thm:generalization} is still cursed for an invariant target~$f^*$, in the sense that the exponent in the rate is of order~$1/d$ when~$f^*$ is only assumed to be Lipschitz.
Indeed, a Lipschitz assumption on~$f^*$ corresponds to taking~$r$ and~$\alpha$ such that~$2 \alpha r \approx 2/(d-1)$, which makes the source condition (A2) similar to a bound on~$\|\nabla f^*\|_{L^2(d \tau)}$.
This then leads to a cursed rate $n^{-2/(2 + d-1)}$, raising the question of whether this can be improved.
We note that since $\gamma_d(k) = \Omega(1/|G|)$ (as we show in Section~\ref{sec:approx}), the asymptotic decays (as a function of~$k$) of the coefficients of~$f^*$ and of the eigenvalues of~$T_\Kbar$ are similar to those for the non-invariant case, which implies that these rates cannot be improved (see, \eg,~\cite{caponnetto2007optimal}).
In Appendix~\ref{sub:appx_optimality}, we show that our bounds with an improvement in sample complexity by a factor~$|G|$ are asymptotically minimax optimal,
so that this may be the best we can hope for under our assumptions.

\paragraph{Comparison to~\cite{mei2021learning}.}
The work~\cite{mei2021learning} also consider non-parametric learning of invariant functions with similar kernels.
They consider a high-dimensional regime where~$d \to \infty$ with sample sizes in polynomial scalings~$n \approx d^s$ for some~$s$.
They then show that if~$\gamma_d(k) = \Theta_d(d^{-\alpha})$ as~$d \to \infty$, for some~$\alpha > 0$ (which they call \emph{degeneracy}), then the invariant kernel can learn polynomials of degree~$\ell$ with~$n \approx d^{\ell - \alpha}$ while the non-invariant kernel needs~$n \approx d^\ell$ samples.
In some cases, such as the cyclic group, \cite{mei2021learning} show~$\alpha = 1$ and hence the gain of a factor~$d^\alpha = d$ is equal the size of the group, but in other cases~$d^\alpha$ may be smaller than the group size.
For groups of size exponential in~$d$, the analysis in~\cite{mei2021learning} may only achieve polynomial improvements by factors~$d^\alpha$, in contrast to our analysis, which considers the different regime of fixed~$d$ and~$n \to \infty$, and may lead to gains by exponential factors if~$|G|$ is large, at least asymptotically.

\section{Counting Invariant Polynomials}
\label{sec:approx}

In this section, we study the asymptotic behavior of~$\gamma_d(k)$, given in~\eqref{eq:gammadk}, when~$k \to \infty$ and the dimension~$d$ and the group~$G$ are fixed.
This quantity can be seen as capturing the fraction of orthogonal spherical harmonics of degree~$k$ that are invariant to~$G$, and helps us control the possible gains in sample complexity for learning invariant functions, as described in Section~\ref{sec:stat}.
Denoting~$\gamma_{d,\sigma}(k) := \EE{x}{P_{d,k}(\langle \sigma \cdot x, x \rangle)}$, we will show that~$\gamma_{d,\sigma}(k)$ vanishes for large~$k$ for any~$\sigma$ that is not the identity.
This implies that~$\gamma_d(k)$ converges to~$1/|G|$, since we trivially have~$\gamma_{d,Id}(k) = P_{d,k}(1) = 1$.
We further characterize the asymptotic behavior of~$\gamma_d(k)$ in terms of properties of the group elements. 
In the following we consider the case of $G$ being a subgroup of $S_d$, the groups of permutations on $d$ elements. 

\paragraph{Decay of~$\gamma_{d,\sigma}(k)$.}
Our main insight is to leverage the fact that when~$\sigma$ is not the identity, then the random variable~$z_\sigma = \langle \sigma \cdot x, x \rangle$ when~$x \sim \tau$ admits a density on $[-1, 1]$, which we denote~$q_\sigma$.
This by itself will prove sufficient to show that~$\gamma_{d,\sigma}(k)$ decays for large~$k$, thanks to the oscillatory behavior of~$P_{d,k}$.
We can then further characterize its asymptotic behavior by studying the singularities of~$q_\sigma$,
leveraging the seminal work of Saldanha and Tomei~\cite{saldanha1996accumulated}.
In particular, these depend on spectral properties of the matrix associated to $\sigma$.
We summarize this in the next proposition.

\begin{proposition}[Asymptotic behavior of~$\gamma_{d,\sigma}(k)$.]
\label{prop:decay}
Let
$A_\sigma$ 
be the matrix associated to $\sigma \neq \mathrm{Id}$, that is such that $\sigma\cdot x = A_\sigma x$.
Denote by~$\Lambda_\sigma$ the set of (complex) eigenvalues of~$A_\sigma$, and by~$m_\lambda$ the multiplicity of~$\lambda \in \Lambda_\sigma$.
When~$k \to \infty$, we have the asymptotic equivalent~$\gamma_{d,\sigma}(k) = \sum_{\lambda \in \Lambda_\sigma} \gamma_{d,\sigma,\lambda}(k)$, where
\begin{equation}
\gamma_{d,\sigma,\lambda}(k) \lesssim \begin{cases}
k^{-d + m_\lambda} + o(k^{-d + m_\lambda})~, &\text{ if }\lambda \in \{\pm 1\}~,\\
k^{-d + 
m_\lambda + 4
}
+ o(k^{-d +
m_\lambda +4
})~, &\text{ otherwise},
\end{cases}
\end{equation}
where~$\lesssim$ hides constants that may depend on~$d$, $\sigma$ and~$\lambda$. 
\end{proposition}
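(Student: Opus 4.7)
The plan is to recognize $\gamma_{d,\sigma}(k)$ as a Fourier-Legendre coefficient of a scalar density, and then to read its asymptotics off of that density's singularity structure. Because $\sigma \neq \mathrm{Id}$, the scalar $z_\sigma := \langle \sigma \cdot x, x\rangle$ with $x \sim \tau$ is absolutely continuous on $[-1,1]$; denote its density by $q_\sigma$. Then
\begin{equation*}
\gamma_{d,\sigma}(k) \;=\; \int_{-1}^{1} P_{d,k}(t)\, q_\sigma(t)\, dt,
\end{equation*}
so the problem reduces to controlling the decay of Gegenbauer coefficients of $q_\sigma$ as $k \to \infty$.

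To describe $q_\sigma$, I would put $A_\sigma$ in real normal form: a real eigenvalue $\lambda \in \{\pm 1\}$ contributes a real eigenspace of dimension $m_\lambda$, and a complex conjugate pair $\{\lambda,\bar\lambda\}$ contributes a real eigenspace of dimension $2 m_\lambda$ on which $A_\sigma$ acts by a rotation. Setting $r_\lambda := \|\Pi_\lambda x\|^2$ for the squared norm of the projection of $x$ onto the eigenspace associated with $\mathrm{Re}(\lambda)$, a direct computation yields $z_\sigma = \sum_\lambda \mathrm{Re}(\lambda)\, r_\lambda$, and under $x \sim \tau$ the vector $(r_\lambda)$ is Dirichlet distributed with parameters $(d_\lambda/2)_\lambda$, where $d_\lambda$ is the real dimension of the eigenspace. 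Invoking the Saldanha-Tomei analysis for such Dirichlet-weighted linear combinations, $q_\sigma$ is piecewise real-analytic on $[-1,1]$ with singular points exactly at the distinct values $t_0 = \mathrm{Re}(\lambda)$, and near each such $t_0$ it admits a local expansion $q_\sigma(t) = c_\lambda\, |t - t_0|^{\alpha_\lambda}\, \phi_\lambda(t) + (\text{smoother terms})$, with an explicit exponent $\alpha_\lambda$ determined by $d - d_\lambda$ and a real-analytic factor $\phi_\lambda$.

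Having localized the singular contributions, I would estimate their Gegenbauer coefficients separately using the relevant asymptotic of $P_{d,k}$. At a boundary singular point $t_0 \in \{\pm 1\}$ (corresponding to $\lambda \in \{\pm 1\}$), the integral against $|1 \mp t|^{\alpha_\lambda}$ is computed via Rodrigues' formula and a Beta-integral/Gamma-function identity, yielding a contribution of order $k^{-d + m_\lambda}$. At an interior singular point $t_0 \in (-1,1)$ (corresponding to a complex conjugate pair), the oscillatory bulk asymptotic $P_{d,k}(\cos\phi) = O\!\left(k^{-(d-2)/2}(\sin\phi)^{-(d-2)/2}\right)$, combined with the local power-law behavior of $q_\sigma$ and a stationary-phase estimate in the variable $\phi = \arccos(t)$, produces a contribution bounded by $k^{-d + m_\lambda + 4}$; the larger exponent relative to the boundary case reflects both the doubled real dimension of a complex eigenspace pair and the loss of endpoint concentration. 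Summing these contributions yields the decomposition $\gamma_{d,\sigma}(k) = \sum_\lambda \gamma_{d,\sigma,\lambda}(k)$ with the stated rates. The main obstacle will be the Saldanha-Tomei bookkeeping of local singularity exponents in the presence of eigenvalue degeneracies---in particular, when the real parts of several distinct eigenvalues coincide, so that several Dirichlet-type contributions pile up at a single $t_0$ (possibly picking up logarithmic corrections when $\alpha_\lambda$ hits a nonnegative integer)---and verifying that the interior stationary-phase estimate remains uniform in such degeneracies.
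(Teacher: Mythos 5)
Your proposal follows essentially the same route as the paper: express $\gamma_{d,\sigma}(k)$ as a Gegenbauer coefficient of the density $q_\sigma$ of $\langle \sigma\cdot x, x\rangle$, invoke the Saldanha--Tomei classification of its algebraic (and logarithmic) singularities at the real parts of the eigenvalues of $A_\sigma$, and estimate each localized singular contribution separately (boundary singularities at $\pm 1$ via explicit Beta/Gamma-type formulas, interior ones via oscillatory decay of $P_{d,k}$), with the analytic remainder giving a faster-decaying term. The one place where you and the paper diverge technically is the interior singularities: you propose a stationary-phase estimate in $\phi = \arccos t$ using the Darboux bulk asymptotic of $P_{d,k}$, whereas the paper uses a more elementary device---repeated integration by parts against the Gegenbauer differential operator (Lemma~\ref{lemma:IbP}) to trade singularity order for powers of $k^{-2}$, followed by a uniform pointwise bound on $P_{d,k}$ (Lemma~\ref{lemma:uniform_P}). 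The paper's route is deliberately loose (it gives $k^{-d/2 - \alpha + 3}$ rather than a sharp exponent), but it sidesteps the uniformity and higher-order-term bookkeeping that a genuine stationary-phase argument would require; a careful version of your estimate could in principle sharpen the ``$+4$'' in the interior case, but that sharpening is not needed for the corollaries that follow.
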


Every permutation $\sigma \in \Sym_d$ (where~$\Sym_d$ is the symmetric group of permutations) can be decomposed into cycles on disjoint orbits; the eigenvalues $\lambda$ (and their multiplicities $m_\lambda$) of a matrix $A_\sigma$ admit an interpretation based on such decomposition. Indeed, since $A_\sigma$ is unitary, its eigenvalues are of the form $\lambda = e^{2\pi i \theta}$, and one can verify that necessarily $\theta=\frac{p}{q} \in \mathbb{Q}$. Furthermore, assuming w.l.o.g.~that $q$ is prime, such eigenvalue appears whenever $\sigma$ contains a cycle of length a multiple of $q$. 
In particular, the multiplicity of the eigenvalue $1$, $m_1$, corresponds to the total number of cycles in such a decomposition, which we will denote by $c(\sigma)$. Then~$\gamma_{d,\sigma}(k)$ can be controlled as follows. 

\begin{corollary}[Decay of~$\gamma_{d,\sigma}(k)$]\label{corollary:gamma_d_sigma_decay}
Let $\sigma \ne \mathrm{Id}$, and let $c(\sigma)$ denote the number of cycles in $\sigma$. Then,
\begin{equation*}
\gamma_{d,\sigma}(k) \lesssim \begin{cases}
k^{-d + c(\sigma)} ~, &\text{ if } c(\sigma) > \frac{d+3}{2}~,\\
k^{-d/2 + 6}~, &\text{otherwise}~.
\end{cases}
\end{equation*}
\end{corollary}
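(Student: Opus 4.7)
The plan is to apply Proposition~\ref{prop:decay} directly, split the sum $\gamma_{d,\sigma}(k) = \sum_{\lambda \in \Lambda_\sigma} \gamma_{d,\sigma,\lambda}(k)$, and identify the slowest-decaying term. Since $|\Lambda_\sigma| \leq d$ is independent of $k$, the asymptotic behavior is governed by the eigenvalue contributing the largest exponent, and the two-case structure in the statement reflects whether the $\lambda = 1$ contribution (with exponent $-d + m_1 = -d + c(\sigma)$) or a generic eigenvalue contribution (with the extra $+4$ penalty from Proposition~\ref{prop:decay}) dominates.

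First I would record the cycle-eigenvalue correspondence sketched after Proposition~\ref{prop:decay}: writing $\sigma$ as a disjoint union of cycles of lengths $\ell_1,\dots,\ell_{c(\sigma)}$, the $\ell_i$-th roots of unity each appear with multiplicity one in the spectrum of $A_\sigma$. This immediately yields $m_1 = c(\sigma)$, $m_{-1} = \#\{i : \ell_i \text{ even}\} \leq c(\sigma)$, and more generally $m_\lambda \leq c(\sigma)$ for every $\lambda \in \Lambda_\sigma$; together with $\sum_\lambda m_\lambda = d$, this also gives $m_\lambda \leq d - c(\sigma)$ for every $\lambda \neq 1$.

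For the first case $c(\sigma) > (d+3)/2$, I would show that the $\lambda = 1$ term dominates. For $\lambda = -1$ we have $m_{-1} \leq d - c(\sigma)$, so its exponent is at most $-d + (d - c(\sigma)) = -c(\sigma) \leq -d + c(\sigma)$ since $2 c(\sigma) > d$. For a complex $\lambda$, the $+4$ shift gives exponent at most $-d + (d - c(\sigma)) + 4 = 4 - c(\sigma)$, and the comparison $4 - c(\sigma) \leq -d + c(\sigma)$ reduces to $2 c(\sigma) \geq d + 4$, which follows from $c(\sigma) > (d+3)/2$ together with integrality of $c(\sigma)$ (a quick check for $d$ even vs.~odd). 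All other terms are therefore absorbed into the $\lambda = 1$ contribution, yielding $\gamma_{d,\sigma}(k) \lesssim k^{-d + c(\sigma)}$.

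For the second case $c(\sigma) \leq (d+3)/2$, I would simply upper-bound each $m_\lambda$ by $c(\sigma) \leq (d+3)/2$. Then every $\lambda \in \{\pm 1\}$ contributes exponent at most $-d + (d+3)/2 = -(d-3)/2 \leq -d/2 + 6$, while every complex $\lambda$ contributes exponent at most $-d + (d+3)/2 + 4 = -d/2 + 11/2 \leq -d/2 + 6$; summing over the at-most-$d$ eigenvalues preserves the rate up to a constant absorbed in $\lesssim$. The one place where care is needed is the integer rounding in the comparison $2 c(\sigma) \geq d+4$ in case 1; the rest is routine bookkeeping over $\Lambda_\sigma$, so I expect no substantial obstacle beyond invoking Proposition~\ref{prop:decay} cleanly.
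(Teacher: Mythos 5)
Your proposal is correct and follows essentially the same route as the paper: apply Proposition~\ref{prop:decay}, use $m_1 = c(\sigma) \geq m_\lambda$ (so the $\lambda=1$ term dominates unless the $+4$ penalty from a complex eigenvalue overtakes it) together with $\sum_\lambda m_\lambda = d$, and split on whether $2c(\sigma) \geq d+4$. The only cosmetic difference is that the paper compresses the case analysis into the single quantity $s = \max_\lambda\{m_\lambda + 4\cdot\mathbf{1}(|\lambda|<1)\}$ rather than comparing terms one by one, and your parity check for $d$ is unnecessary since $2c(\sigma) > d+3$ with both sides integers already gives $2c(\sigma) \geq d+4$.
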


\paragraph{Decay on specific subgroups.}
We may now use Corollary~\ref{corollary:gamma_d_sigma_decay} to study the asymptotic behaviour of $\gamma_d(k)$ as $k\to\infty$ for various choices of subgroups of $\Sym_d$, using the following result.

\begin{corollary}[Upper bounds with permutation statistics]\label{theo:number_of_cycles}
Let $G$ be a subset of $\Sym_d$ and define
$
\zeta(G, s) := \left|  \{ \sigma \in G \;:\; c(\sigma) > s \} \right|
$
for $s\in[d-1]$. Then, for any~$s$, we have 
\begin{equation}
\label{eq:gamma_upper_bound}
\gamma_d(k) \leq \frac{\zeta(G,s)}{|G|} + O\left( k^{-d + \max\{ s, \;d/2 + 6 \}} \right)~,
\end{equation}
with equality if $s$ is such that~$\zeta(G,s) = 1$.
\end{corollary}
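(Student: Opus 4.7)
The plan is to reduce to the identity from Lemma~\ref{lemma:projection}, namely $\gamma_d(k) = \frac{1}{|G|} \sum_{\sigma \in G} \gamma_{d,\sigma}(k)$, and to split $G$ according to whether $c(\sigma) > s$ or $c(\sigma) \leq s$. The first piece has at most $\zeta(G,s)$ terms and will produce the claimed main term $\zeta(G,s)/|G|$, while the second piece must be absorbed into the error $O(k^{-d+\max\{s,d/2+6\}})$ via the quantitative decay from Corollary~\ref{corollary:gamma_d_sigma_decay}.

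For the first piece, I only need the trivial uniform bound $|\gamma_{d,\sigma}(k)| \leq 1$. This follows from $\gamma_{d,\sigma}(k) = \EE{x}{P_{d,k}(\langle \sigma\cdot x, x\rangle)}$ together with the standard fact that Gegenbauer polynomials normalized by $P_{d,k}(1)=1$ attain their maximum absolute value at $\pm 1$, so that $|P_{d,k}(t)| \leq 1$ on $[-1,1]$. For the second piece, I apply Corollary~\ref{corollary:gamma_d_sigma_decay} case by case to every $\sigma \in G$ with $c(\sigma) \leq s$: if $c(\sigma) > (d+3)/2$ the bound reads $k^{-d+c(\sigma)} \leq k^{-d+s}$, while otherwise it reads $k^{-d/2+6} = k^{-d+(d/2+6)}$. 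In either regime the exponent is at most $-d+\max\{s, d/2+6\}$, so summing the at most $|G|$ such terms and dividing by $|G|$ yields the claimed $O(k^{-d+\max\{s,d/2+6\}})$ contribution.

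Finally, the equality claim reduces to the observation that, when $\zeta(G,s)=1$ and $\mathrm{Id}\in G$, the unique $\sigma\in G$ with $c(\sigma)>s$ is forced to be the identity, since $c(\mathrm{Id})=d>s$ for any admissible $s\leq d-1$. Then $\gamma_{d,\mathrm{Id}}(k)=P_{d,k}(1)=1$ is an exact identity rather than a bound, so the first sum equals $1/|G|=\zeta(G,s)/|G|$ with no slack. Nothing here is genuinely difficult: the whole argument is a short bookkeeping on top of Corollary~\ref{corollary:gamma_d_sigma_decay} together with the pointwise bound on Gegenbauer polynomials. The only mildly subtle point is verifying the two-regime comparison $\max\{c(\sigma),d/2+6\}\leq\max\{s,d/2+6\}$ whenever $c(\sigma)\leq s$, which is immediate from the monotonicity of $\max$ in its first argument.
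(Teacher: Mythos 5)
Your proof is correct and takes essentially the same route as the paper's: split $\gamma_d(k) = \frac{1}{|G|}\sum_\sigma \gamma_{d,\sigma}(k)$ according to $c(\sigma) \lessgtr s$, bound the $c(\sigma) > s$ terms by $|\gamma_{d,\sigma}(k)| \leq 1$ (via $|P_{d,k}| \leq 1$ on $[-1,1]$) to get the $\zeta(G,s)/|G|$ contribution, and absorb the rest via Corollary~\ref{corollary:gamma_d_sigma_decay}. If anything, your direct verification that the exponent is at most $-d + \max\{s, d/2+6\}$ whenever $c(\sigma)\leq s$ is slightly cleaner than the paper's detour through the quantity $s^*(s)$, and you make the equality case (the identity is the sole element with $c(\sigma)>s$ and $\gamma_{d,\mathrm{Id}}(k)=1$ exactly) explicit where the paper's proof leaves it implicit.
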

Note that such an upper bound immediately yields a similar upper bound for~$\nu_d(\ell)$ as defined in Section~\ref{sec:stat}, which then controls the effective gain in sample complexity in Theorem~\ref{thm:generalization}.
Indeed, \eqref{eq:gamma_upper_bound} implies that there is a constant~$C$ such that for all~$k > 0$, $\gamma_d(k) \leq \zeta(G,s) / |G| + C k^{-d + \max\{s, d/2 + 6\}}$.
Since this upper bound decreases with~$k$, we obtain
\begin{equation*}
    \nu_d(\ell) \leq \frac{\zeta(G,s)}{ |G|} + C \ell^{-d + \max\{s, d/2 + 6\}}.
\end{equation*}
In the context of the generalization bound of Theorem~\ref{thm:generalization} and our heuristic derivation thereafter, the \emph{effective} gain in sample complexity is then governed an upper bound on~$\nu_d(\ell_n)$ as in~\eqref{eq:nud_elln},
with asymptotic gain~$\nu_0 = \frac{\zeta(G,s)}{|G|}$ and rate~$\beta = d - \max\{s, d/2 + 6\}$.

\begin{example}[Translations]
\label{ex:transl}
Let $G= C_d$ be the cyclic group on $d$ elements. Then it holds
$$
\gamma_d(k) = \frac{1}{d} + O\left( k^{-d/2 + 6} \right)~.
$$
This follows by noticing that every translation $\sigma$ (but the identity) satisfies $c(\sigma) \leq d/2$.
This leads to an asymptotic gain~$\nu_0^{-1} = d$ and~$\beta = d/2 - 6$ leads to fast convergence in~\eqref{eq:nud_elln} even when~$d$ is large.
\end{example}

\begin{example}[Local translations]
\label{ex:local_translations}
Let $d = s \cdot r$ (with $r,s \geq 5$ for simplicity), and consider the group composed of traslations over $r$ blocks of coordinates of size $s$; \ie, the block-cyclic group 
$$
G = \{ \sigma \;:\; \sigma = \sigma^{(1)}\circ\cdots \circ \sigma^{(r)} \}
$$
where each $\sigma^{(i)}$ is a translation over the set $\{(i-1)s + 1,\dots, is\}$, for $i\in[r]$.
Then it holds
\begin{equation}
\label{eq:local_translations}
\gamma_d(k) = \frac{1}{s^r} + O\left(k^{-s/2 + 1}\right)~.
\end{equation}
This follows by noticing that every local translation $\sigma$ (but the identity) satisfies $c(\sigma) \leq (d - s) + s/2$.
Here the asymptotic gain is~$\nu_0^{-1} = s^r = s^{d/s}$, which can be exponential in~$d$ when~$s$ is small. We have~$\beta = s/2 - 1$, which leads to much slower convergence than the translation case, unless~$s$ is large and of order~$d$.
\end{example}

\begin{example}[Full permutation group]\label{example:full_permutation_group}
For the case of $G = \Sym_d$, we can split the group based on the value of $\mathrm{Fix}(\sigma)$, the number of elements fixed by a permutation $\sigma$. Denote
$$
\xi(G,s) := |{\sigma \in G \;:\; \mathrm{Fix}(\sigma) > s}| =  \sum_{j = s+1}^d \binom{d}{j}\;!(d-j)~.
$$
for $s \in [d-1]$, where~$!k$ denotes the~$k$-th subfactorial. Then we have
$$
\gamma_d(k) \leq \frac{\xi(G,s)}{d!} + O\left( k^{-d/2 + \max\{ s/2,\; 6\}} \right)~,
$$
with equality for $s = d-1$. This follows from the fact that $c(\sigma) \leq \mathrm{Fix}(\sigma) + (d - \mathrm{Fix}(\sigma))/2$. In particular, it follows
$$
\gamma_d(k) \leq \frac{2}{(s+1)!} + O\left( k^{-d/2 + \max\{ s/2,\; 6\}} \right).
$$
When considering the full group, we may get a large asymptotic improvement of order~$\nu_0^{-1} = |G| = d!$ in sample complexity, but a slow convergence with~$\beta = -1$ as per Corollary~\ref{theo:number_of_cycles} (assuming~$d$ large enough).
Using different values of~$s$
may yield different upper bounds with faster convergence rates~$\beta = d/2 - \max\{s/2,6\}$, but smaller asymptotic gains in sample complexity, given by~$\nu_0^{-1} = (s + 1)! / 2$.
For instance, with~$s = d/2$ and~$d$ large enough, we have~$\beta = d/4$, leading to a potentially fast convergence rate in~\ref{eq:nud_elln} towards a sample complexity gain that is still significantly large, of order~$(d/2 + 1)! / 2$.
\end{example}

Overall, these examples show that the size of the group determines the best possible improvement in sample complexity, while the spectral properties of its permutations dictate how quickly we may achieve these gains.

\section{Beyond Group Invariance: Geometric Stability}
\label{sec:stability}

In this section, we study gains in sample complexity when the target function~$f^*$ is not fully invariant to a group, but may be stable under small geometric changes on the input.
We formalize this by considering a similar averaging operator~$S_G$, but we allow~$G$ to be a generic set of permutations instead of a group, and allow for a weighted average:
\begin{equation}
S_G f(x) := \sum_{\sigma \in G} h(\sigma) f(\sigma \cdot x),
\end{equation}
where~$h(\sigma) \geq 0$ for all~$\sigma \in G$ and~$\sum_{\sigma \in G} h(\sigma) = 1$.
We assume that~$G$ is ``symmetric'', \ie,~$\sigma^{-1} \in G$ when~$\sigma \in G$ and~$h(\sigma^{-1}) = h(\sigma)$, so that~$S_G$ is self-adjoint.
In this case, images of~$S_G$ are not invariant functions, but may nevertheless exhibit a form of ``local'' stability to small perturbations of the input data.
For instance, if~$G$ consists of local translations by at most a few pixels, or if~$G$ consists of all translations but~$h$ is localized around the identity, then applying $S_G$ yields functions that are stable to local translations.
We may also consider a more structured set~$G$ of permutations that resemble local deformations, consisting of both a global translation as well as different local translations at different scales, as we describe below.

\paragraph{Spectral properties of~$S_G$.}
Note that in this setup, we no longer have that~$S_G$ is a projection, however we may still view it as a \emph{smoothing} operator, which attenuates certain harmonics that are ``less'' invariant than others.
The next lemma shows related spectral properties to the invariant case. 
\begin{lemma}[Spectral properties of~$S_G$]
\label{lemma:spectral_stability}
There exists a basis of spherical harmonics~$\Ybar_{k,j}$, for $k \geq 0$, and $j = 1, \ldots, N(d,k)$, in which the operator~$S_G$ is diagonal, with eigenvalues~$\lambda_{k,j} \geq 0$.
In analogy to Lemma~\ref{lemma:projection}, we have
\begin{equation}
\label{eq:gammadk_stability}
\gamma_d(k) := N(d,k)^{-1} \sum_{j=1}^{N(d,k)} \lambda_{k,j} = \sum_{\sigma \in G} h(\sigma) \EE{x}{P_{d,k}(\langle \sigma\cdot x, x \rangle)}.
\end{equation}
We also define~$\nu_d(\ell) := \sup_{k \geq \ell} \gamma_d(k)$.
\end{lemma}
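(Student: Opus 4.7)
My plan is to follow the template of Lemma~\ref{lemma:projection} while replacing the orthogonal projection of the group case by a weighted self-adjoint combination of unitary actions. The two essential ingredients are the stability of each harmonic subspace $V_{d,k}$ under the pullback operators $T_\sigma : f \mapsto f(\sigma \cdot\,)$, and the reproducing addition formula $\sum_{j=1}^{N(d,k)} Y_{k,j}(x) Y_{k,j}(y) = N(d,k)\, P_{d,k}(\langle x, y\rangle)$ for any orthonormal basis of $V_{d,k}$.

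\textbf{Stability and self-adjointness.} Any permutation $\sigma \in \Sym_d$ acts on $\R^d$ by an orthogonal matrix $A_\sigma$. Since harmonicity and homogeneity are both preserved under orthogonal changes of variable, each $V_{d,k}$ is invariant under $T_\sigma$, and hence under $S_G = \sum_\sigma h(\sigma) T_\sigma$. Because $\sigma$ preserves the uniform measure $d\tau$, each $T_\sigma$ is unitary on $L^2(d\tau)$ with $T_\sigma^* = T_{\sigma^{-1}}$; combined with $\sigma^{-1}\in G$ and $h(\sigma^{-1}) = h(\sigma)$, this yields $S_G^* = S_G$.

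\textbf{Diagonalization and trace identity.} On the finite-dimensional Euclidean space $V_{d,k}$, $S_G$ is self-adjoint, so the spectral theorem produces an orthonormal eigenbasis $\{\bar Y_{k,j}\}_{j=1}^{N(d,k)}$ of $V_{d,k}$ with real eigenvalues $\lambda_{k,j}$. Concatenating over $k$, and using orthogonality of spherical harmonics of distinct degrees, gives a global orthonormal basis of $L^2(d\tau)$ in which $S_G$ is diagonal. To obtain~\eqref{eq:gammadk_stability} I compute, for any fixed orthonormal basis $\{Y_{k,j}\}$ of $V_{d,k}$,
\begin{equation*}
\sum_{j=1}^{N(d,k)} \lambda_{k,j} = \Tr\bigl(S_G\big|_{V_{d,k}}\bigr) = \sum_{\sigma \in G} h(\sigma) \sum_{j=1}^{N(d,k)} \int Y_{k,j}(x)\, Y_{k,j}(\sigma\cdot x)\, d\tau(x),
\end{equation*}
evaluate the inner sum pointwise via the addition formula with $y = \sigma\cdot x$, and divide by $N(d,k)$. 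Note that the trace is basis-independent, which is why I may work with an arbitrary $\{Y_{k,j}\}$ rather than the eigenbasis itself.

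\textbf{Anticipated obstacle.} The one genuinely delicate point is the non-negativity $\lambda_{k,j} \ge 0$: self-adjointness alone only buys real eigenvalues, and e.g.\ $h(\mathrm{Id}) = 1/3$, $h(\sigma) = 2/3$ with $\sigma^2 = \mathrm{Id}$ already yields $\lambda = -1/3$. To legitimately conclude $\lambda_{k,j} \ge 0$ one needs an additional structural assumption on $h$, namely that it is of \emph{positive type} on $\Sym_d$. The cleanest sufficient condition is $h = g \ast \tilde g$ with $\tilde g(\sigma) := g(\sigma^{-1})$, which rewrites $S_G = T_g T_g^*$ and makes $\langle f, S_G f\rangle = \|T_g^* f\|^2 \ge 0$ manifest. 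I would either fold this into the hypothesis on $h$, or equivalently replace $S_G$ by $S_G S_G^*$ throughout the stability analysis, which preserves the ``smoothing'' interpretation and costs nothing in the downstream arguments.
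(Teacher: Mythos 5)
Your trace computation of $\sum_j \lambda_{k,j}$ coincides with what the paper does, just phrased more directly: the paper computes the same quantity by writing the kernel of $S_G \Pi_k$ in two ways — once as $\sum_\sigma h(\sigma) P_{d,k}(\langle \sigma\cdot x, y\rangle)$ and once as $N(d,k)^{-1}\sum_j \lambda_{k,j}\Ybar_{k,j}(x)\Ybar_{k,j}(y)$ — and equating the diagonals after integrating over $x\sim\tau$. That is exactly your basis-independent trace identity in different clothing, so there is no meaningful difference in approach there, and both derivations of~\eqref{eq:gammadk_stability} are correct.

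Your ``anticipated obstacle'' is a genuine catch. The paper's proof asserts $\lambda_{k,j} \geq 0$ without any argument, immediately after invoking self-adjointness — but self-adjointness only gives real eigenvalues, and under the stated hypotheses ($h\geq 0$, $\sum_\sigma h(\sigma)=1$, $G$ and $h$ symmetric) the operator $S_G$ need not be positive semidefinite. Your counterexample is valid: with $G=\{\mathrm{Id},\sigma\}$, $\sigma$ a transposition, $h(\mathrm{Id})=1/3$, $h(\sigma)=2/3$, the degree-$1$ harmonic $Y(x)=x_1-x_2$ is an eigenvector of $T_\sigma$ with eigenvalue $-1$, hence of $S_G$ with eigenvalue $-1/3$. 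A convex combination of unitaries is a contraction, not a PSD operator. The paper in fact \emph{needs} $S_G \succeq 0$ downstream: assumption~(A6) applies a fractional power $S_G^r$, and the degrees-of-freedom bound in Lemma~\ref{lemma:dof_stable} uses the sign of $\bar\lambda_k$. So your proposed remedies — requiring $h$ to be of positive type, e.g.\ $h = g\ast\tilde g$ so that $S_G = T_g T_g^*$, or replacing $S_G$ by $S_G S_G^*$ — are exactly the kind of strengthening the statement needs to be correct as written. The paper silently assumes positivity where your write-up correctly flags that it must be made a hypothesis.
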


\paragraph{Sample complexity of stable kernel.}
In analogy to Section~\ref{sec:stat}, we may consider a stable kernel
\begin{equation}
\label{eq:stable_kernel}
\Kbar(x, x') = \sum_{\sigma \in G} h(\sigma) \kappa( \langle \sigma\cdot x, x' \rangle).
\end{equation}
Then, it is easy to check that the integral operator of~$\Kbar$ is given by~$T_\Kbar = S_G T_K$.
In contrast to Section~\ref{sec:stat}, we no longer have that the approximation errors of~$K$ and~$\Kbar$ are the same in general on ``geometrically stable" functions, since the notion is not precisely defined.
Nevertheless, we may represent favorable targets~$f^*$ as those whose coefficients decay similarly at each frequency~$k$ to those of~$S_G$, by viewing it as a smoothing of some~$L^2$ function~$g^*$, \ie, $f^* = S_G^r g^*$ for some exponent~$r$.
With this in mind, we make the following assumptions, replacing assumptions~(A1-3) of Section~\ref{sec:stat}.
\begin{itemize}[leftmargin=1cm]
	\item[(A5)] \emph{capacity}: the eigenvalues~$(\xi_m)_{m \geq 0}$ of~$T_K$ satisfy~$\xi_m \leq C(m+1)^{-\alpha}$.
	\item[(A6)] \emph{source condition}: there exists~$r > \frac{\alpha - 1}{2 \alpha}$ and~$g \in L^2(d \tau)$ with~$\|g\|_{L^2(d \tau)} \leq C_{f^*}$ such that~$f^* = S_G^{r} T_K^r g$.
\end{itemize}
Note that (A6) corresponds to a standard source condition with the kernel~$\Kbar$ (since~$T_\Kbar^r = S_G^{r} T_K^r$), yet it reveals how~$\Kbar$ jointly performs smoothing on the sphere, through~$T_K$, as well as on permutations through~$S_G$.
While these two forms of smoothing appear ``entangled'' in this assumption, one may balance them by choosing different levels of smoothing in the kernel function~$\kappa$, or by averaging multiple times in~\eqref{eq:stable_kernel}.
Assumption~(A5) is needed for obtaining a variant of Lemma~\ref{lemma:dof}, and implies~(A1) with~$C_K \propto C^{1/\alpha}$.
We then obtain the following generalization bound.

\begin{theorem}[Generalization with geometric stability.]
\label{thm:generalization_stability}
Assume (A4-6), and assume~$\nu_0 := \inf_{\ell \geq 0} \nu_d(\ell) > 0$.
Let $n \geq \max ( \|f^*\|_\infty^2 / \sigma_\rho^2, \left( C_1/\nu_0 \right)^{1/(2 \alpha r + 1 - \alpha)} )$,
and define
\begin{equation}
\label{eq:ln_equation_stability}
    \ell_n := \sup \{\ell : D(\ell) \leq C_2 \nu_d(\ell)^{\frac{2 r}{2 \alpha r + 1}} n^{\frac{1}{2 \alpha r + 1}}\}.
\end{equation}
We then have, for~$\lambda = C_3 (\nu_d(\ell_n)^{1/\alpha} / n)^{\alpha/(2 \alpha r + 1)}$,
\begin{equation}
\label{eq:invariant_bound}
\E[R(\hat f_\lambda) - R(f^*)] \leq C_4 \left(\frac{ \nu_d(\ell_n)^{1/\alpha}}{n}\right)^{\frac{2 \alpha r}{2 \alpha r + 1}}.
\end{equation}
In the same setting, KRR with kernel~$K$ achieves a similar bound with~$\nu_d(\ell_n)^{1/\alpha}$ replaced by 1, but with a possibly smaller constant~$C_4$.
Here, the constants~$C_{1:4}$ only depend on the parameters of assumptions (A4-6).
\end{theorem}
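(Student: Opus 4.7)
The plan is to reduce the statement to a standard non-parametric KRR analysis applied to the kernel $\Kbar$ itself. By Lemma~\ref{lemma:spectral_stability}, the operators $S_G$, $T_K$, and hence $T_\Kbar = S_G T_K$ are jointly diagonalized in a basis $\{\Ybar_{k,j}\}$ of spherical harmonics, with $T_\Kbar$ having eigenvalues $\lambda_{k,j}\mu_k$ and $T_\Kbar^r = S_G^r T_K^r$ (using that both factors commute in this basis). Assumption (A6) therefore reads $f^* = T_\Kbar^r g$ with $\|g\|_{L^2(d\tau)} \le C_{f^*}$, which is precisely the source condition (A2) for $\Kbar$; the bias contribution is then $\lesssim \lambda^{2r} C_{f^*}^2$, exactly as in Theorem~\ref{thm:generalization}. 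The comparison claim for the vanilla kernel $K$ follows because $\|S_G\|_{L^2 \to L^2} \le 1$ (a consequence of Jensen's inequality and the invariance of $d\tau$), so $f^* = T_K^r(S_G^r g)$ with $\|S_G^r g\|_{L^2} \le C_{f^*}$ satisfies (A2) for $K$ with the same constant.

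The crux, and the step I expect to be the main technical obstacle, is an analogue of Lemma~\ref{lemma:dof} for the effective dimension
\begin{equation*}
\Ncal_\Kbar(\lambda) = \sum_{k \ge 0} \sum_{j=1}^{N(d,k)} \frac{\lambda_{k,j}\mu_k}{\lambda_{k,j}\mu_k + \lambda}.
\end{equation*}
The map $x \mapsto x\mu_k/(x\mu_k + \lambda)$ is increasing and concave on $[0,1]$, so Jensen's inequality applied to the uniform law on $\{\lambda_{k,j}\}_{j=1}^{N(d,k)}$, combined with \eqref{eq:gammadk_stability}, gives
\begin{equation*}
\sum_{j=1}^{N(d,k)} \frac{\lambda_{k,j}\mu_k}{\lambda_{k,j}\mu_k + \lambda} \;\le\; N(d,k)\,\frac{\gamma_d(k)\mu_k}{\gamma_d(k)\mu_k + \lambda}.
\end{equation*}
Splitting the sum at $k=\ell$, bounding each $k < \ell$ summand by $1$ (contributing $D(\ell) = \sum_{k<\ell} N(d,k)$), using monotonicity and $\gamma_d(k) \le \nu_d(\ell)$ for $k \ge \ell$, and then invoking (A5) (which implies (A1) with $C_K \propto C^{1/\alpha}$), I obtain
\begin{equation*}
\Ncal_\Kbar(\lambda) \;\le\; D(\ell) + \Ncal_K\bigl(\lambda/\nu_d(\ell)\bigr) \;\le\; D(\ell) + C_K\,\nu_d(\ell)^{1/\alpha}\,\lambda^{-1/\alpha}.
\end{equation*}
This is precisely where the exponent $1/\alpha$ in the final rate appears: the Jensen step converts per-coordinate reductions of $\lambda_{k,j}$ into a reduction of the effective regularization by a factor $\nu_d(\ell)$, and the capacity exponent $1/\alpha$ then pulls out the $\nu_d(\ell)^{1/\alpha}$ factor. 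The resulting gain is strictly weaker than the $\nu_d(\ell)$-gain of Lemma~\ref{lemma:dof}, reflecting that $S_G$ is only a smoother, not a projection, when $G$ is a generic symmetric set.

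Plugging the source-condition bias and the effective-dimension bound into the standard KRR excess-risk decomposition (e.g.\ \cite[Prop.~7.2]{bach2021ltfp}) gives, once $n$ is large enough for empirical-covariance concentration,
\begin{equation*}
\E[R(\hat f_\lambda) - R(f^*)] \;\lesssim\; \lambda^{2r}\,C_{f^*}^2 + \frac{\sigma_\rho^2\,\Ncal_\Kbar(\lambda)}{n}.
\end{equation*}
The lower bound $n \ge \|f^*\|_\infty^2/\sigma_\rho^2$ ensures the noise proxy dominates in the concentration step, while $n \ge (C_1/\nu_0)^{1/(2\alpha r + 1 - \alpha)}$ forces the optimal $\lambda$ to be small enough that the $\lambda^{-1/\alpha}$ capacity bound is the relevant one. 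Choosing $\ell = \ell_n$ as in \eqref{eq:ln_equation_stability} absorbs the $D(\ell_n)/n$ term into the bias, leaving the tradeoff $\lambda^{2r} + \nu_d(\ell_n)^{1/\alpha}\lambda^{-1/\alpha}/n$; minimizing over $\lambda$ yields the stated choice $\lambda \propto (\nu_d(\ell_n)^{1/\alpha}/n)^{\alpha/(2\alpha r + 1)}$ and the rate $\bigl(\nu_d(\ell_n)^{1/\alpha}/n\bigr)^{2\alpha r/(2\alpha r + 1)}$. The same template with $\nu_d(\ell_n)^{1/\alpha}$ replaced by $1$ (i.e.\ using $\Ncal_K(\lambda) \le C_K \lambda^{-1/\alpha}$ directly) applied to $K$ gives the claimed comparison bound.
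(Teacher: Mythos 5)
Your proposal is correct and follows essentially the same route as the paper. The paper's Lemma~\ref{lemma:dof_stable} uses the identical Jensen step on the concave map $x \mapsto x\mu_k/(x\mu_k+\lambda)$ to get $\sum_j \lambda_{k,j}\mu_k/(\lambda_{k,j}\mu_k+\lambda) \le N(d,k)\,\gamma_d(k)\mu_k/(\gamma_d(k)\mu_k+\lambda)$, then splits at $\ell$ and extracts the $\nu_d(\ell)^{1/\alpha}$ factor (you do this via the clean substitution $\Ncal_K(\lambda/\nu_d(\ell)) \le C_K\nu_d(\ell)^{1/\alpha}\lambda^{-1/\alpha}$, while the paper re-derives the integral comparison from $\bar\xi_m \le \nu_d(\ell)\,C(m+1)^{-\alpha}$ — same content); the paper then plugs into the same Bach-type KRR bound, treats (A6) as a source condition for $\Kbar$ exactly as you do, and handles the vanilla-$K$ comparison via $f^* = T_K^r(S_G^r g)$ with $\|S_G^r g\| \le \|g\|$ from $\|S_G\| \le 1$, matching your argument.
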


Note that the obtained generalization bound is very similar to Theorem~\ref{thm:generalization}, but with a factor~$\nu_d(\ell_n)^{1/\alpha}$ instead of~$\nu_d(\ell_n)$.
This is due to the fact that in contrast to the invariant case, where~$\gamma_d(k)$ in~\eqref{eq:gammadk} can help precisely control the number of invariant spherical harmonics, in this case~$\gamma_d(k)$ as computed in~\eqref{eq:gammadk_stability} can only give information about the sum of the eigenvalues~$\lambda_{k,j}$ at frequency~$k$, which may be insufficient to precisely estimate the gains in effective dimension.
The gap between these two factors is relatively small for kernels with slow decays ($\alpha \approx 1$) but can be more pronounced for smooth kernels with fast decays (large~$\alpha$).
Note also that the different source condition (A6) leads to a different approximation error, and thus to an approximation-estimation trade-off related to stability, which does not appear in the group-invariant case. More precise estimates of the decays of~$\lambda_{k,j}$ may help characterize this tradeoff more formally, and we leave this question to future work.
As in~\eqref{eq:nud_elln}, we may derive an estimate of~$\nu_d(\ell_n)$, namely if~$\nu_d(\ell) \approx \nu_0 + c \ell^{-\beta}$, then we have
\begin{equation}
\label{eq:nud_elln_stability}
    \nu_d(\ell_n) \leq \nu_0 + C \min\left\{ (\nu_0^{2 r} n)^{\frac{-\beta}{(d-1)(2 \alpha r + 1)}}, n^{\frac{-\beta}{(d-1)(2 \alpha r + 1) + 2 \beta r}}\right\}.
\end{equation}

\paragraph{Deformation-like stability.}
For inputs $x$ defined as signals $x \in L^2(\Omega)$ over a continuous domain $\Omega \subseteq \R^s$, $s=1,2$, the action of `small' diffeomorphisms $\varphi: \Omega \to \Omega$ as 
$(\varphi \cdot x)(u) = x( \varphi^{-1}(u))$ is a powerful diagnostic of performance of trainable CNNs \cite{petrini2021relative}, and a key guiding principle for scattering representations \cite{mallat2012group, bruna2013invariant}. In these works, the basic deformation cost is measured as $\| \varphi \| := \sup_u \| \nabla \varphi(u) - {\bf I} \|$. We instantiate an equivalent of small deformations in our finite-dimensional setting as follows.
\begin{equation}
\label{eq:defmodel}
\Phi_\varepsilon := \{ \sigma \in \Sym_d ~:~ | \sigma(u) - \sigma(u') - (u - u') | \leq \varepsilon |u - u'| \} ~,    
\end{equation}
where the differences are taken modulo $d$.
For $\varepsilon=0$, we recover the translation group described in Example \ref{ex:transl}. We can verify that $\varepsilon=1$ also corresponds to the translation group (due to the constraint that $\sigma(u) \neq \sigma(u')$ whenever $u\neq u'$), thus the first non-trivial model corresponds to $\varepsilon=2$. 
\begin{proposition}[Upper bound on~$\gamma_d(k)$ for deformations.]
\label{prop:defgroup}
It holds $\Phi_2^{-1} = \Phi_2$. Moreover, $|\Phi_2| \geq \tau^d$ for $\tau \approx 1.714$, and
\begin{equation}
    \gamma_d(k) \leq C \left(\frac{e^{2\eta}}{(2\eta)^{2\eta}\tau^{1-2\eta}} \right)^d + O \left(k^{-\eta d}\right)~
\end{equation}
for $\eta < 1/4$. In particular, $\tilde{\tau}^{-1}:=\frac{e^{2\eta}}{(2\eta)^{2\eta}\tau^{1-2\eta}} <1$ for $\eta < 0.07$, leading to an effective gain in sample complexity exponential in $d$, $\nu_0^{-1/\alpha} = \Theta(\tilde{\tau}^{d/\alpha})$; and $\beta = \eta d$ resulting in fast convergence in \eqref{eq:nud_elln_stability} even for large $d$.  
\end{proposition}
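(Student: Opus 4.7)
The plan is to verify the three assertions of the proposition in turn, with the bound on $\gamma_d(k)$ carrying the main technical content.

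\textbf{Symmetry and size.} On $\mathbb{Z}/d$, the condition $\sigma \in \Phi_2$ reduces, by a telescoping argument over adjacent indices, to requiring that $\sigma(u+1) - \sigma(u) \in \{-1, 1, 2, 3\}$ modulo $d$ for every $u$: indeed a general displacement $\sigma(u') - \sigma(u) - (u'-u)$ is a sum of $|u'-u|$ such adjacent terms, each lying in $\{-2, 0, 1, 2\}$, and the full inequality then holds automatically. Symmetry $\Phi_2^{-1} = \Phi_2$ follows by verifying that the adjacent-step property is preserved by inversion, using the cyclic bijectivity of $\sigma$. For the size bound $|\Phi_2| \geq \tau^d$, I would count Hamiltonian walks in the graph on $\mathbb{Z}/d$ with edges $u \to v$ whenever $v - u \in \{-1, 1, 2, 3\}$, via a transfer-matrix argument on a small local-occupancy state space tracking which of the nearby forward cells have already been visited; $\tau \approx 1.714$ emerges as the leading eigenvalue.

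\textbf{Bound on $\gamma_d(k)$.} By Lemma~\ref{lemma:spectral_stability}, $\gamma_d(k) = |\Phi_2|^{-1} \sum_{\sigma \in \Phi_2} \gamma_{d,\sigma}(k)$. I split this sum at the cycle-count threshold $c(\sigma) = (1 - \eta)d$. For $\sigma$ with $c(\sigma) \leq (1 - \eta)d$, Corollary~\ref{corollary:gamma_d_sigma_decay} gives $\gamma_{d,\sigma}(k) \lesssim \max\{k^{-d + (1-\eta)d}, k^{-d/2 + 6}\} = O(k^{-\eta d})$ for $\eta < 1/4$ and $d$ large, since $-\eta d > -d/2 + 6$ in that regime. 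For the remaining $\sigma$ with $c(\sigma) > (1-\eta)d$, I use the trivial bound $\gamma_{d,\sigma}(k) \leq 1$ and count these combinatorially: since $c(\sigma) \leq f + (d - f)/2$ where $f$ is the number of fixed points, $c(\sigma) > (1 - \eta)d$ forces $f \geq (1 - 2\eta)d$, so at most $2\eta d$ elements are moved. Choosing the moved set $T$ gives $\binom{d}{\lfloor 2\eta d \rfloor} \leq (e/(2\eta))^{2\eta d}$ options by Stirling, and permutations of $T$ compatible with $\Phi_2$ (extended by the identity on the complement) are at most $\tau^{2\eta d}$ by applying the same transfer-matrix reasoning to the restricted walk. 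Dividing by $|\Phi_2| \geq \tau^d$ yields
\begin{equation*}
\gamma_d(k) \leq \frac{(e/(2\eta))^{2\eta d}\, \tau^{2\eta d}}{\tau^d} + O(k^{-\eta d}) = \left(\frac{e^{2\eta}}{(2\eta)^{2\eta}\, \tau^{1 - 2\eta}}\right)^d + O(k^{-\eta d}).
\end{equation*}
The claim $\tilde{\tau}^{-1} < 1$ for $\eta < 0.07$ is then a direct numerical evaluation.

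\textbf{Main obstacle.} The delicate technical point is the bound of $\tau^{2\eta d}$ on the number of $\Phi_2$-permutations that move only a prescribed subset $T$ of size $2\eta d$. One needs the transfer-matrix exponent $\tau$ to govern both the global count of $\Phi_2$ and any subset-restricted count, ideally uniformly in the placement of $T$ around the cycle. This self-similarity is not obvious since the $\Phi_2$ constraint couples moved and fixed coordinates through its Lipschitz-type inequality: inserting fixed points between moved ones can both tighten local constraints (since a fixed point next to a moved one $u$ forces $|\sigma(u) - u| \leq 2$) and loosen them (by providing slack). Making this counting precise, while matching the leading eigenvalue $\tau$ of the unrestricted problem, is the core of the argument; everything else is either telescoping or entropy estimates.
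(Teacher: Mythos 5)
Your overall strategy matches the paper's: establish closure under inversion via the adjacent-step characterization, lower-bound $|\Phi_2|$ by a transfer-matrix/block count giving $\tau$, then apply Corollary~\ref{theo:number_of_cycles} by thresholding at $c(\sigma) > (1-\eta)d$, converting to a fixed-point threshold $\mathrm{Fix}(\sigma) > (1-2\eta)d$ via $c(\sigma) \leq \mathrm{Fix}(\sigma) + (d - \mathrm{Fix}(\sigma))/2$, and counting the resulting set by an entropy-of-moved-positions argument. The final numerics coincide. However there are two places where the proposal is thinner than what is actually needed.

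\textbf{The closure $\Phi_2^{-1} = \Phi_2$ is not a one-liner.} You assert that "the adjacent-step property is preserved by inversion" as if it were an immediate consequence of bijectivity, but this is exactly the nontrivial content of the claim. Knowing $\sigma(u+1) - \sigma(u) \in \{-1,1,2,3\}$ for all $u$, one must show $\sigma^{-1}(\tilde u + 1) - \sigma^{-1}(\tilde u) \in \{-1,1,2,3\}$ for all $\tilde u$. The paper does this by a careful case analysis: setting $\tilde u = \sigma(u)$ and $u' = \sigma^{-1}(\tilde u + 1)$, it rules out $u' < u-1$ and $u' > u+3$ by contradiction, tracking the admissible increments. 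This argument is the bulk of Step 1 of the paper's proof and you should not skip it; it is not a formal symmetry of the constraint.

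\textbf{The counting "obstacle" you identify is resolved by a pigeonhole observation you don't make.} You are right to worry that bounding the number of $\Phi_2$-permutations with a prescribed moved set $T$ by $\tau^{|T|}$ requires an argument, since the constraints couple $T$ with its complement. The paper sidesteps this: once $\mathrm{Fix}(\sigma) > (1-2\eta)d > d/2$ (which is exactly why $\eta < 1/4$ is imposed), the pigeonhole principle forces two consecutive fixed indices, so $\sigma$ lies in $\Phi_2^b$ after a cyclic re-anchoring. Inside $\Phi_2^b$ every element decomposes canonically into consecutive $1$-, $2$-, $3$-blocks, with fixed points exactly the $1$-blocks. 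The count of permutations with $\leq 2\eta d$ moved positions is then the number of block sequences with at most that many positions in $2/3$-blocks, bounded by $\binom{d}{2\eta d}$ choices for the $1$-block positions times at most $\tau^{2\eta d}$ arrangements of the remaining blocks (in fact compositions into $\{2,3\}$ grow like $\rho^{\ell}$ with $\rho \approx 1.32 < \tau$, so this is generous). This gives exactly your bound but without needing the "uniform in placement of $T$" self-similarity you flagged: the block decomposition \emph{is} the restricted structure, and the anchoring by consecutive fixed points is what makes it apply. Without that observation, the gap you named is real.

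Minor note: you bound $\gamma_{d,\sigma}(k) \lesssim O(k^{-\eta d})$ for $c(\sigma) \leq (1-\eta)d$ directly from Corollary~\ref{corollary:gamma_d_sigma_decay}, whereas the paper packages this through Corollary~\ref{theo:number_of_cycles}; these are equivalent. With the two points above filled in, the proposal becomes the paper's proof.
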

We thus verify that small deformations, already with $\varepsilon=2$, provide a substantial gain relative to rigid translations, since $\Phi_2$ now grows exponentially with the dimension, rather than linearly.
Let us remark that our small deformation model (\ref{eq:defmodel}) acting on $\{1,d\}$ 
differs in important ways from diffeomorphisms acting on a continuous domain. In our case they define unitary operators (since they are constructed as subsets of the permutation group), as opposed to diffeorphims, for which $\| \varphi \cdot x\|_{L^2(\Omega)} \neq \| x \|_{L^2(\Omega)}$ generally. In other words, the `deformations' in $\Phi_\varepsilon$ are more akin to local shufflings of the pixels rather than local distortions. That said, our model does roughly capture the size of small deformation classes. 
An interesting question for future work is to extend our framework to non-unitary transformations, which could accommodate appropriate discretisations of continuous diffeomorphisms.

\section{Numerical Experiments}
\label{sec:experiments}

\begin{figure}
    \centering
    \includegraphics[width=.32\textwidth]{"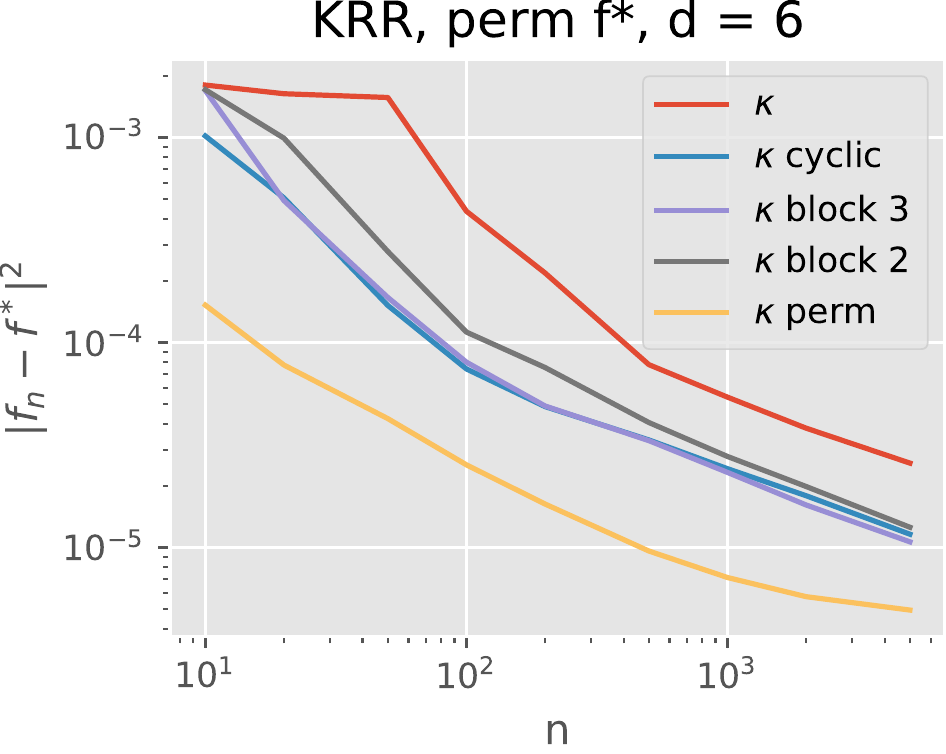"}
    \includegraphics[width=.32\textwidth]{"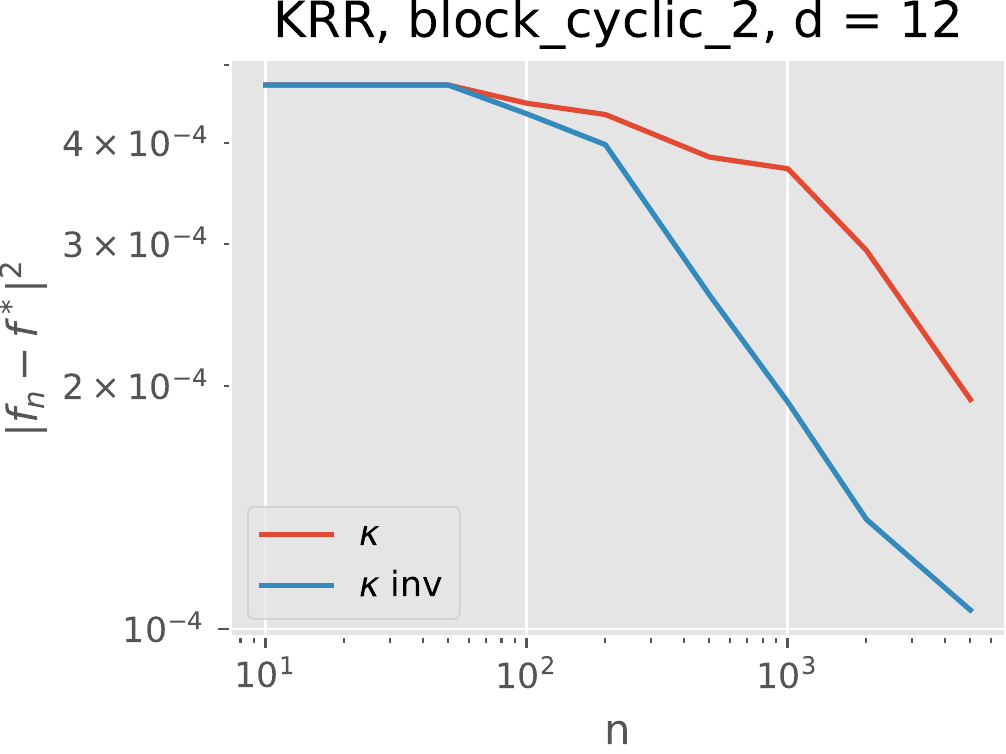"}
    \includegraphics[width=.32\textwidth]{"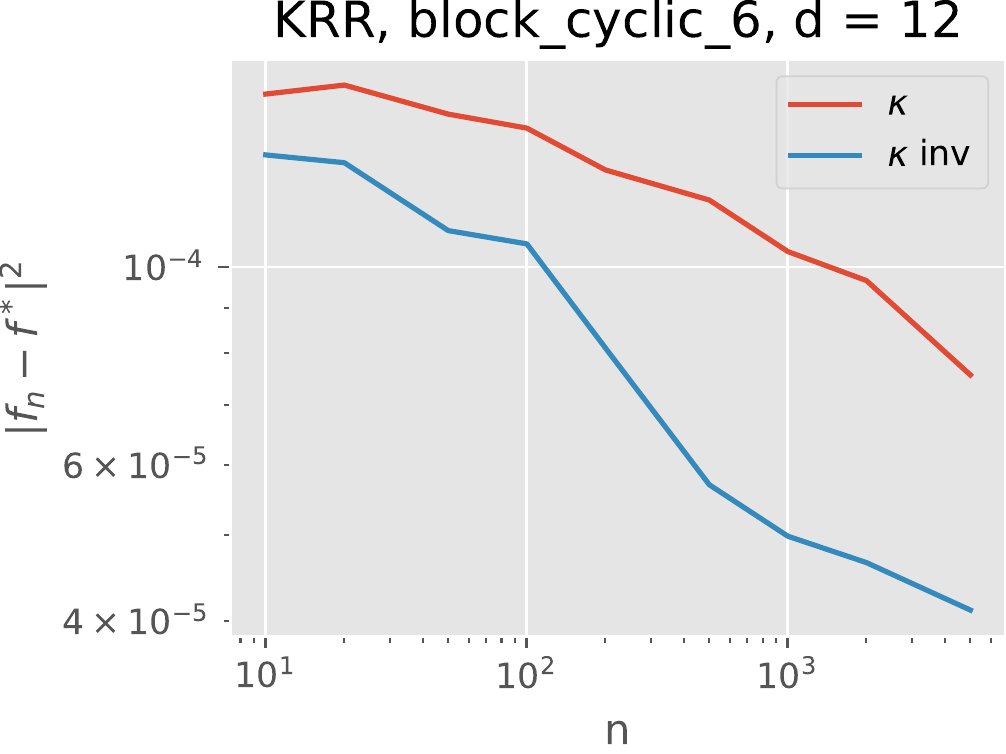"}

    \caption{Comparison of KRR with invariant and non-invariant kernels. (left) permutation-invariant target with~$d=6$, comparison between various invariant kernels (cyclic, block-cyclic, and permutation groups). (center/right) invariant vs non-invariant kernels on invariant target functions with~$d=12$, for block-cyclic groups~$G$ of two different sizes.}
    \label{fig:krr_curves}
\end{figure}

In this section, we provide simple numerical experiments on synthetic data which illustrate our theoretical results.
In Figure~\ref{fig:krr_curves}, we consider KRR on 5\,000 training samples with inputs uniformly distributed on~$\Sbb^{d-1}$, and outputs generated according to a target non-smooth function~$f^* = S_G g^*$, with~$g^*(x) = \1\{w_*^\top x \geq 0.7\}$, where the averaging operator~$S_G$ is over different groups in each plot.
The regularization parameter~$\lambda$ is optimized on~5\,000 test samples.
We use the dot-product kernel function~$\kappa(u) = (u + 1) \kappa_1(u)$, where~$\kappa_1$ is the arc-cosine kernel of degree 1, which corresponds to an infinite-width shallow ReLU network~\cite{cho2009kernel}.

When the target is permutation-invariant, we can see in Figure~\ref{fig:krr_curves}(left) that the kernel based on permutation invariance leads to the largest gain in sample complexity compared to those which use cyclic or block-cyclic groups.
Since the permutation group has the largest cardinality, this is consistent with our finding that the gains may be of the order of the size of the group.
Figures~\ref{fig:krr_curves}(center/right) consider the example of cyclic translations on local blocks of size~$2$ or~$6$ (Example~\ref{ex:local_translations} with~$s=2$ or~$6$), and illustrate that the improvement in sample complexity happens later for~$s=2$ than~$s=6$, which is consistent with the slower decays of~$\gamma_d$ obtained in~\eqref{eq:local_translations} due to the larger number of cycles.

\section{Discussion and Conclusion}
\label{sec:discussion}

We have studied how geometric invariance or stability assumptions on target functions enable more efficient learning, with improvements in sample complexity which may be as large as the number of permutations considered in the group or set of elements to which the target is invariant or stable.
In particular, this gain can be exponential in the dimension if we consider, \eg, all permutations, local translations on small blocks, or permutations that resemble small deformations.
This last example provides a strong justification for seeking models and architectures that are stable to deformations, a natural prior when learning functions on images~\cite{petrini2021relative}.
In that respect, our results provide a theoretical baseline to assess learning guarantees under geometric priors: by designing appropriate geometrically stable kernels, we simultaneously address approximation and generalisation errors within a framework of convex optimization. 

That said, while these gains may be large in practice, the obtained rates are still generically cursed by dimension if the target is non-smooth. In other words, invariance or geoemtric stability allows us to express Lipschitz assumptions with respect to weaker metrics. While these stronger regularity assumptions result in important gains in sample complexity, they do not overcome the inherent difficulty of learning non-smooth structures in high-dimensions. 
This suggests that further assumptions may be needed to learn efficiently on high-dimensional geometric data, for instance with more structured forms of regularity beyond our invariant/stable setup, which may be exploited perhaps through architectures that involve local connectivity, hierarchy (which would explain the benefits of depth, as opposed to our current results), or feature learning~\cite{bach2017breaking,bietti2021approximation,favero2021locality,malach2020computational,poggio2017and}.
A natural further question is to study stability to transformations that are not necessarily permutations, and which may then provide more realistic models of continuous deformations.
Another interesting question is to study whether it is possible to adapt to general symmetries present in the target, instead of encoding them in the model with an appropriately designed kernel as done~here.

\begin{ack}
LV and JB acknowledge partial support from the Alfred P. Sloan Foundation, NSF RI-1816753, NSF CAREER CIF 1845360, NSF CHS-1901091, and Samsung Electronics.

\end{ack}

\bibliography{full,bibli}
\bibliographystyle{abbrv}

\newpage
\appendix

This appendix contains additional background on spherical harmonics (Appendix~\ref{sec:appx_background}), and proofs of the results from Section~\ref{sec:stat},~\ref{sec:approx} and~\ref{sec:stability} (in Appendix~\ref{sec:appx_stat},~\ref{sec:appx_decay} and~\ref{sec:appx_stability}, respectively).

\section{Background on Spherical Harmonics and Legendre Polynomials}
\label{sec:appx_background}

In this section, we provide background on spherical harmonic and Legendre/Gegenbauer polynomials, which are used extensively in our analysis.
See, \eg,~\cite{atkinson2012spherical,costas2014spherical} for references.
We consider inputs on the $d-1$ sphere~$\mathbb S^{\dmone} = \{x \in \R^d, \|x\| = 1\}$.

We recall some properties of the spherical harmonics~$Y_{k,j}$ introduced in Section~\ref{sec:background}.
For~$j = 1, \ldots, N(d, k)$, where~$N(d,k) = \frac{2k + d - 2}{k} {k + d - 3 \choose d - 2}$, the spherical harmonics~$Y_{k,j}$ are homogeneous harmonic polynomials of degree~$k$ that are orthonormal with respect to the uniform distribution~$\tau$ on the~$\dmone$ sphere.
The degree~$k$ plays the role of an integer frequency, as in Fourier series, and the collection~$\{Y_{k,j}, k \geq 0, j = 1, \ldots, N(d,k)\}$ forms an orthonormal basis of~$L^2(\Sbb^{\dmone}, d\tau)$.
As with Fourier series, there are tight connections between decay of coefficients in this basis w.r.t.~$k$, and regularity/differentiability of functions, in this case differentiability on the sphere.
In particular, this is a key property that we exploit for obtaining decays related to the number of invariant polynomials in Proposition~\ref{prop:decay}.
This follows from the fact that spherical harmonics are eigenfunctions of the Laplace-Beltrami operator on the sphere~$\Delta_{\Sbb^{d-1}}$ (see~\cite[Proposition 4.5]{costas2014spherical}):
\begin{equation}
\label{eq:laplace_beltrami}
\Delta_{\Sbb^{d-1}} Y_{k,j} = -k(k+d-2)Y_{k,j}.
\end{equation}

For a given frequency~$k$, we have the following addition formula:
\begin{equation}
\label{eq:spherical_addition}
\sum_{j=1}^{N(d, k)} Y_{k,j}(x) Y_{k,j}(y) = N(d, k) P_{d,k}( x^\top y ),
\end{equation}
where~$P_{d,k}$ is the $k$-th Legendre polynomial in dimension~$d$ (also known as Gegenbauer polynomial when using a different scaling),
given by the Rodrigues formula:
\begin{equation}
\label{eq:rodrigues}
P_{d,k}(t) = (-1/2)^k \frac{\Gamma(\frac{d-1}{2})}{\Gamma(k + \frac{d-1}{2})} (1 - t^2)^{(3-d)/2}
	\left(\frac{d}{dt}\right)^k (1 - t^2)^{k+(d-3)/2}.
\end{equation}

The polynomials~$P_{d,k}$ are orthogonal in~$L^2([-1, 1], dw)$ where the measure $d w$ is given by the weight function
$d w(t) = (1 - t^2)^{(d-3)/2}dt$, and we have
\begin{equation}
\label{eq:legendre_norm}
\int_{-1}^1 P_{d,k}^2(t) (1 - t^2)^{(d-3)/2}dt = \frac{\omega_{d-1}}{\omega_{d-2}} \frac{1}{N(d,k)},
\end{equation}
where~$\omega_{p-1} = \frac{2 \pi^{p/2}}{\Gamma(p/2)}$ denotes the surface of the sphere~$\mathbb S^{p-1}$ in~$p$ dimensions.
Using the addition formula~\eqref{eq:spherical_addition} and orthogonality of spherical harmonics, we can show
\begin{equation}
\label{eq:legendre_dp}
\int P_{d,j}( w^\top x ) P_{d,k}( w^\top y ) d \tau(w) = \frac{\delta_{jk}}{N(d,k)} P_{d,k}( x^\top y )
\end{equation}
We will use the following recurrence relation of Legendre polynomials~\cite[Eq. 4.36]{costas2014spherical}
\begin{equation}
\label{eq:legendre_rec}
t P_{d,k}(t) = \frac{k}{2k + d - 2} P_{d,\kmone}(t) + \frac{k + d - 2}{2k + d - 2} P_{d,k+1}(t),
\end{equation}
for $k \geq 1$, and for $k = 0$ we simply have $t P_{d,0}(t) = P_{d,1}(t)$.
We will also use the following pointwise upper bound on~$P_{d,k}(t)$ (see~\cite[Eq.~2.117]{atkinson2012spherical}):
\begin{equation}
\left| P_{d,k}(t) \right|  \leq \frac{\Gamma\left(\frac{d-1}{2}\right)}{\sqrt{\pi}} \left(\frac{4}{k(1-t^2)}\right)^{(d-2)/2}
\end{equation}

The Funk-Hecke formula is helpful for computing Fourier coefficients in the basis of spherical harmonics in terms of
Legendre polynomials: for any~$j = 1, \ldots, N(d, k)$, we have
\begin{equation}
\label{eq:funk_hecke}
\int f(x^\top y) Y_{k,j}(y) d \tau(y) = \frac{\omega_{d-2}}{\omega_{d-1}} Y_{k,j}(x) \int_{-1}^1 f(t) P_{d,k}(t) (1 - t^2)^{(d-3)/2} dt.
\end{equation}
For example, we may use this to obtain decompositions of dot-product kernels by computing Fourier coefficients of functions~$\kappa(\langle x, \cdot \rangle)$.
Indeed, denoting
\begin{equation*}
\mu_k = \frac{\omega_{d-2}}{\omega_{d-1}} \int_{-1}^1 \kappa(t) P_{d,k}(t) (1 - t^2)^{(d-3)/2}dt,
\end{equation*}
writing the decomposition of~$\kappa(\langle x, \cdot \rangle)$ using~\eqref{eq:funk_hecke} leads to the following Mercer decomposition of the kernel:
\begin{equation}
\label{eq:mercer}
\kappa(x^\top y) = \sum_{k=0}^\infty \mu_k \sum_{j=1}^{N(d,k)} Y_{k,j}(x) Y_{k,j}(y) = \sum_{k=0}^\infty \mu_k N(d,k) P_{d,k}(x^\top y).
\end{equation}

\section{Proofs for Section~\ref{sec:stat} (Sample complexity of Invariant Kernels)}
\label{sec:appx_stat}

\subsection{Proof of Lemma~\ref{lemma:eigenvalues} (spectral properties of~$K$ and~$\Kbar$)}

\begin{proof}
That~$\mu_k$ are eigenvalues of~$T_K$ with multiplicity~$N(d,k)$ is standard and follows from the Funk-Hecke formula (see, \eg,~\cite{smola2001regularization}).
In particular, for any spherical harmonic~$Y_k \in V_{d,k}$, we have~$T_K Y_k = \mu_k Y_k$, and there are~$N(d,k)$ orthogonal spherical harmonics in~$V_{d,k}$.

For~$\Kbar$, note that we have~$T_\Kbar = S_G T_K$, so that for any $G$-invariant spherical harmonic~$\Ybar_{k} \in \Vbar_{d,k}$ we have $T_\Kbar \Ybar_{k} = \mu_k S_G \Ybar_k = \mu_k \Ybar_k$, while for any~$Y_k \in V_{d,k} \cap \Vbar_{d,k}^{\perp}$, we have~$T_\Kbar Y_k = 0$ since~$S_G Y_k = 0$.
\end{proof}

\subsection{Proof of Lemma~\ref{lemma:approx_error} (approximation error)}

\begin{proof}
Let~$\Ybar_{k,j}$, for~$j = 1, \ldots, N(d,k)$ be an orthonormal basis of~$V_{d,k}$ such that~$(\Ybar_{k,j})_{j \leq \Nbar(d,k)}$ form an orthonormal basis of~$\Vbar_{d,k}$.
Then, the collection of~$\Ybar_{k,j}$ for~$k \geq 0$ and~$j = 1, \ldots, N(d,k)$ forms an orthonormal basis of~$L^2(d \tau)$.

For a function~$f \in L^2(d \tau)$ with decomposition
\begin{equation*}
f(x) = \sum_{k \geq 0} \sum_{j=1}^{N(d,k)} a_{k,j} \Ybar_{k,j}(x),
\end{equation*}
we have the following expressions of the RKHS norms for~$K$ and~$\Kbar$ by Mercer's theorem (\eg,~\cite{cucker2002mathematical}):
\begin{align*}
\|f\|^2_{\Hcal_K} &= \begin{cases}
	\sum_{k : \mu_k > 0} \sum_{j=1}^{N(d,k)} \frac{a_{k,j}^2}{\mu_k}, &\text{ if }a_{k,j} = 0\text{ whenever }\mu_k = 0\\
	\infty, &\text{ otherwise.}
	\end{cases} \\
\|f\|^2_{\Hcal_\Kbar} &= \begin{cases}
	\sum_{k : \mu_k > 0} \sum_{j=1}^{\Nbar(d,k)} \frac{a_{k,j}^2}{\mu_k}, &\text{ if }a_{k,j} = 0\text{ whenever }\mu_k = 0\text{ or }j > \Nbar(d,k)\\
	\infty, &\text{ otherwise.}
\end{cases}
\end{align*}

Assume now that~$f^*$ is invariant, so that its coefficients~$a^*_{k,j}$ satisfy~$a^*_{k,j} = 0$ for~$j > \Nbar(d,k)$.
We have
\begin{align*}
A_{\Hcal_K}(\lambda, f^*) &= \inf_{f \in \Hcal_K} \|f - f^*\|_{L^2(d \tau)}^2 + \lambda \|f\|_{\Hcal_K}^2 \\
	&= \inf_{a_{k,j}} \sum_{k : \mu_k > 0} \sum_{j=0}^{N(d,k)} \left((a_{k,j} - a^*_{k,j})^2 + \lambda \frac{a_{k,j}^2}{\mu_k} \right)\\
	&= \inf_{a_{k,j}} \sum_{k : \mu_k > 0} \sum_{j=0}^{\Nbar(d,k)} \left((a_{k,j} - a^*_{k,j})^2 + \lambda \frac{a_{k,j}^2}{\mu_k} \right) + \sum_{k : \mu_k > 0} \sum_{j=\Nbar(d,k)+1}^{N(d,k)} (1 + \lambda/\mu_k) a_{k,j}^2 \\
	&= \inf_{a_{k,j}} \sum_{k : \mu_k > 0} \sum_{j=0}^{\Nbar(d,k)} \left((a_{k,j} - a^*_{k,j})^2 + \lambda \frac{a_{k,j}^2}{\mu_k} \right) \\
	&= A_{\Hcal_{\Kbar}}(\lambda, f^*),
\end{align*}
which proves the lemma.
\end{proof}

\subsection{Proof of Lemma~\ref{lemma:dof} (degrees of freedom)}

\begin{proof}
The result immediately follows from the following expressions of degrees of freedom for~$K$ and~$\Kbar$:
\begin{equation}
\label{eq:dof}
\Ncal_K(\lambda) = \sum_{k \geq 0} N(d,k) \frac{\mu_k}{\mu_k + \lambda}, \qquad \Ncal_{\Kbar}(\lambda) = \sum_{k \geq 0} \Nbar(d,k)\frac{\mu_k}{\mu_k + \lambda}.
\end{equation}
\end{proof}

\subsection{Proof of Theorem~\ref{thm:generalization} (generalization bound)}

\begin{proof}
We start from the following bound from~\cite[Proposition 7.2]{bach2021ltfp}, which holds for any~$\lambda \leq 1$, assuming~$\Kbar(x,x) \leq 1$ almost surely (this is satisfied when~$\kappa(u) \leq 1$), and for~$n \geq \frac{5}{\lambda}(1 + \log(1/\lambda))$:

\begin{equation}
\label{eq:bach_bound}
\E [R(\hat f_\lambda)] - R(f^*) \leq 16 \frac{\sigma_q^2}{n} \Ncal_\Kbar(\lambda) + 16 A_{\Hcal_{\Kbar}}(\lambda, f^*) +  \frac{24}{n^2}\|f^*\|^2_\infty.
\end{equation}

Under assumption (A2), we have (see, \eg, \cite[Theorem 3, p.33]{cucker2002mathematical}, using that~$\|f\|_{\Hcal_K} = \|T_K^{-1/2} f\|_{L^2(d \tau)}$)
\begin{equation}
A_{\Hcal_K}(\lambda, f^*) \leq C_{f^*}^2 \lambda^{2r},
\end{equation}
with~$C_{f^*} := \|T_K^{-r} f^*\|_{L^2(d \tau)}$
By Lemma~\ref{lemma:approx_error}, we also have
\begin{equation}
A_{\Hcal_\Kbar}(\lambda, f^*) \leq C_{f^*}^2 \lambda^{2r}.
\end{equation}
Using Lemma~\ref{lemma:dof} for some integer~$\ell \geq 0$ and (A1), the bound~\eqref{eq:bach_bound} becomes
\begin{equation}
\label{eq:krr_generic_bound}
\E [R(\hat f_\lambda)] - R(f^*) \leq 16 C_{f^*}^2 \lambda^{2r} + 16 \frac{\sigma_q^2 D(\ell)}{n} + 16 \frac{C_K \sigma_q^2 \nu_d(\ell)}{n} \lambda^{-1/\alpha} +  \frac{24}{n^2}\|f^*\|^2_\infty.
\end{equation}

Jointly optimizing the first and third terms for~$\lambda$ yields
\begin{equation}
\lambda_n = \left( \frac{C_K \sigma_q^2 \nu_d(\ell)}{2 r \alpha C_{f^*}^2 n} \right)^{\frac{\alpha}{2 \alpha r + 1}}
\end{equation}
The bound then becomes
\begin{equation}
\E [R(\hat f_{\lambda_n})] - R(f^*) \lesssim C_{f^*}^{\frac{2}{2 \alpha r + 1}} \left( \frac{C_K \sigma_q^2 \nu_d(\ell)}{n}\right)^{\frac{2 \alpha r}{2 \alpha r + 1}} + \frac{\sigma_q^2 D(\ell)}{n} +  \frac{1}{n^2}\|f^*\|^2_\infty.
\end{equation}
Here, $\lesssim$ hides only absolute constants that depend on~$\alpha$ and~$r$.
Now we choose~$\ell = \ell_n$, given by~\eqref{eq:ln_equation}, with constant corresponding to:
\begin{equation}
    \ell_n := \sup \{\ell : \sigma_q^2 D(\ell) \leq C_{f^*}^{\frac{2}{2 \alpha r + 1}} \left( C_K \sigma_q^2 \nu_d(\ell)\right)^{\frac{2 \alpha r}{2 \alpha r + 1}} n^{\frac{1}{2 \alpha r + 1}} \},
\end{equation}
so that the second term is smaller than the first term.
The last term is of the same order when
\begin{equation}
n \gtrsim \frac{\|f^*\|_\infty^2}{\sigma_q^2 D(\ell_n)},
\end{equation}
which is verified under the condition
\begin{equation}
\label{eq:n_lowerbound}
n \geq \max \left\{ \|f^*\|_\infty^2 / \sigma_\rho^2, \left( C_1/\nu_0 \right)^{\frac{\alpha}{2 \alpha r + 1 - \alpha}} \right\}
\end{equation}
from the theorem statement, since~$D(\ell_n) \geq 1$.
Note that for the specific bound~\eqref{eq:bach_bound} to hold we also need~$\lambda_n \gtrsim 1/n$ (up to logarithmic terms).
This imposes the qualification condition~$r > (\alpha - 1) / 2 \alpha$, and leads to the additional requirement
\begin{equation}
n \gtrsim \left( \frac{C_{f^*}^2}{\sigma_q^2 C_K \nu_d(\ell)} \right)^{\frac{\alpha}{2 \alpha r + 1 - \alpha}}.
\end{equation}
This is verified under condition~\eqref{eq:n_lowerbound} with~$C_1 = C_{f^*}^2 / \sigma_q^2 C_K$.

For the KRR estimator with kernel~$K$, the same bound~\eqref{eq:krr_generic_bound} holds, but without the factor~$\nu_d(\ell)$ and without the term involving~$D(\ell)$. The resulting bound follows from a similar analysis.
\end{proof}

\subsection{Estimating~$\ell_n$ and the effective gain~$\nu_d(\ell_n)$}
\label{sub:appx_ell_n}

In this section, we provide more details on our study of the asymptotic behavior of the quantities~$\ell_n$ and~$\nu_d(\ell_n)$ in Theorem~\ref{thm:generalization}, as described in Section~\ref{sec:stat}.

Since~$D(\ell)$ increases with~$\ell$, Eq.~\eqref{eq:ln_equation} suggests that~$\ell_n$ increases with~$n$.
We now provide intuition on how we might expect~$\ell_n$ and~$\nu_d(\ell_n)$ to behave in a situation of interest where we know an asymptotic equivalent of~$\nu_d$.
Namely, assume that
\begin{equation*}
    \nu_d(\ell) \approx \nu_0 + c \ell^{-\beta}.
\end{equation*}
We provide such asymptotic equivalents in Section~\ref{sec:approx}, where~$\nu_0 = 1/|G|$, and~$\beta$ depends on spectral properties of the elements of~$G$.
For some large groups,~$\beta$ may be small, in which case we may consider other approximations with larger~$\beta$, at the cost of a larger~$\nu_0$.
When~$\ell_n$ is large, using the approximation~$N(d,k) \approx k^{d-2}$, we have~$D(\ell) \approx \sum_{k=0}^{\ell-1} k^{d-2} \approx \ell^{d-1}$.
Hiding constants other than~$\nu_0$, we may then consider~$\ell_n$ to be solution of
\[
\ell^{\frac{(d-1)(2 \alpha r + 1)}{2 \alpha r}} = n^{\frac{1}{2\alpha r}} (\nu_0 + \ell^{-\beta}).
\]
Since the l.h.s.~increases, while the r.h.s.~decreases with~$\ell$, we must have~$\ell_n \geq \max(\ell_{n,1}, \ell_{n,2})$,
with
\begin{equation*}
    \ell_{n,1}^{\frac{(d-1)(2 \alpha r + 1)}{2 \alpha r}} = n^{\frac{1}{2\alpha r}} \nu_0, \quad \text{ and } \quad \ell_{n,2}^{\frac{(d-1)(2 \alpha r + 1)}{2 \alpha r}} = n^{\frac{1}{2\alpha r}} \ell_{n,2}^{-\beta}.
\end{equation*}
This yields
\begin{equation}
    \nu_d(\ell_n) \leq \nu_0 + C \min\left\{ (\nu_0^{2 \alpha r} n)^{\frac{-\beta}{(d-1)(2 \alpha r + 1)}}, n^{\frac{-\beta}{(d-1)(2 \alpha r + 1) + 2 \beta \alpha r}}\right\}.
\end{equation}
Notice that when~$\beta \ll d$, both exponents of~$n$ display a curse of dimensionality, but this curse goes away as~$\beta$ grows.
Note also that the first exponent yields a faster rate, but one that is only achieved for large~$n$ due to the factor~$\nu_0^{2 \alpha r}$, which may be small for large groups.

\subsection{Discussion of Optimality}
\label{sub:appx_optimality}
In this section, we discuss the optimality of the upper bounds in Theorem~\ref{thm:generalization}, in particular the constant~$C_4$ and its dependence on the constants~$C_K$ and~$C_{f^*}$ from the source and capacity conditions.

\paragraph{Tightness of~$C_4$ for non-invariant targets.}
We first provide a minimax lower bound for the class of (non-invariant) targets satisfying the source and capacity conditions (A1/A2), in order to show that the constant~$C_4$ can be tight (up to absolute constants) in a minimax sense for this class.

Consider a kernel $K_0(x, x') = \kappa_0(\langle x, x' \rangle)$ such that we have the following asymptotics on the eigenvalues of the integral operator: $\lambda_m(T_{K_0}) \sim C_0 m^{-\alpha}$.
Let~$\ell_0$ be such that for all~$m \geq M_0 := D(\ell_0) + 1$ we have~$C_0 m^{-\alpha} / 2 \leq \lambda_m(T_{K_0}) \leq 2 C_0 m^{-\alpha}$.
We can then construct a function~$\kappa$ and corresponding kernel~$K$ such that~$\lambda_m(T_K) \leq 2C_0 m^{-\alpha}$ and for~$m \geq M_0$, $\lambda_m(T_K) \geq C_0 m^{-\alpha} / 2$.
For instance, we may define~$\kappa(u) = \sum_{k \geq 0}  \mu_k(\kappa) N(d,k) P_{d,k}(u)$, with
\[
\mu_k(\kappa) = \begin{cases} 2 C_0 M_0^{-\alpha} &\text{ if }k \leq \ell_0\\
    \mu_k(\kappa_0) &\text{otherwise,}
    \end{cases}
\]
where the~$\mu_k(\kappa_0)$ are the Legendre coefficients of~$\kappa_0$.

Note that this kernel~$K$ satisfies the capacity condition (A1) with~$C_K \lesssim C_0^{1/\alpha}$. With this choice of~$K$, define~$\mathcal F$ to be the set of regression functions~$f^*$ that further satisfy assumption (A2) with parameters~$C_*$ and~$r$, and assume that labels are generated as~$y = f^*(x) + \epsilon$, with~$\epsilon \sim \mathcal N(0, \sigma_\rho^2)$.
Under these assumptions, note that the upper bound in Theorem~\ref{thm:generalization} is given by
\begin{equation}
\label{eq:upper_bound_constants}
\E[\|\hat f - f^*\|^2] \lesssim C_*^{\frac{2}{2 \alpha r + 1}} C_0 ^{\frac{2 r}{2 \alpha r + 1}} (\frac{\sigma_\rho^2}{n})^{\frac{2 \alpha r}{2 \alpha r + 1}},
\end{equation}
where~$\lesssim$ hides absolute constants or constants depending only on~$\alpha$ and~$r$.

Following~\cite{bach2021ltfp}, we use Fano's inequality to lower bound the minimax risk. In particular, we have a lower bound
\[
M_n(\mathcal F) := \inf_{\hat f} \sup_{f^* \in \mathcal F} \E_{\mathcal D_n \sim \rho^{\otimes n}}[\|\hat f_{\mathcal D_n} - f^*\|_{L^2(d \tau)}^2] \geq A / 2,
\]
on the minimax risk~$M_n(\mathcal F)$ if we can find a set $\{f_1, \ldots, f_M\} \in \mathcal F$, $M \geq 16$, such that
\begin{itemize}
    \item $\|f_i - f_j\|_{L^2(d \tau)}^2 \geq 4 A$ for~$i \ne j$ (\ie, we have a packing set)
    \item $\frac{n}{2 \sigma_\rho^2} \|f_i - f_j\|_{L^2(d \tau)}^2 \leq \frac{\log M}{4}$ (this ensures~$f_i$ and~$f_j$ are difficult enough to distinguish).
\end{itemize}

In order to construct a packing, we use the Varshamov-Gilbert lemma to obtain~$M \geq \exp(K / 8)$ elements~$x_1, \ldots, x_M \in \{0, 1\}^K$ for some~$K$ to be chosen later, which satisfy $\|x_i - x_j\|_1 \geq K/4$ for~$i \ne j$.
Defining~$f_i = \beta \sum_{m=1}^K 2((x_i)_m - 1) \phi_m$, where~$(\phi_m)_m$ are the eigenfunctions of~$T_K$ sorted such that the corresponding eigenvalues~$\lambda_m$ are non-decreasing, we have
\[
\|f_i - f_j\|^2 \geq \beta^2 K, \quad \text{for } i \ne j,
\]
and may thus consider a lower bound of the form~$A/2 = K \beta^2 / 8$.
Then, since~$\|f_i - f_j\|^2 \leq 4 \beta^2 K \leq 32 \beta^2 \log M$, it suffices to have~$16 n \beta^2 / \sigma_\rho^2 \log M \leq \log M /4$, \ie, $\beta^2 \leq \sigma_\rho^2 / 64 n$ to satisfy the second condition above.
Further, in order for all~$f_i$ to satisfy the capacity condition, we need~$\|T_K^{-r} f_i\|^2 \leq C_*^2$. Note that we have
\[
\|T_K^{-r} f_i\|^2 = \beta^2\sum_{m=1}^K \lambda_m^{-2r} \leq K \beta^2 \lambda_K^{-2r},
\]
thus, it suffices to take~$K \beta^2 \leq C_*^2 \lambda_K^{2r}$.
Taking a maximal~$\beta^2$ under these two conditions, we have the following lower bound on the minimax risk:
\[
M_n(\mathcal F) \geq \frac{K \beta^2}{8} \geq \frac{1}{8}\min\left\{ C_*^2 \lambda_K^{2r}, \frac{K \sigma_\rho^2}{64 n}\right\}.
\]
Using the lower bound~$\lambda_K^{2 r} \geq (C_0/2)^{2r} K^{-2 \alpha r}$, which holds for~$K \geq M_0$, and optimizing for~$K$ yields $K \approx (C_*^2 C_K^{2r} \sigma_\rho^2 n)^{1/(1 + 2 \alpha r)}$. For~$n$ large enough, we have~$K \geq M_0$, and the following lower bound holds:
\[
M_n(\mathcal F) \gtrsim C_*^{\frac{2}{2 \alpha r + 1}} C_0^{\frac{2 r}{2 \alpha r + 1}} (\frac{\sigma_\rho^2}{n})^{\frac{2 \alpha r}{2 \alpha r + 1}}.
\]
This matches the upper bound~\eqref{eq:upper_bound_constants} up to absolute constants.

\paragraph{Tightness of~$1 / |G|$ for the invariant class.}
We now show that the our bound in Theorem~\ref{thm:generalization} which asymptotically shows a $1/|G|n$ instead of~$1/n$, is (asymptotically) minimax optimal over the class of \emph{invariant} targets which satisfy assumption~(A2).

We consider the same kernel~$K_0$ as above, and denote by~$K_{G,0}$ its invariant counterpart. It suffices to show that we have the asymptotic expansion
\begin{equation}
\label{eq:inv_lambda_expansion}
\lambda_m(T_{K_{G,0}}) \sim |G|^{-\alpha} C_0 m^{-\alpha}
\end{equation}
instead of~$C_0 m^{-\alpha}$.
Indeed, in this case we can construct a kernel~$K$ such that its invariant counterpart~$K_G$ has eigenvalues upper bounded as $\lambda_m(T_{K_G}) \leq 2 C_0 |G|^{-\alpha} m^{-\alpha}$ and lower bounded by~$(C_0/2) |G|^{-\alpha} m^{-\alpha}$ for~$m \geq M_1$ (note that~$M_1$ could be chosen large enough so that construction from before for the non-invariant case also applies to the same kernel~$K$).
Then, applying the same arguments as for the non-invariant case, we obtain the desired minimax-lower bound
\[
M_n(\bar {\mathcal F}) \gtrsim C_*^{\frac{2}{2 \alpha r + 1}} C_0^{\frac{2 r}{2 \alpha r + 1}} (\frac{\sigma_\rho^2}{|G| n})^{\frac{2 \alpha r}{2 \alpha r + 1}},
\]
for~$n$ large enough, where~$\bar{\mathcal F}$ is the class of invariant targets satisfying assumption~(A2) with the kernel~$K$. This shows that our upper bound is asymptotically tight in a minimax sense for the kernel considered.

We now explain why~\eqref{eq:inv_lambda_expansion} holds.
Let~$\mu_k$ denote the Legendre coefficients of~$\kappa$ at frequency~$k$, and assume~$\mu_k \sim C_1 k^{-\beta}$.
Recall that we have $N(d,k) \sim C_2 k^{d-2}$, so that~$D(k) = \sum_{k' \leq k} N(d,k') \sim C_3 k^{d-1}$ for some~$C_3$.
Then, when taking eigenvalues with their multiplicity, when~$D(k-1) < m \leq D(k)$, the $m$-th eigenvalue~$\lambda_m(T_{K_0})$ is~$\mu_k$. Asymptotically, we have~$k \sim C_3^{\frac{1}{d-1}} m^{\frac{1}{d-1}}$, so that~$\lambda_m \sim C_1 C_3^{\frac{-\beta}{d-1}} m^{\frac{-\beta}{d-1}}$, that is, we have~$\alpha = \beta/(d-1)$ and~$C_0 = C_1 C_3^{-\alpha}$.

Now, since~$\frac{\Nbar(d,k)}{N(d,k)} \to \frac{1}{|G|}$ as $k \to \infty$, we have~$\Nbar(d,k) \sim (C_2/|G|) k^{d-2}$, and~$\bar D(k) \sim (C_3/|G|) k^{d-1}$. With the same reasoning, this leads to~$\lambda_m(T_{K_{G,0}}) \sim C_1 (C_3 / |G|)^{-\alpha} m^{-\alpha} = C_0 |G|^{-\alpha} m^{-\alpha}$, which is the desired constant.

\section{Proofs for Section \ref{sec:approx} (Decays of~$\gamma_d(k)$)}
\label{sec:appx_decay}

\subsection{Proof of Proposition~\ref{prop:decay} (decay of~$\gamma_{d,\sigma}(k)$)}

The proof of Proposition~\ref{prop:decay} is technical and relies on identifying and analyzing the singularities in the density~$q_\sigma$ of the random variable~$Z_\sigma = \langle \sigma \cdot x, x \rangle$, with~$x \sim \tau$, using results in~\cite{saldanha1996accumulated}.
Lemma~\ref{lemma:IbP} provides a general integration by parts result which is useful throughout the proof to obtain asymptotic decays from regularity properties.
Lemma~\ref{lemma:phi_interior} and Lemma~\ref{lemma:phi_pm1} provide asymptotic decays for singularities~$\phi(t)$ localized around some~$\lambda$ in~$(-1, 1)$ and~$\{\pm 1\}$, respectively, either through integration by parts or using closed form expressions of certain integrals.
Proposition~\ref{prop:decay} is then proved by appropriately ``cancelling'' the singularities in~$q_\sigma$ using such localized functions~$\phi$, as explained in Lemma~\ref{lemma:singularities}, and applying the integration by parts lemma on the resulting function, which is of higher smoothness and thus leads to faster-decaying terms.

\begin{lemma}[Integration by parts]\label{lemma:IbP}
Let~$g:[-1, 1] \to \R$ be~$2s$-times differentiable, with all derivatives bounded on~$[-1, 1]$. We then have
\begin{equation}
\label{eq:ibp}
    \int_{-1}^1 g(t) P_{d,k}(t) (1 - t^2)^{\frac{d-3}{2}} dt = \frac{1}{(k(k+d-2))^{s}} \int_{-1}^1 \tilde g_{d,s}(t) P_{d,k}(t) (1 - t^2)^{\frac{d-3}{2}}dt,
\end{equation}
where~$\tilde g_{d,s}$ is a bounded function on~$[-1, 1]$.
\end{lemma}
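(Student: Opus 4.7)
The plan is to exploit the Sturm--Liouville structure of the Gegenbauer polynomials. Recall that $P_{d,k}$ satisfies the ODE $(1-t^2) y'' - (d-1) t y' + k(k+d-2) y = 0$, which written in self-adjoint form reads
\begin{equation*}
\frac{d}{dt}\bigl[(1-t^2)^{(d-1)/2} P_{d,k}'(t)\bigr] = -k(k+d-2)\,(1-t^2)^{(d-3)/2} P_{d,k}(t).
\end{equation*}
Equivalently, introducing the second-order differential operator $L g(t) := (1-t^2)\,g''(t) - (d-1)\,t\, g'(t)$, we have $L P_{d,k} = -k(k+d-2) P_{d,k}$. This suggests identifying the factor $k(k+d-2)$ in the denominator with an eigenvalue coming from one application of $L$, and iterating to obtain the power $s$.

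First I would establish that $L$ is self-adjoint in $L^2([-1,1], (1-t^2)^{(d-3)/2} dt)$ restricted to sufficiently smooth functions. Rewriting $L g \cdot (1-t^2)^{(d-3)/2} = \bigl((1-t^2)^{(d-1)/2} g'\bigr)'$ and performing two integrations by parts yields
\begin{equation*}
\int_{-1}^1 Lg(t)\, h(t)\,(1-t^2)^{\frac{d-3}{2}} dt - \int_{-1}^1 g(t)\, Lh(t)\,(1-t^2)^{\frac{d-3}{2}} dt = \Bigl[(1-t^2)^{\frac{d-1}{2}} \bigl(g'(t) h(t) - g(t) h'(t)\bigr)\Bigr]_{-1}^{1},
\end{equation*}
and the boundary term vanishes because $(1-t^2)^{(d-1)/2}\big|_{t=\pm 1} = 0$ for $d \geq 2$, while $g, g', h, h'$ are bounded by hypothesis.

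Applying self-adjointness with $h = P_{d,k}$ and using the eigenvalue relation $LP_{d,k} = -k(k+d-2) P_{d,k}$ gives the single-step identity
\begin{equation*}
\int_{-1}^1 g(t) P_{d,k}(t) (1-t^2)^{\frac{d-3}{2}} dt = \frac{-1}{k(k+d-2)} \int_{-1}^1 Lg(t) \, P_{d,k}(t) (1-t^2)^{\frac{d-3}{2}} dt.
\end{equation*}
Since $L$ consumes two orders of differentiability and preserves boundedness on $[-1,1]$ (the coefficient $1-t^2$ compensates for any mild behavior at the endpoints, although here $g''$ is assumed bounded throughout), the right-hand integrand has the same form as the left, with $g$ replaced by $Lg$.

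Iterating this identity $s$ times and setting $\tilde g_{d,s}(t) := (-1)^s L^s g(t)$ yields the stated result. The hypothesis that $g$ is $2s$-times differentiable with all derivatives bounded on $[-1,1]$ is exactly what is needed to ensure that $L^j g \in C^{2(s-j)}([-1,1])$ for $0 \leq j \leq s$, so that all intermediate integrations by parts are legitimate (the boundary terms vanish at each step) and $\tilde g_{d,s}$ is bounded. The only subtlety is tracking the regularity through the iteration and confirming the endpoint vanishing; this is routine given the factor $(1-t^2)^{(d-1)/2}$, which is the key mechanism making the scheme work.
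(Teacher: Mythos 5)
Your proof is correct and takes essentially the same approach as the paper. The paper's proof cites a one-step integration-by-parts identity from an earlier reference (\cite{bietti2021deep}, Lemma~4), which produces precisely your function $f_1 = -Lg$ with $Lg = (1-t^2)g'' - (d-1)tg'$ and then iterates; you derive that same identity from scratch via the Sturm--Liouville self-adjoint form of the Gegenbauer ODE. The underlying mechanism --- repeated application of the second-order operator $L$, with the eigenvalue relation $LP_{d,k} = -k(k+d-2)P_{d,k}$ supplying the denominator and the weight factor $(1-t^2)^{(d-1)/2}$ killing the boundary terms --- is identical.
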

\begin{proof}
We use the following relation, derived in~\cite[Lemma 4]{bietti2021deep} for a function~$f_0$:
\begin{align*}
\int_{-1}^1 f_0(t) P_{d,k}(t) &(1-t^2)^{\frac{d-3}{2}} dt = \frac{1}{k(k+d-2)} \Big( -f_0(t)(1 - t^2)^{1 + \frac{d-3}{2}} P_{d,k}'(t) \Big|_{-1}^1 \\
	&\quad+ f_0'(t) (1 - t^2)^{1 + \frac{d-3}{2}} P_{d,k}(t) \Big|_{-1}^1  + \int_{-1}^1  f_1(t) P_{d,k}(t) (1 - t^2)^{(d-3)/2} dt \Big),
\end{align*}
where~$ f_1(t) = -f_0''(t) (1 - t^2) + (d - 1) t f_0'(t)$.
Note that the terms in brackets vanish when~$f_0$ and~$f_0'$ are bounded, and that~$f_1$ is~$2 s - 2$ times differentiable with bounded derivatives if~$f_0$ is~$2s$ times differentiable.
We may thus apply this recursively~$s$ times to~$f_0 = g$, with
\begin{align*}
    f_k(t) = - f_{k-1}''(t) (1 - t^2) + (d - 1) t f_{k-1}'(t),
\end{align*}
and we obtain the desired result, with~$\tilde g = f_s$.
\end{proof}

\begin{lemma}\label{lemma:uniform_P}
Let $g \in L^\infty([-1,1])$. It holds that
$$
\left| \int_{-1}^1  g(t) P_{d,k}(t) (1-t^2)^{(d-3)/2}\,dt  \right| \leq 2\pi d^{-1/2} \| g \|_\infty  \left( \frac{ d}{k} \right)^{(d-2)/2} ~.
$$
\end{lemma}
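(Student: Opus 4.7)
The plan is to apply the pointwise upper bound on $P_{d,k}$ stated earlier in the excerpt, namely
\[
|P_{d,k}(t)| \leq \frac{\Gamma\!\left(\frac{d-1}{2}\right)}{\sqrt{\pi}}\left(\frac{4}{k(1-t^2)}\right)^{(d-2)/2},
\]
and combine it with the weight $(1-t^2)^{(d-3)/2}$ present in the integrand. The crucial algebraic observation is that the singularity exponents almost cancel: multiplying the two gives a factor $(1-t^2)^{(d-3)/2-(d-2)/2} = (1-t^2)^{-1/2}$, which is integrable on $[-1,1]$ with $\int_{-1}^{1}(1-t^2)^{-1/2}\,dt = \pi$.

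Concretely, I would pull $\|g\|_\infty$ out of the integral by the triangle inequality, substitute the pointwise bound, and factor out the $k$-dependence to obtain
\[
\left|\int_{-1}^1 g(t)P_{d,k}(t)(1-t^2)^{(d-3)/2}\,dt\right| \leq \|g\|_\infty \cdot \frac{\Gamma(\frac{d-1}{2})}{\sqrt{\pi}}\cdot\frac{2^{d-2}}{k^{(d-2)/2}}\cdot\int_{-1}^1 (1-t^2)^{-1/2}\,dt.
\]
Evaluating the remaining integral as $\pi$ gives the bound $\sqrt{\pi}\,\Gamma(\frac{d-1}{2})\,2^{d-2}\,\|g\|_\infty\,k^{-(d-2)/2}$.

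The last step is to repackage the $d$-dependent constants into the stated form $2\pi d^{-1/2}(d/k)^{(d-2)/2}$. This amounts to verifying $\sqrt{\pi}\,\Gamma(\tfrac{d-1}{2})\,2^{d-2} \leq 2\pi\,d^{(d-3)/2}$, which I would establish by a direct Stirling-type estimate on $\Gamma(\tfrac{d-1}{2})$, using that $\Gamma(x+1)\leq\sqrt{2\pi x}\,(x/e)^x\cdot e^{1/(12x)}$ to absorb the $2^{d-2}$ factor into the $d^{(d-3)/2}$ growth. This is the one genuinely fiddly part of the argument — everything else is just weight cancellation — and the main obstacle is organising the Gamma-function arithmetic cleanly enough that the final constant comes out exactly as $2\pi d^{-1/2}$ rather than something slightly larger; if needed, the lemma is tolerant of replacing this absolute constant by any $d$-independent constant, since Proposition~\ref{prop:decay} only uses the $k^{-(d-2)/2}$ decay rate.
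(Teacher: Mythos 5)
Your proposal matches the paper's proof essentially step for step: the same pointwise bound on $P_{d,k}$, the same exponent cancellation $(1-t^2)^{(d-3)/2}\cdot(1-t^2)^{-(d-2)/2}=(1-t^2)^{-1/2}$, the same evaluation $\int_{-1}^1(1-t^2)^{-1/2}\,dt=\pi$, and the same remaining task of repackaging the constant. For the Gamma arithmetic you flag as the fiddly part, the paper handles it more elementarily by the direct bound $\Gamma\left(\tfrac{d-1}{2}\right)\leq\left(\tfrac{d-1}{4}\right)^{(d-3)/2}$, which absorbs the $4^{(d-2)/2}$ factor in one line rather than via a full Stirling estimate.
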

\begin{proof}
By \cite[equation (2.117)]{atkinson2012spherical}, we get that
\begin{align*}
\left| P_{d,k}(t) \right| & \leq \frac{1}{\sqrt{\pi}}\Gamma\left(\frac{d-1}{2}\right) \left(\frac{4}{k(1-t^2)}\right)^{(d-2)/2} \\
& \leq \frac{1}{\sqrt{\pi}}\left(\frac{d-1}{4}\right)^{(d-3)/2}\left(\frac{4}{k(1-t^2)}\right)^{(d-2)/2} \\
& \leq \frac{1}{\sqrt{\pi}}\left(\frac{d}{4}\right)^{-1/2}\left(\frac{d}{k(1-t^2)}\right)^{(d-2)/2} \leq  2d^{-1/2}\left(\frac{d}{k(1-t^2)}\right)^{(d-2)/2}~.
\end{align*}
Therefore it follows that
\begin{align*}
\left| \int_{-1}^1  g(t) P_{d,k}(t) (1-t^2)^{(d-3)/2}\,dt  \right|  & \leq 2 d^{-1/2} \| g \|_\infty  \left( \frac{ d}{k} \right)^{(d-2)/2} \int_{-1}^1 (1-t^2)^{-1/2}\,dt~,
\end{align*}
which concludes the proof.
\end{proof}

\begin{lemma}[Decay for~$\lambda \in (-1, 1)$]
\label{lemma:phi_interior}
Let $\phi_{\lambda+,\alpha}(t) := (t - \lambda)_+^\alpha \varphi_{\lambda+,\alpha}(t)$ and $\phi_{\lambda-,\alpha}(t) := (t - \lambda)_-^\alpha \varphi_{\lambda-,\alpha}(t)$, where $\varphi_{\lambda\pm,\alpha} \in C^\infty([-1,1])$ have support $(-1+\epsilon,1-\epsilon)$ for some $\epsilon > 0$ and take the value~$1$ at~$t=\lambda$. Then we have that 
\begin{equation}
\left| \int_{-1}^1 \phi_{\lambda\pm,\alpha}(t) P_{d,k}(t) dt \right| \leq C(d,\alpha) k^{-d/2 -\alpha + 3}~.
\end{equation}
Also let, for $\alpha$ integer, $\phi^*_{\lambda\pm,\alpha}(t) := (t-\lambda)_{\pm}^\alpha\log|t-\lambda| \varphi_{\lambda\pm,\alpha}^*(t)$, where $\varphi_{\lambda\pm,\alpha}^* \in C^\infty([-1,1])$ have support $(-1+\epsilon,1-\epsilon)$ for some $\epsilon > 0$ and take the value~$1$ at~$t=\lambda$. Then we have that
\begin{equation}
\left| \int_{-1}^1 \phi_{\lambda\pm,\alpha}^*(t) P_{d,k}(t) dt \right| \leq C(d,\alpha) k^{-d/2 -\alpha + 3}~.
\end{equation}
\end{lemma}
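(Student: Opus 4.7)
The plan is to combine the integration-by-parts identity of Lemma~\ref{lemma:IbP} (which trades two derivatives of the integrand for a factor $k^{-2}$) with the uniform pointwise bound on $P_{d,k}$ from Lemma~\ref{lemma:uniform_P} (which yields a baseline decay $k^{-(d-2)/2}$). Since $\varphi_{\lambda\pm,\alpha}$ is supported in $(-1+\epsilon, 1-\epsilon)$, I would first insert the Legendre weight by writing
\[
\int_{-1}^1 \phi_{\lambda\pm,\alpha}(t) P_{d,k}(t)\,dt = \int_{-1}^1 g(t)\,P_{d,k}(t)(1-t^2)^{(d-3)/2}\,dt,
\]
where $g(t) := \phi_{\lambda\pm,\alpha}(t)(1-t^2)^{-(d-3)/2}$. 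The dividing weight is smooth and bounded away from zero on $\mathrm{supp}(\varphi_{\lambda\pm,\alpha})$, so $g$ is $C^\infty$ away from $\lambda$, vanishes identically outside the support of $\varphi_{\lambda\pm,\alpha}$, and behaves precisely like $(t-\lambda)_\pm^\alpha$ times a smooth factor near $\lambda$.

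Next, I would exploit the one-sidedness of the singularity by splitting the integral at $\lambda$: for $\phi_{\lambda+,\alpha}$ only the integral over $[\lambda,1]$ contributes, and $g^{(j)}(\lambda^+) = 0$ for every integer $j < \alpha$ (the case $\phi_{\lambda-,\alpha}$ is symmetric). On $[\lambda, 1]$ I would then iterate the integration-by-parts recursion used to prove Lemma~\ref{lemma:IbP}, based on the Legendre operator $Lg := -g''(1-t^2) + (d-1)tg'$. Each iteration multiplies the integral by $(k(k+d-2))^{-1}$ and leaves two boundary contributions: the one at $t=1$ vanishes thanks to the $(1-t^2)^{(d-1)/2}$ factor, and the one at $t=\lambda^+$ involves only $g,g',\ldots,g^{(2s-1)}$, all of which vanish as long as $2s-1 < \alpha$. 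Taking $s$ to be the largest integer with $2s-1 < \alpha$, the iteration terminates with a bounded integrand $f_s = L^s g$.

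Finally, I would bound the remainder via Lemma~\ref{lemma:uniform_P}, yielding an extra factor $k^{-(d-2)/2}$, so that the total decay is of order $k^{-(d-2)/2 - 2s}$. With $s$ chosen as above, this is at most $k^{-d/2 - \alpha + 3}$, the offset of $3$ in the exponent comfortably absorbing the rounding in $s$ and covering both integer and non-integer $\alpha$; the multiplicative constant depends only on $d$, $\alpha$, and a finite $C^{2s}$-norm of $\varphi_{\lambda\pm,\alpha}$. The logarithmic variant $\phi^*_{\lambda\pm,\alpha}$ is handled identically: differentiating $(t-\lambda)^\alpha \log|t-\lambda|$ up to order $2s-1 < \alpha$ still yields a function vanishing at $\lambda^+$ (the logarithm is dominated by the polynomial factor), while the $2s$-th derivative is bounded up to an extra $\log$ factor that is absorbed into the constant.

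The main obstacle will be the careful bookkeeping of the iterated integration by parts in the presence of the interior singularity: one must verify that each intermediate function $L^j g$ remains sufficiently regular on the half-interval for the next step to be legitimate, that the boundary contributions at $\lambda^+$ really do cancel at every order up to $2s-1$, and that all constants depend only on $d$, $\alpha$, and the cutoff. Once this accounting is done the bound reduces to the elementary product of the $k^{-2s}$ gained from regularity and the uniform Legendre estimate.
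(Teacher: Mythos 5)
Your approach is fundamentally the same as the paper's: insert the Legendre weight, apply the integration-by-parts machinery of Lemma~\ref{lemma:IbP} to gain a factor $k^{-2s}$, and finish with the pointwise Legendre bound of Lemma~\ref{lemma:uniform_P} for the remaining $k^{-(d-2)/2}$. Two differences are worth flagging.

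First, the split at $t=\lambda$ is unnecessary. The paper simply observes that $\psi(t) := \phi_{\lambda+,\alpha}(t)(1-t^2)^{-(d-3)/2}$ is $2s$-times differentiable with bounded derivatives \emph{on all of $[-1,1]$} when $2s = 2\lfloor\alpha/2\rfloor$, because the one-sided singularity $(t-\lambda)_+^\alpha$ has derivatives of order $j\leq 2s$ that remain bounded (indeed vanish) at $\lambda$, and the cutoff kills the endpoint behavior. Applying Lemma~\ref{lemma:IbP} directly on $[-1,1]$ avoids any boundary terms at $\lambda$ and the associated bookkeeping you were (correctly) worrying about.

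Second, and more substantively, your choice of $s$ is too aggressive. You take the largest $s$ with $2s-1<\alpha$, which for $\alpha\in(2j+1,2j+2)$ gives $s=j+1$ and hence $2s>\alpha$, so the $2s$-th derivative of $g$ near $\lambda$ behaves like $(t-\lambda)_+^{\alpha-2s}$ with a \emph{negative} exponent: $f_s$ is unbounded near $\lambda$, and Lemma~\ref{lemma:uniform_P} cannot be applied as written. The same issue recurs in the log case: for integer $\alpha$ your $s$ can give $2s=\alpha$, and a derivative proportional to $\log|t-\lambda|$ is unbounded, not merely ``bounded up to a log factor absorbed into the constant''. The paper avoids this by taking $2s = 2\lfloor\alpha/2\rfloor$ (and $2s = 2\lfloor(\alpha-1)/2\rfloor$ for the log case), which guarantees $2s$ stays below the differentiability threshold while still satisfying $2s\geq\alpha-2$, enough for the stated exponent $k^{-d/2-\alpha+3}$. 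Your argument can be repaired either by taking $s$ one smaller (the $+3$ slack in the exponent absorbs it, as you anticipate), or by replacing the $L^\infty$ bound in the last step with an $L^1$ estimate using the pointwise bound on $P_{d,k}$, since $f_s$ remains integrable even when unbounded.
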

\begin{proof}
Let $\psi_{\lambda \pm, \alpha}(t) := \phi_{\lambda \pm, \alpha}(t) (1-t^2)^{-(d-3)/2}$. Notice that $\psi_{\lambda+,\alpha}$ satisfies the assumption of Lemma \ref{lemma:IbP} with $2s = 2\floor*{\frac{\alpha}{2}} \geq \alpha -2 $. Therefore we obtain
\begin{align*}
\left| \int_{-1}^1 \phi_{\lambda+,\alpha}(t) P_{d,k}(t) dt \right| &= \left| \int_{-1}^1 \psi_{\lambda+,\alpha}(t) P_{d,k}(t) (1 - t^2)^{(d-3)/2} dt \right| \\
  & \leq k^{-\alpha + 2} \left| \int_{-1}^1 \tilde{\psi}_{\lambda+,\alpha,d}(t) P_{d,k}(t) \left( 1 - t^2\right)^{(d-3)/2} dt \right| \\
& \leq C(d,\alpha) k^{-\alpha + 2 - d/2 + 1}~,
\end{align*}
where~$\tilde \psi_{\lambda+,\alpha,d}$ is a bounded function given by Lemma~\ref{lemma:IbP}, and
where we used Lemma \ref{lemma:uniform_P} to obtain the last inequality. The second inequality follows in the same way, by noticing that the function $\psi_{\lambda \pm, \alpha}^*(t) := \phi_{\lambda \pm, \alpha}^*(t) (1-t^2)^{-(d-3)/2}$  satisfies the assumption of Lemma \ref{lemma:IbP} with $2s = 2\floor*{\frac{\alpha-1}{2}} \geq \alpha -2 $ (since we assume that $\alpha$ is integer in this case). 
\end{proof}

\begin{lemma}[Decay for~$\lambda = \pm 1$]
\label{lemma:phi_pm1}
Let $\phi_{1,\alpha,s}(t) := (\frac{1 + t}{2})^{\alpha + s - \floor{\alpha}} (1 - t)^\alpha$, with~$\alpha$ non-integer and~$s$ integer. Then, $\phi_{1,\alpha,s}$ is~$s$ times differentiable at~$-1$ and obeys the decay
\begin{equation}
    \left| \int_{-1}^1 \phi_{1,\alpha,s}(t) P_{d,k}(t) dt \right| \leq C(d,\alpha,s) k^{-2(\alpha + 1)},
\end{equation}
where the constant~$C(d,\alpha,s)$ may be different depending on the parity of~$k$.

Similarly, let $\phi_{-1,\alpha,s}(t) := (\frac{1-t}{2})^{\alpha + s - \floor{\alpha}} (t + 1)^\alpha$,
then~$\phi_{-1,\alpha,s}$ is~$s$ times differentiable at~$1$, and obeys the same decay.
\end{lemma}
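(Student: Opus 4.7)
The plan is to prove the bound for $\phi_{1,\alpha,s}$; the statement for $\phi_{-1,\alpha,s}$ follows by substituting $t\mapsto -t$, since $\phi_{-1,\alpha,s}(-t)=\phi_{1,\alpha,s}(t)$ and $P_{d,k}(-t)=(-1)^k P_{d,k}(t)$, yielding the same bound with a sign $(-1)^k$ absorbed into a parity-dependent constant. Write $\beta:=\alpha+s-\lfloor\alpha\rfloor$. After the substitution $t=\cos\theta$, the integral becomes
\begin{equation*}
I_k=2^{\alpha+1}\int_0^\pi \cos^{2\beta+1}(\theta/2)\sin^{2\alpha+1}(\theta/2)\,P_{d,k}(\cos\theta)\,d\theta,
\end{equation*}
which isolates the singularity at $t=1$ (i.e.\ $\theta=0$) via the factor $\sin^{2\alpha+1}(\theta/2)\sim(\theta/2)^{2\alpha+1}$. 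I would introduce a smooth cutoff $\chi:[0,\pi]\to[0,1]$ equal to $1$ on $[0,\delta/2]$ and vanishing on $[\delta,\pi]$, and split $I_k=I_k^{\mathrm{loc}}+I_k^{\mathrm{bulk}}$ accordingly.

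For $I_k^{\mathrm{bulk}}$, the integrand is smooth on the interior, compactly supported away from $t=1$, and has at least $s$ bounded derivatives at $t=-1$ (inherited from the $s$-differentiability of $\phi_{1,\alpha,s}$ there). Applying Lemma \ref{lemma:IbP} repeatedly yields $|I_k^{\mathrm{bulk}}|\leq C\,k^{-s}$, which is negligible compared to $k^{-2(\alpha+1)}$ for $s$ large enough (the regime in which the lemma is actually invoked in Proposition \ref{prop:decay}). For $I_k^{\mathrm{loc}}$, I would invoke the classical Hilb-Darboux asymptotic for Gegenbauer polynomials: for $\theta\in(0,\delta]$,
\begin{equation*}
P_{d,k}(\cos\theta)=c_d\,(k\theta)^{-\nu}\,J_{\nu}\!\bigl((k+\nu)\theta\bigr)\bigl(\theta/\sin\theta\bigr)^{1/2}+R_k(\theta),\qquad \nu=\tfrac{d-2}{2},
\end{equation*}
where $J_\nu$ is the Bessel function of the first kind and the uniform remainder $R_k$ contributes at most $o(k^{-2(\alpha+1)})$ when integrated against $\theta^{2\alpha+1}$. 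Using $\sin^{2\alpha+1}(\theta/2)=(\theta/2)^{2\alpha+1}(1+O(\theta^2))$ and $\cos^{2\beta+1}(\theta/2)=1+O(\theta^2)$ near $0$, the leading contribution reduces to a constant multiple of $k^{-\nu}\int_0^\infty\chi(\theta)\theta^{2\alpha+1-\nu}J_\nu(k\theta)\,d\theta$. The substitution $u=k\theta$ extracts a factor $k^{-(2\alpha+2-\nu)}$, so combined with the $k^{-\nu}$ prefactor this yields $|I_k^{\mathrm{loc}}|\leq C\,k^{-2(\alpha+1)}$, provided the residual $u$-integral is bounded uniformly in $k$.

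The main obstacle is establishing that uniform bound on $\int_0^\infty \chi(u/k)u^{2\alpha+1-\nu}J_\nu(u)du$ for arbitrary $\alpha$: the direct Weber-Schafheitlin identity $\int_0^\infty u^{\mu-1}J_\nu(u)du=\mathrm{const.}$ requires $2\alpha+2-\nu<3/2$, which fails when $\alpha$ is large relative to $d$. This is circumvented by iteratively applying the Bessel recurrence $\partial_u[u^{\nu}J_\nu(u)]=u^\nu J_{\nu-1}(u)$ together with integration by parts in $u$, each step trading a power of $u$ against a shift of the Bessel index until the resulting integrand becomes absolutely integrable at infinity; the boundary terms at $u=\infty$ vanish (or decay like $k^{-N}$) thanks to the compact support and smoothness of $\chi$, and those at $u=0$ combine with the subleading $O(\theta^2)$ Taylor corrections above to produce strictly faster-decaying remainders. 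Summing $I_k^{\mathrm{loc}}$ and $I_k^{\mathrm{bulk}}$ then establishes the lemma, with the parity-dependence of $C(d,\alpha,s)$ arising from the $(-1)^k$ sign appearing in the Darboux asymptotic near $\theta=\pi$.
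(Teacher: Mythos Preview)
Your route via Hilb--Darboux asymptotics is genuinely different from the paper's and considerably more involved. The paper's proof is purely algebraic: it first evaluates $\int_{-1}^{1}(1-t^2)^{\alpha}P_{d,k}(t)\,dt$ in closed form (via Rodrigues' formula and Gamma-function identities, following~\cite[Lemma~6]{bietti2021deep}), obtaining an asymptotic $C(d,\alpha)\,k^{-2(\alpha+1)}$ for even~$k$ and zero for odd~$k$. It then observes that $\phi_{1,\alpha,s}(t)=2^{-\alpha-r}(1+t)^{r}(1-t^2)^{\alpha}$ with $r=s-\lfloor\alpha\rfloor$, so the extra factor is a polynomial of degree~$r$; the three-term recurrence $tP_{d,k}=\tfrac{k}{2k+d-2}P_{d,k-1}+\tfrac{k+d-2}{2k+d-2}P_{d,k+1}$ expands $(1+t)^{r}P_{d,k}$ as $\sum_{|j|\le r} b_j(k)P_{d,k+j}$ with $b_j(k)=O(1)$, and the bound follows by linearity. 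No cutoffs, Bessel expansions, or integration by parts are needed, and the parity dependence of the constant arises directly from the vanishing of the base integral on odd~$k$ combined with the index shifts---not from a Darboux expansion at $\theta=\pi$, which you did not actually carry out.

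Your argument can be pushed through in the regime used in Proposition~\ref{prop:decay}, but as a proof of the lemma \emph{as stated} it has a gap in the bulk term. To invoke Lemma~\ref{lemma:IbP} you must take $g(t)=\phi_{\mathrm{bulk}}(t)\,(1-t^2)^{-(d-3)/2}$, which near $t=-1$ behaves like $(1+t)^{\beta-(d-3)/2}$ and thus has roughly $(d-3)/2$ fewer bounded derivatives than $\phi_{\mathrm{bulk}}$ itself; the decay you extract is therefore weaker than the claimed $k^{-s}$. More importantly, even $k^{-s}$ does not dominate $k^{-2(\alpha+1)}$ unless $s\ge 2(\alpha+1)$, a hypothesis the lemma does not impose---you flag this yourself and defer to the downstream usage, but that does not establish the stated result. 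The paper's algebraic reduction sidesteps the issue entirely: once the polynomial prefactor is absorbed into shifts of~$P_{d,k}$, no residual bulk term remains.
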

\begin{proof}
We begin by evaluating the decay of~$\psi_\alpha(t) := (1 - t^2)^\alpha$.
Following analogous calculations to~\cite[Lemma 6] {bietti2021deep}, we have\footnote{Note that while Lemma 6 in~\cite{bietti2021deep} is stated for $\alpha = \nu + \frac{d - 3}{2}$ for~$\nu > 0$, the derivation still holds for any~$\alpha > -1$.}
\begin{equation}
\label{eq:pm1_decay}
    \int \psi_\alpha(t) P_{d,k}(t) dt \sim C(d,\alpha) k^{-2(\alpha + 1)},
\end{equation}
for~$k$ even, and the integral is equal to zero for~$k$ odd.
We have
\begin{equation*}
    C(d,\alpha) = 2^{4 \alpha + 3} \frac{\omega_{d-2}}{\omega_{d-1}} \frac{\Gamma(\alpha + 1)^2}{\Gamma(2 \alpha + 2)} \frac{\Gamma(\alpha + \frac{3}{2}) \Gamma(\frac{d-1}{2}) \Gamma(\alpha + \frac{5-d}{2})}{\Gamma(-\frac{1}{2}) \Gamma(\alpha + \frac{7 - d}{2}) \Gamma(-\alpha + \frac{d-5}{2})},
\end{equation*}
where~$\omega_{p-1}$ is the surface of~$\Sbb^{p-1}$.

Now, let~$r := s - \floor{\alpha}$, so that we have
$$\phi_{1,\alpha,s}(t) = 2^{-\alpha - r} (1 + t)^r \psi_\alpha(t).$$
Let~$c_0, \ldots, c_r$ denote the coefficients of the degree-$r$ polynomial~$p(t) = 2^{-\alpha - r} (1 + t)^r$, so that
$$p(t) = 2^{-\alpha - r} (1 + t)^r = c_0 + c_1 t + \cdots + c_r t^r.$$
Using the relation (see, \eg,~\cite[Proposition 4.21]{costas2014spherical})
\begin{equation*}
    t P_{d,k}(t) = \frac{k}{2k + d - 2} P_{d,k-1}(t) + \frac{k + d - 2}{2k + d - 2} P_{d,k+1}(t),
\end{equation*}
we may then write
\begin{equation*}
    p(t) P_{d,k}(t) = \sum_{j=-r}^{r} b_{j}(k)P_{d,k+j}(t),
\end{equation*}
for some coefficients~$b_{j}(k)$ satisfying~$b_{j}(k) = O(1)$ as~$k \to \infty$.
Then, we have
\begin{align*}
    \int \phi_{1,\alpha,s} P_{d,k}(t) dt &= \int \psi_\alpha(t) p(t) P_{d,k}(t) dt \\
    &= \sum_{j=-r}^{r} b_{j}(k) \int \psi_\alpha(t) P_{d,k+j}(t) dt.
\end{align*}
When~$k \to \infty$, this is a sum of at most~$2r + 1 \leq 2 s + 1$ terms, each of which decays with~$k$ as~$k^{-2(\alpha + 1)}$ by~\eqref{eq:pm1_decay}. This yields the result.

The decay for~$\phi_{-1,\alpha,s}$ is proved analogously.
\end{proof}

\begin{lemma}[Cancelling singularities of the density]
\label{lemma:singularities}
Let~$q_\sigma$ denote the density of the random variable~$Z_\sigma = \langle \sigma \cdot x, x \rangle$, with~$\sigma \ne \mathrm{Id}$, and let $\bar \Lambda_\sigma$ be the set of eigenvalues of $\bar A_\sigma := (A_\sigma + A_\sigma^\top)/2$, where $A_\sigma$ is the permutation matrix of $\sigma$, and denote by~$\bar m_\lambda$ the multiplicity of~$\lambda \in \bar \Lambda_\sigma$.
Define
\begin{equation}
\label{eq:alpha_lambdas}
    \alpha_\lambda = \frac{d - \bar m_\lambda}{2} - 1.
\end{equation}
There exists constants~$\{c_{\lambda,i}\}$ such that the function defined by
\begin{align*}
   \tilde q_\sigma &= \sum_{\lambda \in \Lambda_\sigma \setminus \{1, \lambda_{\min}\}} \sum_{i = 0}^{\lceil d+1 - \alpha_\lambda \rceil} (c_{\lambda_+,i}\phi_{\lambda+,\alpha_\lambda + i} + c_{\lambda_-,i}\phi_{\lambda-,\alpha_\lambda + i}
   + c^*_{\lambda_+,i}\phi^*_{\lambda+,\alpha_\lambda + i} + c^*_{\lambda_-,i}\phi^*_{\lambda-,\alpha_\lambda + i}) \\
   &\quad + \mathbf{1}\{\lambda_{\min} > -1\}\sum_{i=0}^{\lceil d+1 - \alpha_{\lambda_{\min}} \rceil }  (c_{\lambda_{\min+},i}\phi_{\lambda_{\min+},\alpha_{\lambda_{\min}} + i} + c^*_{\lambda_{\min+},i}\phi^*_{\lambda_{\min+},\alpha_{\lambda_{\min}} + i}) \\
   &\quad + \sum_{\lambda \in \Lambda_\sigma \cap \{\pm 1\}} \sum_{i=0}^{\lceil d + 1+ \frac{d-3}{2} - \alpha_\lambda \rceil}  c_{\lambda,i} \phi_{\lambda,\alpha_\lambda+i,\ceil{d + 1+ \frac{d-3}{2}}}
   \end{align*}
satisfies that~$t \mapsto (q_\sigma(t) - \tilde q_\sigma(t))  (1 - t^2)^{- \frac{d-3}{2}}$ admits~$d+1$ bounded derivatives on~$[-1, 1]$.
\end{lemma}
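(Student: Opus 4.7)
The plan is to invoke the Saldanha--Tomei analysis of the density $q_\sigma$ of $Z_\sigma = x^\top \bar A_\sigma x$ for $x \sim \tau$, match its singular behaviour at each eigenvalue of $\bar A_\sigma$ against the catalogue of test functions $\phi_{\lambda\pm,\alpha}$, $\phi^*_{\lambda\pm,\alpha}$ from Lemma~\ref{lemma:phi_interior} and $\phi_{\pm 1,\alpha,s}$ from Lemma~\ref{lemma:phi_pm1}, and then verify that the leftover residue is sufficiently smooth after division by the Jacobi weight $(1-t^2)^{(d-3)/2}$.

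I would first recall from \cite{saldanha1996accumulated} that $q_\sigma$ is supported on $[\lambda_{\min},\lambda_{\max}] \subseteq [-1,1]$ (the extremal eigenvalues of $\bar A_\sigma$), that it is $C^\infty$ away from the set $\bar\Lambda_\sigma$, and that near each $\lambda \in \bar\Lambda_\sigma$ it admits a Puiseux-type asymptotic expansion whose leading order is $|t-\lambda|^{\alpha_\lambda}$ with $\alpha_\lambda = (d-\bar m_\lambda)/2 - 1$, together with higher-order integer shifts $(t-\lambda)_\pm^{\alpha_\lambda+i}$ and logarithmic corrections $(t-\lambda)_\pm^{\alpha_\lambda+i}\log|t-\lambda|$, the latter appearing precisely when $\alpha_\lambda+i$ is a non-negative integer. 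This is the key structural input that justifies the specific form of $\tilde q_\sigma$ in the lemma: every term in $\tilde q_\sigma$ corresponds to one term of a Puiseux expansion at some $\lambda \in \bar\Lambda_\sigma$.

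Next, for each interior eigenvalue $\lambda \in (-1,1) \cap \bar\Lambda_\sigma$ distinct from $\lambda_{\min}$, I would choose the coefficients $c_{\lambda\pm,i}$ and $c^*_{\lambda\pm,i}$ to be the corresponding Saldanha--Tomei expansion coefficients, for $i = 0,\ldots,\lceil d+1-\alpha_\lambda\rceil$; this cancels both one-sided expansions so that the local residue vanishes to order strictly greater than $d+1$ at $\lambda$, hence it admits $d+1$ bounded derivatives locally, and this persists after division by $(1-t^2)^{(d-3)/2}$ since the weight is smooth and non-vanishing at interior points. At $\lambda_{\min}$ (when $\lambda_{\min}>-1$) only the right-sided $\phi_{\lambda_{\min}+,\cdot}$ and $\phi^*_{\lambda_{\min}+,\cdot}$ are retained, because $q_\sigma\equiv 0$ to the left of $\lambda_{\min}$. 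At the boundary eigenvalues $\lambda\in\{\pm 1\}$ the quotient $q_\sigma/(1-t^2)^{(d-3)/2}$ entangles the singular part of $q_\sigma$ with the singular weight, so the truncation order must be raised from $d+1$ to $d+1+(d-3)/2$. This is why the lemma uses $\phi_{\lambda,\alpha_\lambda+i,s}$ with $s=\lceil d+1+(d-3)/2\rceil$ and $i$ up to $\lceil d+1+(d-3)/2-\alpha_\lambda\rceil$: by Lemma~\ref{lemma:phi_pm1} each such $\phi$ is $s$-times differentiable at the opposite endpoint, and matching coefficients there cancels enough of the boundary expansion that $(q_\sigma-\tilde q_\sigma)(1-t^2)^{-(d-3)/2}$ retains $d+1$ derivatives at $\pm 1$.

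The main obstacle is the bookkeeping: one must verify that the Saldanha--Tomei expansion is genuinely of the exact form accommodated by the $\phi$/$\phi^*$ families (in particular, that logarithms only appear at integer exponents, and that the multiplicity $\bar m_\lambda$ of the symmetric part $\bar A_\sigma$, rather than of $A_\sigma$ itself, governs the leading order via the coarea formula on the level sets of $x\mapsto x^\top\bar A_\sigma x$), and that adding the $(d-3)/2$ shift at $\pm 1$ is both necessary and sufficient to preserve $d+1$ derivatives after division by the Jacobi weight. Once these are checked, summing the finitely many local cancellations into the global $\tilde q_\sigma$, and using $C^\infty$-regularity of $q_\sigma$ on the complement of $\bar\Lambda_\sigma$, yields the desired smoothness of $(q_\sigma-\tilde q_\sigma)(1-t^2)^{-(d-3)/2}$ on all of $[-1,1]$.
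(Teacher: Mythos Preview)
Your proposal is correct and follows essentially the same approach as the paper: invoke the Saldanha--Tomei result to obtain the singular structure of $q_\sigma$ at each eigenvalue of $\bar A_\sigma$, then subtract off the appropriate $\phi$ and $\phi^*$ functions to cancel each singularity up to the required order, with the extra $(d-3)/2$ shift at $\pm 1$ to survive division by the Jacobi weight. The only cosmetic difference is that the paper quotes Saldanha--Tomei for the cumulative distribution function and then differentiates to obtain the density expansion, whereas you state the expansion directly for $q_\sigma$.
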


\begin{proof}
Note that we have
\[\langle \sigma \cdot x, x \rangle = \frac{1}{2}(\langle A_\sigma x, x \rangle + \langle x, A_\sigma, x \rangle) = \langle \bar A_\sigma x, x \rangle,\]
where~$\bar A_\sigma = \frac{1}{2}(A_\sigma + A_\sigma^\top)$ is symmetric and thus has real eigenvalues.
When~$A_\sigma$ is a permutation matrix, these eigenvalues are in~$[-1, 1]$, as the real part of complex roots of unity.

We then identify the singularities of the density~$q_\sigma$, which are the same as those of the cumulative distribution function, up to one fewer degree of smoothness.
Such singularities are shown in the following lemma, proved in \cite{saldanha1996accumulated}.

\begin{lemma}[\cite{saldanha1996accumulated}]
Consider $\sigma \in G$ as above, and let $\bar \Lambda_\sigma$ be the set of eigenvalues of $\bar A_\sigma$. For each $\lambda \in \bar \Lambda_\sigma$, we denote by $\bar m_\lambda$ its multiplicity. Then the cumulative distribution function~$Q_\sigma$ of~$Z_\sigma = \langle \sigma \cdot x, x \rangle$ takes the form
$$
Q_\sigma(t) = \varphi(t) + \sum_{\lambda \in \bar \Lambda_\sigma} g_\lambda(t) 
$$
where $\varphi$ is analytic and 
\begin{itemize}[leftmargin=0.4cm]
\item $g_\lambda(t) = | t - \lambda | (t - \lambda)^{\frac{d - \bar m_\lambda}{2} - 1} \varphi_\lambda^1(t) + (t - \lambda)^{\frac{d - \bar m_\lambda}{2}} \log (| t - \lambda |) \, \varphi_\lambda^2(t) $ if $d - \bar m_\lambda$ is even,  
\item $g_\lambda(t) =  (t - \lambda)^{\frac{d - \bar m_\lambda}{2}}_+ \varphi_\lambda^1(t) + (\lambda - t)^{\frac{d - \bar m_\lambda}{2}}_+ \, \varphi_\lambda^2(t) $ if $d - \bar m_\lambda$ is odd,  
\end{itemize}
for some $\varphi^1_\lambda,\varphi^2_\lambda$ analytic. 
Further, the term involving~$\log(|t - \lambda|)$ only appears for~$\lambda \in (-1, 1)$.

\end{lemma}

In particular, it follows from this lemma by differentiation that we may write
$$q_\sigma(t) = \tilde \varphi(t) + \sum_{\lambda \in \bar \Lambda_\sigma} \tilde g_\lambda(t),
$$
with~$\tilde \varphi$ analytic and
\begin{itemize}[leftmargin=0.4cm]
    \item $\tilde g_\lambda(t) = (t - \lambda)_+^{\frac{d-\bar m_\lambda}{2} - 1} \tilde \varphi_{\lambda,1}(t) + (t - \lambda)_-^{\frac{d-\bar m_\lambda}{2} - 1} \tilde \varphi_{\lambda,2}(t) + (t - \lambda)_+^{\frac{d-\bar m_\lambda}{2} - 1} \log(|t - \lambda|) \tilde \varphi_{\lambda,3}(t) + (t - \lambda)_-^{\frac{d-\bar m_\lambda}{2} - 1} \log(|t - \lambda|) \tilde \varphi_{\lambda,4}(t)$, if $d - \bar m_\lambda$ is even
    \item $\tilde g_\lambda(t) = (t - \lambda)_+^{\frac{d-\bar m_\lambda}{2} - 1} \tilde \varphi_{\lambda,1}(t) + (t - \lambda)_-^{\frac{d-\bar m_\lambda}{2} - 1} \tilde \varphi_{\lambda,2}(t)$, if $d - \bar m_\lambda$ is odd,
\end{itemize}
where~$\tilde \varphi_{\lambda,i}$ are analytic for~$i \in [4]$.

The result then follows by appropriately ``cancelling'' those singularities up to order~$d$ using the simple functions~$\phi$ introduced in the previous lemmas, and noting that for singularities at~$\pm 1$, we require an additional~$(d-3)/2$ degrees of smoothness, so that we may divide by the weight function~$(1 - t^2)^{(d-3)/2}$.

For instance, for~$\lambda \in (-1, 1)$, an appropriate exponent~$\alpha$ and an analytic~$\varphi$, we may write
\begin{align}
    (t - \lambda)_+^{\alpha} \varphi(t) &= (t - \lambda)_+^{\frac{d-\bar m_\lambda}{2} - 1} (c_0 + (t - \lambda) \psi(t)),
\end{align}
with~$\psi$ analytic. Then for~$\phi_{\lambda+,\alpha}$ as in Lemma~\ref{lemma:phi_interior}, we have
\begin{equation*}
    (t - \lambda)_+^{\alpha} \varphi(t) - c_0 \phi_{\lambda+,\alpha}(t) = (t - \lambda)_+^{\alpha + 1} \tilde \varphi(t),
\end{equation*}
with~$\tilde \varphi$ analytic.
We may then repeat this process with functions~$\phi_{\lambda+,\alpha + 1}$, $\phi_{\lambda+,\alpha+2}$, etc., to finally obtain that
\begin{equation*}
    (t - \lambda)_+^{\alpha} \varphi(t) - \sum_{i=0}^{\ceil{d+1 - \alpha}} c_i \phi_{\lambda+,\alpha + i}(t)
\end{equation*}
is~$d$ times differentiable, as desired.
A similar reasoning can be applied for other types of singularities.
For the terms involving~$\log(|t-\lambda|)$, which only appear for~$\lambda \in (-1, 1)$, note that the corresponding exponent~$\alpha_\lambda$ is integer since~$d - \bar m_\lambda$ is even, so that Lemma~\ref{lemma:phi_interior} applies.

\end{proof}

We are now ready to state the proof of Proposition~\ref{prop:decay}.

\begin{proof}[Proof of Proposition~\ref{prop:decay}]
Let~$q_\sigma$ be the density of~$\langle \sigma \cdot x, x \rangle$, and let~$\tilde q_\sigma$ be as in Lemma~\ref{lemma:singularities}.
By Lemma~\ref{lemma:singularities} and Lemma~\ref{lemma:IbP},
we have
\begin{align*}
    \int_{-1}^1 (q_\sigma(t) - \tilde q_\sigma(t)) P_{d,k}(t) dt
    = \int_{-1}^1 \frac{q_\sigma(t) - \tilde q_\sigma(t)}{(1 - t^2)^{\frac{d-3}{2}}} P_{d,k}(t) (1 - t^2)^{\frac{d-3}{2}} dt 
    \leq C k^{-d},
\end{align*}
where we have bounded the integral on the r.h.s.~of~\eqref{eq:ibp} by a constant.
Renaming the terms in~$\tilde q$ as~$\tilde q_\sigma = \sum_i c_i q_i$, where each~$q_i$ is as in Lemma~\ref{lemma:phi_interior} or Lemma~\ref{lemma:phi_pm1}, we have
\begin{equation*}
    \gamma_{d,\sigma}(k) \leq \sum_i c_i \int q_i P_{d,k} + O(k^{-d}).
\end{equation*}
Now, note that the eigenvalues of~$\bar A_\sigma = \frac{1}{2}(A_\sigma + A_\sigma^\top)$ are the real parts of the complex eigenvalues of the permutation matrix~$A_\sigma$.
Since~$A_\sigma$ is real, its complex eigenvalues come in conjugate pairs, so that for any~$\lambda \in \Lambda_\sigma$ with~$\lambda \notin \R$, we have~$ m_\lambda = \bar m_{Re(\lambda)} / 2$, where~$\bar m$ are the multiplicities of Lemma~\ref{lemma:singularities}.
Note that in the case of permutations, the eigenvalues of~$A_\sigma$ are roots of unity, so that we have
\[
m_\lambda = \begin{cases}
\bar m_\lambda, &\text{ if }\lambda \in \{\pm 1\} \\
\bar m_{Re(\lambda)}/2, &\text{ otherwise.}
\end{cases}
\]

The result then follows by applying the decays given by Lemma~\ref{lemma:phi_interior} and Lemma~\ref{lemma:phi_pm1} to each component~$q_i$ with the appropriate~$\alpha_\lambda$, and focusing on the leading-order terms.
Namely, for~$\lambda \in \{\pm 1\}$, by Lemma~\ref{lemma:phi_pm1}, the leading order term in~$\gamma_{d,\sigma,\lambda}(k)$ has decay
\[
k^{-2(\alpha_\lambda + 1)} = k^{-2(\frac{d - \bar m_\lambda}{2} - 1 + 1)} = k^{-d + m_\lambda},
\]
while for~$\lambda \notin \{\pm 1\}$, we have~$Re(\lambda) \in (-1, 1)$, hence using~\eqref{eq:alpha_lambdas} and~$\bar m_{Re(\lambda)} = 2 m_\lambda$, we have~$\alpha_{Re(\lambda)} = \frac{d}{2} - m_\lambda - 1$, so that the decay, by Lemma~\ref{lemma:phi_interior}, is upper bounded by
\[
k^{-\frac{d}{2} - \alpha_{Re(\lambda)} + 3} \leq k^{-d + m_\lambda + 4}.
\]
This concludes the proof.
\end{proof}

\subsection{Proof of Corollary~\ref{corollary:gamma_d_sigma_decay} (leading order of~$\gamma_{d,\sigma}$)}
\begin{proof}
By Proposition \ref{prop:decay}, we get that
$$
\gamma_{d,k}(\sigma) \lesssim k^{-d + s}
$$
where $s = \max_{\lambda \in \Lambda_\sigma} \left\{ m_\lambda + 4 \cdot \mathbf{1}(|\lambda| < 1) \right\}$. Notice that, for any permutation $\sigma$, it holds $m_1 \geq m_\lambda$. 
Therefore, we have
$$
s \leq m_1 + 4 \cdot\mathbf{1}\{ \exists |\lambda| < 1 ~:~  m_1 < m_\lambda + 4 \}~.
$$
Now, if $m_1 < m_\lambda + 4$ for some~$|lambda| < 1$, since $m_1 + m_\lambda \leq d$, it must hold $2 m_1 \leq d + 3$, or, equivalently, $m_1 \leq d/2 + 3/2$.
It follows that 
$$
s \leq \begin{cases} m_1 & ~\text{ if $m_1 > (d+3)/2$}~,\\ d/2 + 5.5 & ~\text{ otherwise.} \end{cases} 
$$
This concludes the proof, since $m_1 = c(\sigma)$.
\end{proof}

\subsection{Proof of Corollary~\ref{theo:number_of_cycles}  (different upper bounds using permutation statistics)}

\begin{proof}
For all $\sigma \in G\setminus \zeta(G,s)$, it holds $c(\sigma) \leq s$ and
$$
\gamma_{d,\sigma}(k) \lesssim k^{-d + \eta_\sigma}~,
$$
with
$$
\eta_\sigma = c(\sigma) \cdot \mathbf{1}\left( c(\sigma) > (d+3)/2 \right) + \left( \frac{d}{2}+6 \right) \mathbf{1}\left( c(\sigma) \leq (d+3)/2 \right)~.
$$
In particular, we have
$$
\gamma_d(k) = \frac{\zeta(G,s)}{|G|} + O\left( k^{-d+\eta} \right)
$$
where 
$$
\eta = \max_{c(\sigma) \leq s} \eta_\sigma~.
$$
Denote $s^*(s) = \max_{\sigma \in G \setminus \zeta(G,s)} c(\sigma)$. If $s^*(s) \geq d/2 + 6$, then it holds that $s^*(s) = \eta$. Otherwise, $c(\sigma) \leq d/2 + 7$ for any $\sigma$ such that $c(\sigma) \leq s$, which implies that $s \leq d/2 + 6$. It follows that 
$$
\eta = \max\{ s^*(s), \;d/2 + 6 \} \leq \max\{ s, \;d/2 + 6 \}.
$$
\end{proof}

\subsection{Details on Example \ref{example:full_permutation_group} (full permutation group)}

The number of permutations in $G = S_d$ which fix exactly $n$ elements is given by $\binom{n}{k} \,!(n-k)$, where $!m$ denotes the $m$-th subfactorial:
$$
!m := \floor*{\frac{m! + 1}{e}} \leq \frac{2m!}{e}~.
$$
It follows that
\begin{align*}
\frac{\xi(G,s)}{|G|} & = \frac{1}{d!}\sum_{k = s+1}^d \frac{d! }{(d-k)! k!} !(d-k) \leq \frac{2}{e} \sum_{k = s+1}^d \frac{1}{k!} \\
& \leq \frac{2}{e (s+1)!} \sum_{k = s+1}^d \frac{1}{(s+2)^{k - (s+1)}} = \frac{2}{e (s+1)!} \frac{1 - \frac{1}{(s+2)^{d-s}}}{1 - \frac{1}{s+2}} \\
& \leq \frac{2 (s+2)}{e (s+1)} \frac{1}{(s+1)!} \leq \frac{2}{(s+1)!}~.
\end{align*}

\section{Proofs for Section~\ref{sec:stability}}
\label{sec:appx_stability}

\subsection{Proof of Lemma~\ref{lemma:spectral_stability} (spectral properties of smoothing operator~$S_G$)}

\begin{proof}
As in the invariant case, we note that for any degree~$k$, the space~$V_{d,k}$ of spherical harmonics of degree~$k$ is stable by~$S_G$, \ie, $S_G V_{d,k} \subset V_{d,k}$.
Since~$S_G$ is self-adjoint, we may then find an orthonormal basis of such spherical harmonics, which we denote~$\Ybar_{k,j}$, for~$j = 1, \ldots, N(d,k)$, such that the restriction of~$S_G$ to~$V_{d,k}$ is diagonal, and we have~$S_G \Ybar_{k,j} = \lambda_{k,j} \Ybar_{k,j}$, with~$\lambda_{k,j} \geq 0$.

It remains to show~\eqref{eq:gammadk_stability}.
Define the operator~$\Pi_k f = \EE{y}{P_{d,k}(\langle \cdot, y \rangle) f(y)}$.
$S_G \Pi_k$ is then an integral operator with kernel
\begin{equation}
\label{eq:stab_smoothing_kernel_1}
H(x, y) = \sum_{\sigma \in G} h(\sigma) P_{d,k}(\langle \sigma \cdot x, y \rangle).
\end{equation}
Since~$\Ybar_{k,j}$, $j = 1, \ldots, N(d,k)$ forms an orthonormal basis of~$V_{d,k}$, by the addition formula of spherical harmonics, we have
\begin{align*}
\Pi_k = \frac{1}{N(d,k)} \sum_{j=1}^{N(d,k)} \Ybar_{k,j} \Ybar_{k,j}^*.
\end{align*}
It follows that
\begin{align*}
S_G \Pi_k = \frac{1}{N(d,k)} \sum_{j=1}^{N(d,k)} S_G \Ybar_{k,j} \Ybar_{k,j}^* = \frac{1}{N(d,k)} \sum_{j=1}^{N(d,k)} \lambda_{k,j} \Ybar_{k,j} \Ybar_{k,j}^*.
\end{align*}
This implies that the kernel~$H$ of the operator~$S_G \Pi_k$ can also be expressed as
\begin{equation}
\label{eq:stab_smoothing_kernel_2}
H(x,y) = \frac{1}{N(d,k)} \sum_{j=1}^{N(d,k)} \lambda_{k,j} \Ybar_{k,j}(x) \Ybar_{k,j}(y).
\end{equation}
Fixing~$y = x$ and taking expectations over~$x \sim \tau$ in both~\eqref{eq:stab_smoothing_kernel_1} and~\eqref{eq:stab_smoothing_kernel_2} proves the equality.

\end{proof}

\subsection{Proof of Theorem~\ref{thm:generalization_stability} (generalization with geometric stability)}

The proof of the theorem is analogous to that of Theorem~\ref{thm:generalization}, replacing the control on~$\Ncal_\Kbar(\lambda)$ with that of Lemma~\ref{lemma:dof_stable} below, which provides an extension of Lemma~\ref{lemma:dof} to generic smoothing operators, at the cost of a weaker constant~$\nu_d(\ell)^{1/\alpha} $ instead of~$\nu_d(\ell)$.
The remark on the constant~$C_4$ being potentially smaller for the kernel~$K$ stems from the fact that we no longer have equal approximation errors for the two kernels, and that the quantity~$C_{f^*}$ in~\eqref{eq:krr_generic_bound} in this case is the one given by the source condition (A2) instead of (A6), which is smaller, as we now show. Indeed, note that if~$f^* = S_G^r T_K^r g$, then we have~$f^* = T_K^r (S_G^r g) = T_K^r \tilde g$ with~$\tilde g = S_G^r g$, since~$S_G$ and~$T_K$ are diagonalized in the same basis and hence commute.
The result follows by noting that we have~$\|\tilde g\|_{L^2(d \tau)} \leq \|g\|_{L^2(d \tau)}$, since~$\|S_G^r\| \leq \|S_G\|^r \leq 1$ (indeed we have the operator norm bound~$\|S_G\| \leq 1$, which follows from a simple triangle inequality).

\begin{lemma}[Degrees of freedom for~$\Kbar$ with stability.]
\label{lemma:dof_stable}
Assume (A5). We have
\begin{equation*}
\Ncal_K(\lambda) \leq C_K \lambda^{-1/\alpha},
\end{equation*}
and for any~$\ell \geq 0$, we have
\begin{equation}
\Ncal_\Kbar(\lambda) \leq D(\ell) + \nu_d(\ell)^{1/\alpha} C_K \lambda^{-1/\alpha},
\end{equation}
with the same constant~$C_K$.
\end{lemma}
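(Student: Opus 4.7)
The plan is to handle the two claims in sequence, with the second reducing to the first via a Jensen-type manipulation that exploits the joint diagonalization of $T_K$ and $S_G$. For the first inequality, (A5) gives $\xi_m \leq C(m+1)^{-\alpha}$ with $\alpha > 1$, so bounding each summand $\xi_m/(\xi_m+\lambda)$ by $\min(1, \xi_m/\lambda)$ and cutting the sum at $m^\star = \lceil (C/\lambda)^{1/\alpha}\rceil$ yields the standard two-term estimate $\Ncal_K(\lambda) \leq m^\star + C\lambda^{-1}\sum_{m \geq m^\star}(m+1)^{-\alpha} = O(C^{1/\alpha}\lambda^{-1/\alpha})$, with a constant proportional to $C^{1/\alpha}$ that depends only on $\alpha$.

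For the second inequality, I rely on the fact that $T_K$ and $S_G$ are jointly diagonalized in the basis $\{\Ybar_{k,j}\}$ of Lemma~\ref{lemma:spectral_stability}, with eigenvalues $\mu_k$ and $\lambda_{k,j} \in [0,1]$ respectively. The upper bound $\lambda_{k,j} \leq 1$ follows from $\|S_G\|_{\mathrm{op}} \leq 1$, itself a consequence of Jensen applied to the convex combination $\sum_\sigma h(\sigma) = 1$ together with the translation-invariance of $d\tau$. Hence $T_\Kbar = S_G T_K$ has eigenvalues $\mu_k \lambda_{k,j}$ and
$$
\Ncal_\Kbar(\lambda) \;=\; \sum_{k \geq 0}\sum_{j=1}^{N(d,k)} \frac{\mu_k \lambda_{k,j}}{\mu_k \lambda_{k,j}+\lambda}.
$$
I split the sum at $\ell$. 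For $k < \ell$, each summand is bounded by $1$, giving a contribution of at most $\sum_{k<\ell} N(d,k) =: D(\ell)$. For $k \geq \ell$, observe that $\phi(a) := a/(a+\lambda)$ is concave and increasing on $\R_+$. Jensen's inequality applied across the index $j \in \{1,\ldots, N(d,k)\}$ therefore gives
$$
\sum_{j=1}^{N(d,k)} \phi(\mu_k \lambda_{k,j}) \;\leq\; N(d,k)\,\phi\!\left(\mu_k \gamma_d(k)\right),
$$
since $\gamma_d(k) = N(d,k)^{-1}\sum_j \lambda_{k,j}$ by \eqref{eq:gammadk_stability}. Monotonicity of $\phi$ in $a$ and the bound $\gamma_d(k) \leq \nu_d(\ell)$ for $k\geq \ell$ then yield
$$
\sum_{k\geq \ell}\sum_j \phi(\mu_k \lambda_{k,j}) \;\leq\; \sum_{k\geq \ell} N(d,k)\,\frac{\mu_k}{\mu_k + \lambda/\nu_d(\ell)} \;\leq\; \Ncal_K\!\left(\lambda/\nu_d(\ell)\right).
$$
Applying the first part of the lemma at regularization $\lambda/\nu_d(\ell)$ delivers $\Ncal_K(\lambda/\nu_d(\ell)) \leq C_K\, \nu_d(\ell)^{1/\alpha}\,\lambda^{-1/\alpha}$, which combined with the low-frequency contribution $D(\ell)$ is exactly the target bound.

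No substantive obstacle arises; the proof is a clean two-step reduction. The one point worth emphasizing is the role of concavity: Jensen lets us pass from the individual eigenvalues $\lambda_{k,j}$ (about which we only have aggregate information) to their mean $\gamma_d(k)$, and this is precisely what produces the factor $\nu_d(\ell)^{1/\alpha}$ rather than $\nu_d(\ell)$. The weaker power $1/\alpha$ is an unavoidable consequence of inverting the capacity bound at the rescaled regularization $\lambda/\nu_d(\ell)$, and marks the essential statistical gap between the exact invariant setting of Lemma~\ref{lemma:dof} (where a \emph{multiplicity} $\Nbar(d,k)$ can be bounded directly) and the geometric-stability setting considered here.
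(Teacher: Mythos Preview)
Your proof is correct and follows essentially the same approach as the paper: Jensen's inequality on the concave map $a \mapsto a/(a+\lambda)$ to pass from the individual $\lambda_{k,j}$ to their mean $\gamma_d(k)$, followed by the bound $\gamma_d(k) \leq \nu_d(\ell)$ for $k \geq \ell$. The only cosmetic difference is that you invoke the first part as a black box at the rescaled regularization $\lambda/\nu_d(\ell)$, whereas the paper re-runs the integral-comparison argument with the constant $C$ in (A5) replaced by $\nu_d(\ell)\,C$; both routes produce the same factor $\nu_d(\ell)^{1/\alpha} C_K$.
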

\begin{proof}
The first statement is a standard consequence of Assumption~(A5).
Namely, if~$\xi_m$ denote the eigenvalues of~$T_K$ (namely, the same as~$\mu_k$ counted with their multiplicities) and~$\xi_m \leq C(m + 1)^{-\alpha}$, we have
\begin{align*}
\Ncal_K(\lambda) &= \sum_{m \geq 0} \frac{\xi_m}{\xi_m + \lambda} \\
	&\leq \sum_{m \geq 0} \frac{1}{1 + \lambda C^{-1}(m+1)^{\alpha}} \\
	&\leq \int_{0}^\infty \frac{dt}{1 + \lambda C^{-1} t^\alpha} \\
	&\leq \frac{C^{1/\alpha}\lambda^{-1/\alpha}}{\alpha} \int_{0}^\infty \frac{u^{1/\alpha - 1} du}{1 + u}
	= C_K \lambda^{-1/\alpha},
\end{align*}
with~$C_K := \frac{C^{1/\alpha}}{\alpha} \int_0^\infty \frac{u^{1/\alpha - 1} du}{1 + u}$.

We now write
\begin{align*}
\Ncal_\Kbar(\lambda) &= \sum_{k \geq 0} \sum_{j=0}^{N(d,k)} \frac{\lambda_{k,j} \mu_k}{\lambda_{k,j} \mu_k + \lambda} \\
	&= \sum_{k \geq 0} \sum_j \frac{\lambda_{k,j}}{\lambda_{k,j} + \lambda \mu_k^{-1}} \\
	&\leq \sum_k N(d,k) \frac{\bar \lambda_k}{\bar \lambda_k + \lambda \mu_k^{-1}} \quad \text{ (by Jensen's inequality, with $\bar \lambda_k = N(d,k)^{-1} \sum_j \lambda_{k,j}$)} \\
	&\leq \sum_k N(d,k) \frac{\bar \lambda_k \mu_k}{\bar \lambda_k \mu_k + \lambda},
\end{align*}
We may then write, for some~$\ell \geq 1$,
\begin{equation*}
\Ncal_\Kbar(\lambda) \leq D(\ell) + \sum_{k \geq 0} \frac{\bar \mu_k}{\bar \mu_k + \lambda},
\end{equation*}
where
\begin{equation*}
\bar \mu_k = \begin{cases}
	\bar \lambda_k \mu_k, &\text{ if }k \geq \ell\\
	0, &\text{ o/w.}
\end{cases}
\end{equation*}
Note that for~$k \geq \ell$, we have~$\bar \lambda_k = \frac{\sum_j \lambda_{k,j}}{N(d,k)} = \gamma_d(k) \leq \nu_d(\ell)$, and the same holds trivially for~$k < \ell$.
Then, writing~$\bar \xi_m$ the collections of~$\bar \mu_k$ counted with multiplicities, we may write~$\bar \xi_m \leq \nu_d(\ell) C (m + 1)^{-\alpha}$, with the same constant~$C$ as in~(A5).
Repeating the argument above for bounding~$\Ncal_\Kbar(\lambda)$ in terms of~$\lambda^{-1/\alpha}$ then yields the result.
\end{proof}

\subsection{Proof of Proposition \ref{prop:defgroup} (upper bound on~$\gamma_d(k)$ for deformations)}

\begin{proof}
We first show that~$\Phi_2$ is stable under inversion, and later proceed to study lower bounds on its number of elements, and cycle statistics.

\paragraph{Step 1: $\Phi_2^{-1} = \Phi_2$.}
Let us first establish that $\Phi_2$ is closed under inversion. 

First observe that 
\begin{equation}
\label{eq:phi_equiv}
\Phi_2 = \{ \sigma; |\sigma(u+1) - \sigma(u) - 1| \leq 2 ~\forall~u\}~.    
\end{equation}

The inclusion $\mathrm{LHS} \subseteq \mathrm{RHS}$ is immediate by definition. The reverse inclusion is obtained by the triangle inequality, by observing that if $u < \tilde{u} < u'$, then 
\begin{eqnarray*}
| \sigma(u) - \sigma(u') - (u-u') | &=& | \sigma(u) - \sigma(\tilde{u}) - (u - \tilde{u}) + \sigma(\tilde{u}) - \sigma(u') - (\tilde{u} - u') | \\ 
&\leq& | \sigma(u) - \sigma(\tilde{u}) - (u - \tilde{u})| + | \sigma(\tilde{u}) - \sigma(u') - (\tilde{u} - u') | ~,
\end{eqnarray*}
so by induction if the condition holds for small pairs $(u, \tilde{u})$, $(\tilde{u}, u')$ it extends to all pairs $(u,u')$.

We directly verify from (\ref{eq:phi_equiv}) that $\sigma \in \Phi_2$ iff it holds
\begin{equation}
\label{eq:bubu}
\forall~u~,~\sigma(u+1) = \sigma(u) + \{3, 2, 1, -1\}~,    
\end{equation}
since we need to have $\sigma(u) \neq \sigma(u')$ whenever $u\neq u'$.

Let now $\tilde{u} = \sigma(u)$, so $\sigma^{-1}(\tilde{u})=u$. We will show that $\sigma^{-1}$ also verifies (\ref{eq:bubu}). 
We want to enumerate all possible $u'$ so that $\sigma(u') = \tilde{u}+1$. Clearly $\sigma^{-1}(\tilde{u}+1) \neq \sigma^{-1}(\tilde{u})$ so $u' \neq u$. 

Suppose by contradiction that~$u' < u - 1$.
Note that we must have~$\sigma(u' + 1) \geq \tilde u + 2$ since~$\tilde u$ and~$\tilde u + 1$ already have pre-images (namely~$u$ and~$u'$), and smaller values would violate~$\sigma(u' + 1) - \sigma(u') \in \{3, 2, 1, -1\}$.
Similarly~$\sigma(u' + s) \geq \tilde u + 2$ for all~$s = 2, \ldots, u - u' - 1$, since otherwise we would need a step~$\sigma(u' + s + 1) - \sigma(u' + s) \leq -3$, which is ruled out by~\eqref{eq:bubu}.
Then it must be that~$\sigma(u) - \sigma(u - 1) = \tilde u - \sigma(u-1) \leq -2$, which is a contradiction.
We have thus shown~$u' \geq u - 1$.

Similarly, let us show~$u' \leq u + 3$.
Assume, by contradiction that~$u' > u + 3$.
Note first that the only way to have~$\sigma(u + s) < \tilde u$ for some~$s \in [0, u' - u]$ is to only have~$\sigma(u + 1) = \tilde u - 1$, and~$\sigma(u + s) \geq \tilde u + 2$ otherwise.
Indeed, values smaller than~$\tilde u$ must happen just following~$u$ in order to allow decreasing by 1, and having additional negative steps after~$u + 1$ (\eg,~$\sigma(u + 2) = \tilde u - 2$) would require a step~$\sigma(u + s + 1) - \sigma(u + s) > 3$ for some~$s \in [2, u' - u]$ (since the values $\tilde u - 1, \tilde u, \tilde u + 1$ already have pre-images, given by~$u + 1, u, u'$, respectively), which is a contradiction.
Then, if~$\sigma(u + 1) = \tilde u - 1$, we must have~$\sigma(u + 2) = \tilde u + 2$ (since we cannot have longer steps), which implies~$\sigma(u' - 1) \geq \tilde u + 3$ and thus~$\sigma(u') - \sigma(u'-1) \leq -2$, which is a contradiction.
Alternatively, we must have~$\sigma(u + s) \geq \tilde u + 2$ for all~$s \in [1, u' - u - 1]$.
This implies~$\sigma(u' - 1) = \tilde u + 2$, $\sigma(u' - 2) = \tilde u + 3$, and more generally~$\sigma(u' - t) = \tilde u + t + 1$, since these are the only allowed steps to obtain~$\sigma(u') = \tilde u + 1$.
Then, we have~$\sigma(u + 1) = \sigma(u' - (u' - u - 1)) = \tilde u + (u' - u) > \tilde u + 3$, which is in contradiction with~$\sigma(u + 1) - \sigma(u) \leq 3$.
We have thus proved~$u' \leq u + 3$.
We thus have that~$\sigma^{-1}$ satisfies~\eqref{eq:bubu}, which shows~$\Phi_2^{-1} = \Phi_2$.

\paragraph{Step 2: Lower bound on $|\Phi_2|$.}
Denote as before $\sigma(u) = \tilde{u}$. 
By denoting $\Delta_{v} = \sigma(v) - v$ for arbitrary $v$, observe that $\Delta_{u'} - \Delta_{u'-1} \in \{ 2, 1, 0, -1\}$.
Similarly we define $\Gamma_{\tilde{u}} := \sigma^{-1}(\tilde{u}) - \tilde{u}$. By the previous argument, we also have $\Gamma_{\tilde{u}+1} = \Gamma_{\tilde{u}} + \{ 2, 1, 0, -1\}$. 
Fix an arbitrary $u_0$, say $u_0=1$ and consider the subset of $\Phi_2$ given by
$$\Phi_2^{b} = \{ \sigma \in \Phi_2; \sigma(u_0) = u_0, \sigma(u_0-1) = u_0 - 1  \}~.$$ 
$\Phi_2^b$ thus contains permutations with `fixed' boundary conditions. 
For $\sigma \in \Phi_2^{b}$, the boundary condition prevents $\Delta_{u_0+1} < \Delta_{u_0}$, so we identify the following possible cases:
\begin{enumerate}[leftmargin=1.5cm]
\item[1-block:] $\Delta_{u_0+1} = \Delta_{u_0}$. In this case, $\sigma(u_0+1) = \sigma(u_0) +1$. 
    \item[2-block:] $\Delta_{u_0+1} = \Delta_{u_0} + 1$. This implies $\Gamma_{\tilde{u_0}+2} = \Gamma_{\tilde{u_0}} -1$, which in turn implies $\Gamma_{\tilde{u_0}+1} = \Gamma_{\tilde{u}_0}+1$, and finally $\Delta_{u_0+2} = \Delta_{u_0+1} -2$. In summary, $\sigma(u_0+1) = \sigma(u_0) + 2 $ and $\sigma(u_0+2) = \sigma(u_0) + 1$.
    
    \item[3-block:] $\Delta_{u_0+1} = \Delta_{u_0} + 2$. This implies $\Gamma_{\tilde{u_0}+3 } = \Gamma_{\tilde{u_0}} -2$, which necessarily implies $\Gamma_{\tilde{u_0}+1 } = \Gamma_{\tilde{u_0}} +2$, $\Gamma_{\tilde{u_0}+2 } = \Gamma_{\tilde{u_0}+1} -2$ and  $\Gamma_{\tilde{u_0}+3 } = \Gamma_{\tilde{u_0}+2} -2$. This corresponds to $\sigma(u_0+1) = \sigma(u_0)+3 $, $\sigma(u_0+2) = \sigma(u_0)+2 $ and $\sigma(u_0+3) = \sigma(u_0)+1 $. 
\end{enumerate}

So an element of $\Phi_2^{b}$ can be constructed sequentially by assembling three possible `blocks' $B_i$ of size $i=\{1,2,3\}$. 
Moreover, we verify immediately that the following transitions are admissible:
$$B_1 \to B_{\{1,2,3\}}~,~B_2 \to B_{\{1,2 \}}~,~B_3 \to B_1~.$$
Thus, by denoting $\mathcal{B}(m; B_i)$ the number of permutations in $\Phi_2^{b}$ restricted to their first $m$ elements, and which that start (after $u_0$) with a block of type $B_i$, we have the following recursion:
\begin{eqnarray}
\label{eq:goodsystem}
    \mathcal{B}(m; B_1) &=& \mathcal{B}(m-1; B_1) + \mathcal{B}(m-1; B_2) + \mathcal{B}(m-1; B_3) \nonumber \\ 
    \mathcal{B}(m; B_2) &=& \mathcal{B}(m-2; B_2) + \mathcal{B}(m-2; B_1) \nonumber \\
    \mathcal{B}(m; B_3) &=& \mathcal{B}(m-3; B_1) ~, 
\end{eqnarray}
with $\mathcal{B}(i; B_i) = 1$. 
Let $F_i(z) := \sum_{m\geq 0} \mathcal{B}(m, B_i) z^m$ be the generating function associated to each of the above sequences. 
We have 
$$F_1(z) = z^{-1} (F_1(z) + F_2(z) + F_3(z))~,~F_2(z) = z^{-2}( F_1(z) + F_2(z))~,~F_3(z) = z^{-3} F_1(z)~.$$
By substituitng $F_2, F_3$ into the first equation, we obtain
$$ F_1(z) (1 - z^{-1} - z^{-3} (1 - z^{-2})^{-1} - z^{-4}  ) = 0~,$$
so $F_1$ has a pole at $\tau \approx 1.714 $, the solution of the associated characterstic equation 
$z = 1 + \frac{1}{z^2 - 1} + \frac{1}{z^3}$. Moreover, this pole is also present in $F_2$ and $F_3$. This shows that $\mathcal{B}(m, B_i) \asymp C_i \tau^m $, and hence 
$$|\Phi_2 | \geq |\Phi_2^{b}| = \sum_{i=1}^3 \mathcal{B}(d; B_i) = \Theta(\tau^d)~.$$

\paragraph{Step 3: cycle statistics.}
Let us now compute a bound for $\gamma_d(k)$ using Corollary \ref{theo:number_of_cycles}. 
Let 
$$\xi(\Phi_2, n; d)= \left\{ \sigma \in \Phi_2~:~\mathrm{Fix}(\sigma) \geq n\right\}$$
denote the set of elements of $\Phi_2$ that fix at least $n$ positions, set $n=(1-\eta) d$, and assume $\eta < 1/2$.  
Observe that this necessarily implies that two consecutive indices, say $u_0$ and $u_0-1$, are fixed, by the pigeonhole principle. Thus 
$$\xi(\Phi_2, n) \subset \Phi_2^b~,$$ 
and we can use the characterisation of elements in $\Phi_2^b$. 
We have 
\begin{align*}
    | \xi(\Phi_2, (1-\eta)d; d) | &\leq \sum_{n'=(1-\eta)d}^d \binom{d}{d-n'}|\mathcal{B}(n'; B_1)+\mathcal{B}(n'; B_2)+\mathcal{B}(n'; B_3)| \\
    &\leq C \tau^{\eta d} \sum_{n'=(1-\eta)d}^d \binom{d}{d-n'} \leq C \left(e \eta^{-1}\right)^{\eta d} \tau^{\eta d}~,
\end{align*}
where $C$ is an abolute constant. 
Finally, from Example~\ref{example:full_permutation_group}, we have
$$n < c(\sigma) \Rightarrow 2n - d < \mathrm{Fix}(\sigma)~,$$
thus $\zeta(\Phi_2,n) \leq \xi(\Phi_2, 2n-d)$. 
By picking $n = (1-\eta) d $ with $\eta < 1/4$, 
we have $2n - d = (1-2\eta)d > d/2$ and 
\begin{eqnarray}
\gamma_d(k) &\leq& \frac{\xi(\Phi_2,(1-2\eta)d)}{|\Phi_2|} + O \left(k^{-d + \max(n,d/2+7)}\right)~\\
&\leq& C (e (2\eta)^{-1})^{2\eta d} \tau^{(2\eta-1)d} + O \left(k^{-d + \max(n,d/2+7)}\right)~. \\
&=& C \left(\frac{e^{2\eta}}{(2\eta)^{2\eta}\tau^{1-2\eta}} \right)^d + O \left(k^{-\eta d}\right)~.
\end{eqnarray}
When $2\eta < 0.15$, we verify that 
$\frac{e^{2\eta}}{(2\eta)^{2\eta}\tau^{1-2\eta}} < 1$.

\end{proof}

\end{document}